\def\colorful{0}
\newif\ifhyper\IfFileExists{hyperref.sty}{\hypertrue}{\hyperfalse}
\ifhyper\usepackage{hyperref}\fi
\def\nnewcolor{0}
\newcommand{\nnew}[1]{{\color{red} #1}}
\newcommand{\nnew}[1]{#1}
\newcommand{\new}[1]{{\color{red} #1}}
\newcommand{\new}[1]{{#1}}
\newtheorem{theorem}{Theorem}[section]
\newtheorem{lemma}[theorem]{Lemma}
\newtheorem{informal theorem}[theorem]{Theorem (informal statement)}
\newtheorem{proposition}[theorem]{Proposition}
\newtheorem{corollary}[theorem]{Corollary}
\newtheorem{claim}[theorem]{Claim}
\newtheorem{fact}[theorem]{Fact}
\theoremstyle{definition}
\newtheorem{definition}[theorem]{Definition}
\newcommand{\eqdef}{\stackrel{{\mathrm {\footnotesize def}}}{=}}
\newcommand{\bx}{\mathbf{x}}
\newcommand{\by}{\mathbf{y}}
\newcommand{\bz}{\mathbf{z}}
\newcommand{\bv}{\mathbf{v}}
\newcommand{\bw}{\mathbf{w}}
\newcommand{\R}{\mathbb{R}}
\newcommand{\Z}{\mathbb{Z}}
\newcommand{\N}{\mathbb{N}}
\newcommand{\eps}{\epsilon}
\newcommand{\poly}{\mathrm{poly}}
\DeclareMathOperator*{\pr}{\mathbf{Pr}}
\DeclareMathOperator*{\E}{\mathbf{E}}
\renewcommand{\Pr}{\pr}
\newcommand{\sgn}{\mathrm{sign}}
\newcommand{\sign}{\mathrm{sign}}
\newcommand{\opt}{\mathrm{OPT}}
\newcommand{\D}{\mathcal{D}}
\newcommand{\relu}{\mathrm{ReLU}}
\newcommand{\be}{\mathbf{e}}
\newcommand{\dotp}[2]{\langle #1, #2 \rangle}
\newcommand{\x}{\mathbf{x}}
\renewcommand\vec[1]{\mathbf{#1}}
\def\d{\mathrm{d}}
\newcommand\snorm[2]{\left\| #2 \right\|_{#1}}
\newcommand{\normal}{\mathcal{N}}
\newcommand{\normald}[1]{\mathcal{N}_{#1}}
\newcommand{\normalone}{\mathcal{N}}
\newcommand{\gns}{\mathrm{GNS}}
\newcommand{\lspace}[1]{L^{#1}}
\newcommand{\matr}{\vec}
\renewcommand{\top}{\intercal}
\newcommand{\citet}{\cite}
\newcommand{\citep}{\cite}
\title{The Optimality of Polynomial Regression\\ 
for Agnostic Learning under Gaussian Marginals}
\author{
Ilias Diakonikolas\thanks{Supported by NSF Award CCF-1652862 (CAREER), a Sloan Research Fellowship, and a DARPA  Learning with Less Labels (LwLL) grant.}\\
University of Wisconsin Madison\\
{\tt ilias@cs.wisc.edu}\\
\and
Daniel M. Kane\thanks{Supported by NSF Award CCF-1553288 (CAREER) and a Sloan Research Fellowship.}\\
University of California, San Diego\\
{\tt dakane@cs.ucsd.edu}\\
\and
 Thanasis Pittas  \\
University of Wisconsin Madison\\
{\tt pittas@wisc.edu}\\
\and
 Nikos Zarifis\thanks{Supported in part by a DARPA  Learning with Less Labels (LwLL) grant.} \\ University of Wisconsin Madison\\
{\tt zarifis@wisc.edu}\\
}
\begin{document}

\maketitle

\begin{abstract}
We study the problem of agnostic learning under the Gaussian
distribution. We develop a method for finding hard families 
of examples for a wide class of problems by using LP duality. 
For Boolean-valued concept classes, we show that the 
$L^1$-regression algorithm is essentially best possible,
and therefore that the computational difficulty of agnostically learning a concept class
is closely related to the polynomial degree required to approximate any 
function from the class in $L^1$-norm. Using this characterization along with additional
analytic tools, we obtain optimal SQ lower bounds for agnostically learning linear
threshold functions and the first non-trivial SQ lower bounds for
polynomial threshold functions and intersections of halfspaces.
We also develop an analogous theory for agnostically learning
real-valued functions, and as an application prove near-optimal SQ lower
bounds for agnostically learning ReLUs and sigmoids.
\end{abstract}

\setcounter{page}{0}

\thispagestyle{empty}

\newpage

\section{Introduction} \label{sec:intro}

\subsection{Background and Motivation} \label{ssec:background}

In Valiant's Probably Approximately Correct (PAC) learning model~\cite{val84},
a learner is given access to random examples that are consistently labeled
according to an unknown function in the target concept class. Here we focus 
on the {\em agnostic framework}~\cite{Haussler:92, KSS:94}, 
which models learning in the presence of worst-case noise.
Roughly speaking, in the agnostic PAC model, 
we are given i.i.d.\ samples from a joint distribution $D$ on labeled examples $(\bx, y)$, 
where $\bx \in \R^n$ is the example and $y \in \R$ is the corresponding label, 
and the goal is to compute a hypothesis that is competitive with the ``best-fitting'' function
in the target class $\mathcal{C}$. The notion of agnostic learning 
is meaningful both for learning Boolean-valued functions (under the 0-1 loss) and 
for learning real-valued functions (typically, under the $L^2$-loss). For concreteness,
we restrict the proceeding discussion to the Boolean-valued setting.

In the distribution-independent setting, agnostic learning 
is known to be computationally hard, even for simple concept classes
and weak learning~\cite{GR:06, FGK+:06short, Daniely16}.
On the other hand, under distributional assumptions, 
efficient learning algorithms with worst-case noise are possible. 
A line of work~\cite{KKMS:08, KLS09, ABL17, Daniely15, DKS18-nasty, DKTZ20-agnostic} 
has given efficient learning algorithms in the agnostic model for natural concept classes 
and distributions with various time-accuracy tradeoffs. 
In this paper, we will focus on agnostic learning under the Gaussian distribution on examples.
For Boolean-valued concept classes, we have the following definition.

\begin{definition}[Agnostic Learning Boolean-valued Functions with Gaussian Marginals] \label{def:agn-bool}
Let $\mathcal{C}$ be a class of Boolean-valued concepts on $\R^n$.
Given i.i.d.\ samples $(\bx, y)$ from a distribution $D$ on $\R^n \times \{\pm 1\}$, where
the marginal $D_{\bx}$ on $\R^n$ is the standard Gaussian $\mathcal{N}_n$ 
and no assumptions are made on the labels $y$, the goal is to output a hypothesis 
$h: \R^n \to \{\pm 1\}$ such that with high probability we have 
$\pr_{(\bx, y) \sim D} [h(\bx) \neq y] \leq \opt+\eps$,
where $\opt  = \inf_{f \in \mathcal{C}} \pr_{(\bx, y) \sim D} [f(\bx) \neq y]$.
\end{definition}

The only known algorithmic technique for agnostic learning in the setting of Definition~\ref{def:agn-bool}
is the $L^1$-polynomial regression algorithm~\cite{KKMS:08}. This algorithm uses
linear programming to compute a low-degree polynomial that minimizes the $L^1$-distance 
to the target function. Its performance hinges on how well the underlying
concept class $\mathcal{C}$ can be approximated, in $L^1$-norm, by low-degree polynomials. 
In more detail,
if $d$ is the (minimum) degree such that any $f \in \mathcal{C}$ can be $\eps$-approximated
(in $L^1$-norm) by a degree-$d$ polynomial, the algorithm has sample complexity and running time 
$n^{O(d)}/\poly(\eps)$ and outputs a hypothesis with misclassification error $\opt+\eps$.

For several natural concept classes and distributions on examples, 
the aforementioned degree $d$ is independent of the dimension $n$, 
and only depends on the error $\eps$ (and potentially other size parameters).
For these settings, the $L^1$-regression algorithm can be viewed as a polynomial-time
approximation scheme (PTAS) for agnostic learning. Examples of such concept classes include
Linear Threshold Functions (LTFs)~\cite{KKMS:08, DGJ+:10, DKN10}, 
Bounded Degree Polynomial Threshold Functions (PTFs)~\cite{DHK+:10, Kane:10, DRST14, HKM14}, 
Intersections of Halfspaces~\cite{KKMS:08, KOS:08, Kane14}, and other geometric concepts~\cite{KOS:08}. 
Specifically, for the class of LTFs under the Gaussian distribution, the $L^1$-regression algorithm 
is known to have sample and computational complexity of $n^{O(1/\eps^2)}$.

For each of the above concept classes, $L^1$-polynomial regression is the fastest (and, essentially, the only) 
known agnostic learner. It is natural to ask whether there exists an agnostic learner with 
significantly improved sample/computational complexity.
\begin{center}
{\em Can we beat $L^1$-polynomial regression for agnostic learning under Gaussian marginals?}
\end{center}
As our first main contribution, we answer the above question in the negative for all concept classes
satisfying some mild properties (including all the geometric concept classes mentioned above).
Our lower bound applies for the class of Statistical Query (SQ) algorithms.
Statistical Query (SQ) algorithms are a class of algorithms that are allowed 
to query expectations of bounded functions of the underlying distribution
rather than directly access samples. 
Formally, an SQ algorithm has access to the following oracle.

\new{
\begin{definition}[\textsc{STAT} Oracle] \label{def:stat-oracle}
Let $D$ be a distribution on labeled examples supported on $X \times [-1, 1]$,
for some domain $X$.
A statistical query is a function $q: X \times [-1, 1] \to [-1, 1]$.  We define
\textsc{STAT}$(\tau)$ to be the oracle that given any such query $q(\cdot, \cdot)$
outputs a value $v$ such that
$|v - \E_{(\vec x, y) \sim D}\left[q(\vec x, y)\right]| \leq \tau$,
where $\tau>0$ is the tolerance parameter of the query.
\end{definition}
}

\noindent The SQ model was introduced by~\cite{Kearns:98} 
as a natural restriction of the PAC model~\cite{val84} 
and has been extensively studied in learning theory; 
see, e.g.,~\cite{FGR+13, FeldmanPV15, FeldmanGV17, Feldman17} for some recent references. 
The reader is referred to~\cite{Feldman16b} for a survey.
The class of SQ algorithms is fairly broad: a wide range of known algorithmic techniques 
in machine learning are known to be implementable using SQs
(see, e.g.,~\cite{Chu:2006, FGR+13, FeldmanGV17}).

Returning to our agnostic learning setting,  roughly speaking, 
{\em we show that a lower bound of $d$ on the degree of any $L^1$ approximating polynomial
can be translated to an SQ lower bound of $n^{\Omega(d)}$ for the agnostic learning problem.}
This lower bound is tight, since the $L^1$-regression algorithm 
can be implemented in the SQ model with complexity $n^{O(d)}$.

We note that a similar characterization had been previously shown, 
under somewhat different assumptions,
for agnostic learning under the uniform distribution on the hypercube~\cite{DFTWW15}. 
We explain the technical differences and similarities with our results in Section~\ref{ssec:related}. 
It is worth pointing out that learning under the Gaussian distribution is generally believed to be 
computationally easier than learning under the uniform distribution on the hypercube in a number of settings. 
For example, prior work~\cite{ABL17, DKS18-nasty, DKTZ20-agnostic} has given ``constant factor'' agnostic learners 
for LTFs on $\R^n$ under the Gaussian distribution --- i.e., algorithms with error $O(\opt)+\eps$ --- 
that run in $\poly(n/\eps)$ time. \new{No polynomial time algorithm with such an error guarantee 
is known for any discrete distribution. At a high-level, known algorithms 
for these problems make essential use of the {\em anti-concentration} of the Gaussian distribution, 
which fails in the discrete setting.}
Similar algorithmic gaps exist for robustly learning low-degree PTFs 
and intersections of halfspaces~\cite{DKS18-nasty}.

Our generic lower bound result for the Boolean case (Theorem~\ref{thm:main-boolean}) 
reduces the problem of proving explicit SQ lower bounds for agnostic learning 
to the structural question of proving lower bounds on the $L^1$ polynomial approximation degree 
(under the Gaussian measure). As our second contribution, 
we provide a toolkit to prove explicit degree lower bounds. 
As a corollary, we prove optimal or near-optimal SQ lower bounds for various natural classes,
including LTFs, PTFs, and intersections of halfspaces.

Moving away from the Boolean-valued setting, an interesting direction is to understand
the complexity of agnostic learning for real-valued function classes. In recent years, 
this broad question has been intensely investigated in learning theory, 
in part due to its connections to deep learning. 
Here we focus on agnostic learning under the $L^2$-loss.

\begin{definition}[Agnostic Learning Real-valued Functions with Gaussian Marginals] \label{def:agn-real}
Let $\mathcal{C}$ be a class of real-valued concepts on $\R^n$.
Given i.i.d.\ samples $(\bx, y)$ from a distribution $D$ on $\R^n \times \R$, where
the marginal $D_{\bx}$ on $\R^n$ is the standard Gaussian $\mathcal{N}_n$ 
and no assumptions are made on the labels $y$, the goal is to output a hypothesis 
$h: \R^n \to \R$ such that with high probability we have 
$\E_{(\bx, y) \sim D} [(h(\bx) - y)^2]^{1/2} \leq \opt+\eps$,
where $\opt  = \inf_{f \in \mathcal{C}} \E_{(\bx, y) \sim D}[(f(\bx) - y)^2]^{1/2}$.
\end{definition}

A prototypical concept class of significant recent interest are Rectified Linear Units (ReLUs). 
A ReLU is any real-valued function $f: \R^n \to \R_+$ of the form 
$f(\bx) = \relu \left(\langle \bw, \bx \rangle + \theta \right)$, $\bw \in \R^n$ and $\theta \in \R$,
where $\relu: \R \to \R_+$ is defined as $\relu(u)= \max \{ 0, u\}$.
ReLUs are the most commonly used activation functions in modern deep neural networks.
The corresponding agnostic learning problem 
is a fundamental primitive in the theory of neural networks that has been extensively studied
in recent years~\cite{GoelKKT17, MR18, GoelKK19, DGKKS20, GGK20, DKZ20}.

Our techniques extend to real-valued concepts leading to improved and nearly tight SQ lower
bounds for natural concept classes. We describe our contributions in the following subsection.

\subsection{Our Contributions} \label{ssec:results}

\paragraph{Contributions for Boolean-valued Concepts}
Our main general result for Boolean-valued concepts is the following:

\begin{theorem}[Generic SQ Lower Bound, Boolean Case] \label{thm:main-boolean}
Let $n,m \in \Z_+$  with $m\leq n^a$ for any constant $0<a<1/2$ and $\eps \geq n^{-c}$ 
for some sufficiently small constant $c>0$.
Fix a function $f:\R^m\to \{\pm 1\}$.
Let $d$ be the smallest integer such that there exists a degree at most $d$ polynomial
$p : \R^m \to \R$ satisfying $ \E_{\x\sim \normald{m}}[|p(\x)-f(\x)|] < 2\eps$.
Let $\mathcal{C}$ be a class of Boolean-valued functions on $\R^n$
which includes all functions of the form ${F}(\x) = f(\vec P\x)$,
for any $\vec P \in \R^{m \times n}$ such that $\vec{P}\vec{P}^\top=\vec{I}_m$.
Any SQ algorithm that agnostically learns $\mathcal{C}$ under $\normald{n}$
to error $\opt+\eps$ either requires queries with tolerance at most $n^{-\Omega(d)}$
or makes at least $2^{n^{\Omega(1)}}$ queries.
\end{theorem}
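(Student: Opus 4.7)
I would follow the standard ``correlational'' template for SQ lower bounds under Gaussian marginals: extract a hard witness via LP duality, build an exponentially large family of distributions indexed by orthonormal matrices, bound their pairwise $\chi^2$-correlation by Hermite/Mehler analysis, and invoke the generic SQ-dimension machinery.

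\emph{Step 1 (LP duality and a witness).} The $L^1$-approximation program $\min_{\deg p \leq d-1}\E_{\x\sim\normald{m}}[|f(\x)-p(\x)|]$ has value at least $2\eps$ by the choice of $d$. Standard LP duality converts this into a function $g:\R^m\to[-1,1]$ that is orthogonal (under $\normald{m}$) to every polynomial of degree $\leq d-1$ and satisfies $\E[g\cdot f]\geq 2\eps$; equivalently, the Hermite expansion of $g$ is supported on modes of total degree $\geq d$.

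\emph{Step 2 (Hard family).} Let $D_0$ have $\x\sim\normald{n}$ with $y$ uniform on $\{\pm 1\}$ independent of $\x$. For each orthonormal $\vec P\in\R^{m\times n}$, let $D_{\vec P}$ have the same $\x$-marginal and $\E_{D_{\vec P}}[y\mid\x]=g(\vec P\x)$; since $|g|\leq 1$ this is a valid labeling distribution. The concept $F(\x)=f(\vec P\x)\in\mathcal C$ has error $(1-\E[fg])/2 \leq 1/2-\eps$ under $D_{\vec P}$, so $\opt\leq 1/2-\eps$, and any agnostic learner meeting error $\opt+\Theta(\eps)$ must return $h$ with $\E[h(\x)\,g(\vec P\x)] = \E_{D_{\vec P}}[h\cdot y] \geq \Omega(\eps)$.

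\emph{Step 3 (Pairwise correlation + packing).} The $\chi^2$-inner product of $D_{\vec P},D_{\vec P'}$ relative to $D_0$ is exactly $\E_{\x\sim\normald{n}}[g(\vec P\x)\,g(\vec P'\x)]$. Writing $\vec P(\vec P')^\top = U\Sigma V^\top$ and changing variables $\vec u'=U^\top\vec P\x$, $\vec v'=V^\top\vec P'\x$ produces jointly Gaussian $m$-vectors with $\operatorname{Cov}(u'_i,v'_i)=\sigma_i$ and independent coordinates across $i$. The multivariate Mehler identity, together with the fact that orthogonal changes of basis preserve the ``Hermite degree $\geq d$'' support of $g$, then yields
\[
 \bigl|\E_{\x\sim\normald{n}}[g(\vec P\x)\,g(\vec P'\x)]\bigr|\;\leq\;\|g\|_2^2\cdot\sigma_{\max}\bigl(\vec P(\vec P')^\top\bigr)^{d}.
\]
A Gilbert--Varshamov-type packing on the Stiefel manifold---draw $N=2^{\Omega(m)}=2^{n^{\Omega(1)}}$ random orthonormal matrices, use concentration of $\sigma_{\max}$ of the $m\times m$ overlap block around $O(\sqrt{m/n})$, and union-bound over the $\binom{N}{2}$ pairs---produces a family with pairwise $\sigma_{\max}\leq O(\sqrt{m/n})\leq n^{-(1-a)/2}$, hence pairwise correlations at most $n^{-\Omega(d)}$.

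\emph{Step 4 (SQ conclusion and main obstacle).} Plugging into the standard pairwise-correlation SQ-dimension lemma: with $N=2^{n^{\Omega(1)}}$ alternatives and pairwise correlations $\beta=n^{-\Omega(d)}$, any SQ algorithm distinguishing the true index requires either $\Omega(N)$ queries or a query of tolerance $O(\sqrt\beta)=n^{-\Omega(d)}$. By Step~2, an agnostic learner's hypothesis has $\Omega(\eps)$-correlation with $g_{\vec P}$ on the true instance and only $n^{-\Omega(d)}$-correlation with each $g_{\vec P'}$, so it identifies the index and inherits the bound. I expect the main technical work to lie in Step~3: deriving the sharp $\sigma_{\max}^d$ bound through the multivariate Mehler analysis (carefully checking that the minimum-Hermite-degree property of $g$ survives the SVD change of basis), and tightening the Stiefel packing enough to achieve $N=2^{n^{\Omega(1)}}$ and pairwise overlap $O(\sqrt{m/n})$ simultaneously. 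Steps~1, 2, and 4 are comparatively routine reductions and invocations.
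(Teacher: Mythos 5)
Your overall architecture is the paper's: an LP-duality witness $g:\R^m\to[-1,1]$ orthogonal to degree-$(d-1)$ polynomials with $\E[fg]\ge 2\eps$ (the paper's Proposition~\ref{prop:duality1}), a planted family with $\E[y\mid \x]=g(\vec P\x)$ (Definition~\ref{def:hard-family}), a pairwise-correlation bound decaying like $\|\vec P\vec P'^\top\|_2^d$, and the pairwise-correlation SQ machinery plus one final query estimating $\E[h(\x)y]$ to tolerance $\Theta(\eps)$. Two lemma-level substitutions are fine and arguably clean: your direct computation that $\chi_{D_0}(D_{\vec P},D_{\vec P'})=\E_{\x}[g(\vec P\x)g(\vec P'\x)]$ is exact (the paper instead conditions on the label and bounds $\chi^2$ of the conditionals), and your SVD-plus-Mehler derivation of $|\E[g(\vec P\x)g(\vec P'\x)]|\le \sigma_{\max}^d\|g\|_2^2$ is a valid alternative to the paper's tensor-derivative argument (Lemma~\ref{lem:cor} via Fact~\ref{fct:harmonic_nabla_dot}); the rotation-invariance of Hermite degree that you flag is indeed the only thing to check. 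One phrasing caveat in Step~4: the reduction is to the decision problem against the null (uniform-label) distribution --- the learner's hypothesis plus one correlation query distinguishes planted from null --- not to identifying the planted index; no identification claim is needed or established.

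The one genuine flaw is quantitative, in your Step~3 packing. You assert $N=2^{\Omega(m)}=2^{n^{\Omega(1)}}$, but $m$ has no lower bound in the theorem (and the flagship application, LTFs, uses $m=1$), so $2^{\Omega(m)}$ can be $O(1)$ and your family is then far too small to give an SQ-dimension bound. The issue is that you insist on the optimal overlap $\sigma_{\max}=O(\sqrt{m/n})$, under which a union bound over random Stiefel points only supports exponentially-in-$m$ many matrices. The repair is to relax the overlap threshold: the argument only needs $\sigma_{\max}\le n^{-\Omega(1)}$ (so that $\sigma_{\max}^d\le n^{-\Omega(d)}$), and with that threshold one gets $2^{n^{\Omega(1)}}$ matrices for every $m\le n^a$ --- this is exactly what the paper's Lemma~\ref{lem:near-orth-mat} provides, by taking $2^{\Omega(n^c)}$ nearly orthogonal unit vectors in $\R^n$, grouping them into $m$-row matrices, and correcting by Gram--Schmidt to get $\|\vec U\vec V^\top\|_F\le O(n^{2c-1+2a})$. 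With that substitution (or a random construction with the relaxed threshold and correspondingly stronger tails), your proof goes through and matches the paper's.
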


\noindent The $L^1$-polynomial regression algorithm and
Theorem~\ref{thm:main-boolean} characterize the complexity of agnostic
learning under the Gaussian distribution -- within the class of SQ algorithms --
for a range of concept classes. If $d$ is the (minimum) degree for which 
any function in $\mathcal{C}$ can be $\eps$-approximated by a degree-$d$ 
polynomial in $L^1$-norm, the complexity of agnostically learning $\mathcal{C}$ 
is, roughly, $n^{\Theta(d)}$. 

\medskip 
\noindent {\bf \em Applications of Theorem~\ref{thm:main-boolean}.}
Note that the above result does not tell us what  the optimal degree $d$ is 
for any given concept class $\mathcal{C}$.
Using analytic techniques, we establish explicit lower bounds on the $L^1$ polynomial
approximation degree for three fundamental concept classes: Linear Threshold
Functions (LTFs), Polynomial Threshold Functions, and Intersections of Halfspaces.
As a corollary, we obtain explicit SQ lower bounds for these classes.
Our applications are summarized in Table~\ref{tab:intro_distribution_classes}.

\begin{table}[htpb]
\def\arraystretch{1.2}
\centering

\begin{tabular}{l c c}
\hline \hline
\textbf{Concept Class}         & Lower Bound                                                                           & Upper Bound                                                             \\
\hline \hline
LTFs                        &$ \Omega \left(1/\eps^2 \right)$~\cite{ganzburg2002limit}           & $O \left( 1/\eps^2 \right)$~\cite{ganzburg2002limit, DKN10}  \\
Degree-$k$ PTFs                & $\Omega \left( k^2/\eps^2 \right)$ (Thm~\ref{thm:ptfs-degree})                    & $O \left(k^2/{\eps^4} \right)$~\cite{Kane:10}                           \\
Intersections of $k$ Halfspaces & $\tilde{\Omega} \left(\sqrt{\log k}/{\eps} \right)$ (Thm~\ref{thm:inters-degree})
                               & $O \left(\log k/{\eps^4} \right)$~\cite{KOS:08}                                                                                                                \\
\hline
\end{tabular}

\caption{Bounds on the degree $d$ of $\eps$-approximating polynomials in $\lspace{1}$-error
    under the Gaussian measure. For each concept class, we obtain an SQ lower bound of $n^{\Omega(d)}$.}
\label{tab:intro_distribution_classes}
\end{table}

For the class of LTFs, using a known degree 
lower bound for the sign function~\cite{ganzburg2002limit}, we immediately
obtain an SQ lower bound of $n^{\Omega(1/\eps^2)}$. This bound is optimal (within polynomial factors),
improving on the previous SQ lower bound of $n^{\Omega(1/\eps)}$~\cite{GGK20, DKZ20}.
Our approach is simpler and more general compared to these prior works, 
immediately extending to other families.
For the broader class of degree-$k$ PTFs, we establish a degree lower bound of 
$\Omega(k^2/\eps^2)$  (Proposition~\ref{prop:ptfs-degree}), which yields 
an SQ lower bound of $n^{\Omega(k^2/\eps^2)}$ for the agnostic learning problem.

Our third explicit degree lower bound is for intersections of $k$ halfspaces.
For this concept class, we prove a degree lower bound of $d = \tilde{\Omega}(\sqrt{\log k}/{\eps})$ 
which implies a corresponding SQ lower bound of $n^{\tilde{\Omega}(\sqrt{\log k}/{\eps})}$.
In the process, we establish a new structural result translating lower bounds 
on the Gaussian Noise Sensitivity (GNS) of {\em any} Boolean function
to the $L^1$-polynomial approximation degree of the same function. 

Recall that the Gaussian Noise Sensitivity (GNS) of a function $f: \R^n \to \{\pm 1\}$ is defined
as $\gns_\rho(f) \eqdef \pr_{(\bx,\by)\sim \normald{n}^\rho}[f(\bx) \neq f(\by)]$, 
where $\normald{n}^\rho$ is the distribution of a $(1-\rho)$-correlated 
Gaussian pair (i.e., $\bx$ and $\vec y$ are standard Gaussians with correlation $(1-\rho)$).
We show the following:

\begin{theorem}[Structural Result] \label{thm:bound-l1-by-gns}
Let $f: \R^n \to \{\pm 1\}$ and $p:\R^n \to \R$ be a degree at most $d$ polynomial. Then,
we have that $\E_{\bx \sim \normald{n}}[|f(\bx)-p(\bx)|] \geq \Omega(1/\log(d))\gns_{(\log(d)/d)^2}(f)$.
Furthermore, for any $\eps>0$, we have that
$\E_{\bx \sim \normald{n}}[|f(\bx)-p(\bx)|] \geq \gns_\eps(f)/4 - O(d\sqrt{\eps})$.
\end{theorem}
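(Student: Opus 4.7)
The plan is to apply a triangle inequality to a $(1-\eps)$-correlated Gaussian pair $(\bx,\by)$, translating a lower bound on $\E|f-p|$ into an upper bound on the noise stability of the polynomial $p$. Since $\E[|f(\bx)-f(\by)|] = 2\gns_\eps(f)$, the pointwise estimate $|f(\bx)-f(\by)| \leq |f(\bx)-p(\bx)| + |p(\bx)-p(\by)| + |p(\by)-f(\by)|$ yields
\[
\gns_\eps(f) \;\leq\; \E|f - p| \;+\; \tfrac12\,\E|p(\bx) - p(\by)|,
\]
so the theorem reduces to a sufficiently sharp upper bound on the polynomial term $\E|p(\bx)-p(\by)|$.

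The next step is a Hermite computation. Expanding $p = \sum_{|\alpha|\leq d} c_\alpha H_\alpha$ and using that the Ornstein--Uhlenbeck semigroup acts as $H_\alpha \mapsto (1-\eps)^{|\alpha|} H_\alpha$, together with the elementary inequality $1-(1-\eps)^k \leq k\eps$, one obtains $\E[(p(\bx)-p(\by))^2] \leq 2d\eps\,\|p\|_2^2$, and hence $\E|p(\bx)-p(\by)| \leq \sqrt{2d\eps}\,\|p\|_2$ by Cauchy--Schwarz. If $\|p\|_2$ were bounded by an absolute constant, plugging this into the display above would already yield the second (``furthermore'') inequality after rearrangement. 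The principal obstacle is that $\|p\|_2$ can a priori be huge, and controlling it via Gaussian hypercontractivity gives only $\|p\|_2 \leq C^d \|p\|_1$, an exponential-in-$d$ blowup that would wreck the $O(d\sqrt\eps)$ correction term.

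To bypass this hypercontractivity loss, I would use a truncation argument exploiting that $f\in\{\pm1\}$: set $p_b = \mathrm{clip}(p,[-2,2])$ and $p_t = p-p_b$. Wherever $p_t\neq 0$ one has $|p|>2$, which forces $|f-p|\geq|p|-1>|p|/2$; therefore $\Pr[|p|>2]\leq\E|f-p|$ and $\E|p_t|\leq 2\,\E|f-p|$. Decomposing $\E|p(\bx)-p(\by)| \leq \E|p_b(\bx)-p_b(\by)| + 2\E|p_t|$ and combining the Hermite $L^2$ estimate above, applied to the bounded ``bulk'' $p_b$, with the Carbery--Wright anti-concentration inequality for degree-$d$ polynomials to handle the polynomial tail, yields an effective bound $\E|p(\bx)-p(\by)| \leq O(d\sqrt{\eps}) + O(\E|f-p|)$ that is free of exponential-in-$d$ factors. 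Substituting into the triangle inequality gives the ``furthermore'' statement $\E|f-p|\geq\gns_\eps(f)/4-O(d\sqrt\eps)$. The first inequality arises from the same framework specialized to the small noise rate $\eps = (\log d/d)^2$: there the polynomial term becomes $\Theta(\log d/\sqrt d)\,\|p\|_2$, and a careful balance between this quantity and the $\gns_\eps(f)$ lower bound on the left-hand side of the triangle inequality produces the desired $\Omega(\gns_\eps(f)/\log d)$ lower bound, with the $\log d$ factor emerging from this trade-off. The crux throughout is circumventing the hypercontractivity-induced exponential loss on $\|p\|_2$, which is exactly what the truncation-plus-anti-concentration argument accomplishes.
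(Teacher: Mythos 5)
Your reduction via the triangle inequality for the ``furthermore'' part is sound as far as it goes, but the step that carries all the weight --- the claimed bound $\E|p(\bx)-p(\by)| \leq O(d\sqrt{\eps}) + O(\E|f-p|)$ --- is asserted rather than proved, and the tools you name do not deliver it. The Ornstein--Uhlenbeck/Hermite contraction estimate $\E[(q(\bx)-q(\by))^2] \leq 2d\eps\snorm{2}{q}^2$ is valid only for degree-$d$ polynomials; the clipped function $p_b=\mathrm{clip}(p,[-2,2])$ is not a polynomial, so ``applying the Hermite $L^2$ estimate to the bounded bulk $p_b$'' is not legitimate. Applying it to $p$ itself requires control of $\snorm{2}{p}$, which your truncation does not give: it controls only $\Pr[|p|>2]$ and the $L^1$ mass $\E|p_t|$, not the $L^2$ norm of either piece. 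Running Carbery--Wright in the direction you indicate (if $\E|f-p|$ is small then $\Pr[|p|\leq 2]\geq 3/4$, hence $\snorm{2}{p}$ is bounded) yields only $\snorm{2}{p}\leq d^{O(d)}$, which reinstates exactly the exponential-in-$d$ loss you set out to avoid. What you actually need --- that the noise sensitivity of a clipped or signed degree-$d$ polynomial is $O(d\sqrt{\eps})$ --- is of the same strength as Kane's theorem on the Gaussian noise sensitivity of degree-$d$ PTFs~\cite{Kane:10}, a substantial result that the paper invokes as a black box: its proof of the second inequality simply compares $f$ with $\sgn(p)$ via $\E|f-p|\geq \tfrac12 \E|f-\sgn(p)|$ and $\gns_\eps(f)-\gns_\eps(\sgn(p))\leq 2\E|f-\sgn(p)|$, then cites $\gns_\eps(\sgn(p))=O(d\sqrt{\eps})$. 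Your framework would close if you likewise invoked that theorem (e.g.\ writing the clip as an integral of shifted sign functions), but as written the crucial bound has no proof.

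For the first inequality the gap is structural rather than technical. Any triangle-inequality scheme of the kind you describe produces an \emph{additive} error term (of order $\sqrt{d\eps}\,\snorm{2}{p}$, i.e.\ $\Theta(\log d/\sqrt{d})$ at $\eps=(\log d/d)^2$ even in the best case $\snorm{2}{p}=O(1)$), whereas the statement to be proved is purely multiplicative: $\E|f-p|\geq \Omega(1/\log d)\,\gns_{(\log d/d)^2}(f)$ with no additive slack. For a single halfspace one has $\gns_{(\log d/d)^2}(f)=\Theta(\log d/d)$, which is swamped by any such additive term, so no ``balancing'' of the two sides can recover the claim. The paper's argument is genuinely different: it symmetrizes over a random two-dimensional rotation $\bx(\phi)=\cos\phi\,\by+\sin\phi\,\bz$ and reduces to a one-dimensional statement (Proposition~\ref{prop:bound-l1}) that a Boolean function on the circle which flips between $\phi-t$ and $\phi+t$, for $t$ a small multiple of $\log d/d$, forces $L^1$ error $\tilde{\Omega}(1/\log d)$ against any degree-$d$ trigonometric polynomial; that proposition is proved by bounding the $k$-th derivative for $k\sim \log d$, interpolating at Chebyshev-type nodes, applying Rolle's theorem, and filtering with roots of unity. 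Nothing in your sketch plays the role of this mechanism, so the first inequality is not established.
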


\paragraph{Contributions for Real-valued Concepts}
For agnostically learning real-valued concepts,
we provide two generic lower bound results, analogous to Theorem~\ref{thm:main-boolean},
for Correlational SQ (CSQ) algorithms and general SQ algorithms respectively. 
A conceptual message of our results is that $L^2$ regression is essentially optimal against
CSQ algorithms, but not necessarily optimal against general SQ algorithms.

\medskip

\new{Recall that Correlational SQ (CSQ) algorithms are a subclass of SQ algorithms, 
where the algorithm is allowed to choose any bounded query function on the examples 
and obtain estimates of its correlation with the labels. (See Appendix~\ref{ssec:app-CSQ} for a detailed description.)
This class of algorithms is fairly broad, capturing many learning algorithms used in practice (including gradient-descent).}
For CSQ algorithms, we prove.

\begin{theorem}[Generic CSQ Lower Bound, Real-valued Case]\label{them:real-val-lb-csq}
Let $n,m \in \Z_+$  with $m\leq n^a$ for any constant $0<a<1/2$ and $\eps \geq n^{-c}$ 
for some sufficiently small constant $c>0$. Let $f :\R^m\to\R$ with $\E_{\bx \sim \normald{m}}[f^2(\bx)]=1$ 
and $d$ be the smallest integer such that there exists a degree at most $d$ polynomial 
$p: \R^m \to \R$ satisfying $\snorm{2}{f-p} < \eps$. Let $\cal C$ be a class of real-valued functions 
on $\R^n$ which includes all functions of the form $F(\bx) = f(\vec P \vec x)$, for any  matrix 
$\vec P \in \R^{m\times n}$ satisfying $\vec{P}\vec{P}^\top = \vec{I}_m$.  Then, any CSQ algorithm
that agnostically learns $\mathcal{C}$ over $\normald{n}$ to $L^2$-error $\opt+\eps$ either 
requires queries with tolerance at most $n^{-\Omega(d)}$ or makes at least $2^{n^{\Omega(1)}}$ queries.
\end{theorem}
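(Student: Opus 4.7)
The plan is to instantiate the by-now-standard pairwise-correlation CSQ lower-bound framework with a hard family obtained by taking orthonormal rotations of the Hermite high-degree part of $f$. I will control pairwise correlations through a Hermite/tensor calculation on random orthonormal projections, exhibit a large packing of such projections, and then invoke the generic CSQ dimension bound.

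I expand $f$ in the orthonormal Hermite basis of $\normald{m}$ and split $f=f^{<d}+f^{\ge d}$. Since the optimal degree-$(d-1)$ $L^{2}$-approximator is the truncation $f^{<d}$, the hypothesis gives $\ltwo{f^{\ge d}}\ge \eps$; WLOG $\hat f(\vec 0)=0$ (subtracting the mean preserves both the hypothesis and $\ltwo{f}=1$ up to constants). For every orthonormal $\vec P\in\R^{m\times n}$, set $F_{\vec P}(\bx)=f(\vec P\bx)$ and $\widetilde F_{\vec P}(\bx)=f^{\ge d}(\vec P\bx)$; because $\vec P\vec P^\top=\vec I_m$, $\widetilde F_{\vec P}$ lives in the Hermite degree-$\ge d$ subspace of $L^{2}(\normald{n})$. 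For two orthonormal $\vec P,\vec P'$, write $\vec M=\vec P\vec P'^{\top}=\vec U\vec\Sigma\vec V^{\top}$; a change of variables diagonalises the joint covariance of $(\vec P\bx,\vec P'\bx)$ to $\vec\Sigma$, and the univariate identity $\E[h_{k}(X)h_{\ell}(Y)]=\delta_{k\ell}\rho^{k}$ tensorises to show that the bilinear form $(g,h)\mapsto\E[g(\vec P\bx)h(\vec P'\bx)]$ restricted to the Hermite degree-$k$ subspace has operator norm $\snorm{\mathrm{op}}{\vec M}^{k}$. Summing over $k\ge d$,
\[
\big|\langle\widetilde F_{\vec P},\widetilde F_{\vec P'}\rangle\big|\le\sum_{k\ge d}\snorm{\mathrm{op}}{\vec M}^{k}\,\ltwo{f^{=k}}^{\,2}\le\snorm{\mathrm{op}}{\vec M}^{d}\,\ltwo{f^{\ge d}}^{\,2}\le\snorm{\mathrm{op}}{\vec M}^{d}.
\]

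A standard probabilistic argument on the Stiefel manifold of orthonormal $m$-frames (crucially using $m\le n^{a}$ with $a<1/2$) yields a packing $\mathcal{S}$ of size $|\mathcal{S}|=2^{n^{\Omega(1)}}$ with $\snorm{\mathrm{op}}{\vec P\vec P'^{\top}}\le n^{-1/2+o(1)}$ for every distinct pair. Combined with the previous step, pairwise correlations of the $\widetilde F_{\vec P}$'s are bounded by $\gamma\le n^{-d/2+o(d)}$, and the pairwise $L^{2}$ distance $\ltwo{F_{\vec P}-F_{\vec P'}}=\sqrt{2-2\langle F_{\vec P},F_{\vec P'}\rangle}$ is $\Omega(1)\gg 2\eps$ once one uses $\hat f(\vec 0)=0$ and $\eps\ge n^{-c}$ with $c$ small. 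Hence on the realizable distribution $D_{\vec P}$ (with $\opt=0$) any CSQ learner achieving $L^{2}$-error $\eps$ must identify $\vec P$ within $\mathcal{S}$. Writing an arbitrary CSQ query $q$ as $q=q^{<d}+q^{\ge d}$ in the Hermite decomposition of $L^{2}(\normald{n})$, the oracle response splits as
\[
\E[q(\bx)F_{\vec P}(\bx)]=\langle q^{<d},f^{<d}(\vec P\cdot)\rangle+\langle q^{\ge d},\widetilde F_{\vec P}\rangle,
\]
because $\widetilde F_{\vec P}$ is orthogonal to degrees $<d$ and $f^{<d}(\vec P\cdot)$ to degrees $\ge d$. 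The first summand can be simulated by $n^{O(d)}$ explicit monomial queries that span the degree-$(<d)$ polynomial space in $\bx$, so all remaining distinguishing information is the high-degree signal $\langle q^{\ge d},\widetilde F_{\vec P}\rangle$; the generic CSQ dimension bound applied to the $\gamma$-almost-orthogonal family $\{\widetilde F_{\vec P}\}_{\vec P\in\mathcal{S}}$ (each of $L^{2}$-norm at least $\eps$) then forces either tolerance $\tau\le\sqrt{\gamma}=n^{-\Omega(d)}$ or more than $|\mathcal{S}|=2^{n^{\Omega(1)}}$ queries.

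The main obstacle I expect is the low/high-degree decoupling in the last step: CSQ queries return a mixture of the easy low-degree signal (of size up to $\Theta(1)$) and the hard high-degree signal (which is $\gamma$-orthogonal across $\mathcal{S}$). The correlation bound itself is clean once the SVD of $\vec M$ is in hand, because the univariate Hermite identity tensorises without spurious $m^{\Theta(d)}$ overhead; the subtle part is arguing that peeling off the low-degree part really costs only the $n^{O(d)}$ monomial queries---which are absorbed into the final $2^{n^{\Omega(1)}}$ bound---so that the genuine CSQ dimension lower bound for the high-degree family is what governs the tolerance threshold.
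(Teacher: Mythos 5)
Your correlation bound for the high-degree family (via the SVD of $\vec M=\vec P\vec P'^{\top}$ and Lemma~\ref{lem:cor}-type tensorization) and your packing of near-orthogonal frames (Lemma~\ref{lem:near-orth-mat}) match the paper, but the reduction from learning to the CSQ dimension bound has a genuine gap: you take as hard instances the \emph{realizable} distributions with labels $y=f(\vec P\bx)$ and $\opt=0$, and then try to argue that the low-degree part of any query response can be ``simulated'' so that only the nearly orthogonal family $\{\widetilde F_{\vec P}\}$ matters. This decoupling is unsound, because the low-degree component $f^{<d}(\vec P\cdot)$ of the \emph{label function itself} depends on $\vec P$ and already identifies it. Concretely, take $m=1$ and $f(t)=\sqrt{1-4\eps^2}\,t+2\eps H_d(t)$ (normalized Hermite), for which the theorem's $d$ is exactly $d$. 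In your realizable instance, the $n$ correlational queries $q_i(\bx)=x_i$ with tolerance $\poly(\eps/n)$ recover the hidden direction $\vec v$ to accuracy $O(\eps^2/d)$ in inner product, after which the learner outputs $f(\langle \hat{\vec v},\cdot\rangle)$ and achieves $L^2$-error at most $\eps=\opt+\eps$. So the problem you reduce to is solvable with polynomially many queries of only polynomially small tolerance, and no $n^{-\Omega(d)}$ tolerance bound can come out of it; equivalently, the full functions $F_{\vec P}$ have pairwise correlations of order $\snorm{2}{\vec P\vec P'^\top}$, i.e.\ only $n^{-\Omega(1)}$, and a CSQ-dimension bound for $\{\widetilde F_{\vec P}\}$ does not lower bound the complexity of learning $\{F_{\vec P}\}$.

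The paper avoids this by changing the label distribution rather than the query analysis: the labels are $G_{\vec V}(\bx)=C\,g(\vec V\bx)$ where $g=f-\sum_{i<d}f^{[i]}$ and $C=2/(\eps\snorm{2}{g})$, so the labels carry \emph{no} low-degree information at all, and the problem is genuinely agnostic with large $\opt$. The scaling is what makes the margin argument work: since $f(\vec V\cdot)\in\mathcal{C}$ correlates with $G_{\vec V}$, one gets $\opt\leq 2/\eps-\eps/4$ while $\snorm{2}{G_{\vec V}}=2/\eps$, so an agnostic learner run with accuracy $\eps/8$ must output $h$ with $\snorm{2}{h-G_{\vec V}}<\snorm{2}{G_{\vec V}}$; then Lemma~\ref{lem:convert-SD-to-SDA} and Lemma~\ref{lem:gen-csq-lb} applied to the family $\{G_{\vec V}\}$ (pairwise correlations $n^{-\Omega(d)}$ by Lemma~\ref{lem:cor}) give the stated tolerance/query trade-off. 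In short, the exploitation of the agnostic guarantee with $\opt$ close to, but strictly below, the norm of the label function is the essential idea your proposal is missing, and the realizable route you chose cannot be repaired by simulating low-degree queries.
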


Our lower bound for the general SQ model is presented below. 
The difference between the two is that the latter uses the $L^1$-norm to measure 
the approximation of $f$ by polynomials.

\begin{theorem}[Generic SQ Lower Bound, Real-valued Case]\label{them:real-val-lb-sq}
Let $n,m \in \Z_+$  with $m\leq n^a$ for any constant $0<a<1/2$ and $\eps \geq n^{-c}$ 
for some sufficiently small constant $c>0$. Let $f :\R^m\to\R$ with $\E_{\bx \sim \normald{m}}[f^2(\bx)]=1$ 
and $d$ be the smallest integer such that there exists a degree at most $d$ polynomial 
$p: \R^m \to \R$ satisfying $\snorm{1}{f-p} <  \eps$. Let $\cal C$ be a class of real-valued functions 
on $\R^n$ which includes all functions of the form $F(\bx) = f(\vec P \vec x)$, 
for any  matrix $\vec P \in \R^{m\times n}$ satisfying $\vec{P}\vec{P}^\top = \vec{I}_m$.  
Then, any SQ algorithm  that agnostically learns $\mathcal{C}$ over $\normald{n}$ 
to $L^2$-error $\opt+\eps$ either requires queries with tolerance 
at most $n^{-\Omega(d)}$ or makes at least $2^{n^{\Omega(1)}}$ queries.
\end{theorem}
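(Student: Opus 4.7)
The plan is to adapt the proof of Theorem~\ref{them:real-val-lb-csq} (the CSQ version) by replacing the $L^2$-dual witness with an $L^1$-dual witness, which is bounded pointwise and therefore enables us to control arbitrary SQ queries $q(\bx,y)$ rather than only correlational queries of the form $q(\bx)\cdot y$.

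The first step is an LP duality argument. The hypothesis that no polynomial of degree less than $d$ approximates $f$ to within $L^1$-distance $\eps$ yields a function $g\colon\R^m\to[-1,1]$ with $\E_{\bx\sim\normald{m}}[g(\bx)\,q(\bx)]=0$ for every polynomial $q$ of degree at most $d-1$, and $\E_{\bx\sim\normald{m}}[g(\bx)\,f(\bx)]\geq\eps$. For each projection matrix $\vec P\in\R^{m\times n}$ with $\vec P\vec P^\top=\vec I_m$, I then define the joint distribution $D_{\vec P}$ on $\R^n\times\R$ by $\bx\sim\normald{n}$ and, conditionally on $\bx$, $y\in\{-1,+1\}$ with $\Pr[y=1\mid\bx]=(1+g(\vec P\bx))/2$. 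The pointwise bound $|g|\leq 1$ makes this a valid probability measure, and $\E[g]=0$ keeps the marginal of $y$ independent of $\vec P$. Under $D_{\vec P}$, the function $F_{\vec P}(\bx)=f(\vec P\bx)\in\mathcal C$ attains $L^2$-error $\sqrt{2-2\E[fg]}\leq\sqrt{2-2\eps}$, while for any $\vec P'$ with $\vec P\vec P'^\top$ of small operator norm the function $F_{\vec P'}$ attains $L^2$-error close to $\sqrt{2}$. This yields an $\Omega(\eps)$ gap in $L^2$ between the ``correct'' and ``incorrect'' hypotheses on $D_{\vec P}$.

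The core calculation is the pairwise $\chi^2$-bound. Taking the reference $D_0=\normald{n}\times\mathrm{Unif}\{\pm1\}$, a direct computation shows $\chi^2_{D_0}(D_{\vec P},D_{\vec P'})=\E_{\bx\sim\normald{n}}[g(\vec P\bx)\,g(\vec P'\bx)]$. Because $g$ has vanishing Hermite coefficients in degrees $\leq d-1$, expanding the product in the $n$-dimensional Hermite basis and using the identity that the degree-$k$ Hermite coupling of $\vec P\bx$ and $\vec P'\bx$ contracts as $\|\vec P\vec P'^\top\|_{\mathrm{op}}^k$, we obtain $\chi^2_{D_0}(D_{\vec P},D_{\vec P'})\leq\|\vec P\vec P'^\top\|_{\mathrm{op}}^{d}\cdot\E[g^2]$. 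A standard probabilistic packing produces $N=2^{n^{\Omega(1)}}$ matrices $\vec P_i$ with pairwise $\|\vec P_i\vec P_j^\top\|_{\mathrm{op}}\leq n^{-\Omega(1)}$, hence pairwise $\chi^2$ at most $n^{-\Omega(d)}$. Invoking the standard SQ statistical-dimension framework (which converts pairwise $\chi^2$-bounds and an $\Omega(\eps)$-hypothesis-gap into query-complexity/tolerance lower bounds) then gives the stated conclusion.

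The main obstacle I anticipate is ensuring that the $\opt+\eps$ guarantee genuinely forces the learner to distinguish among the $\vec P_i$'s, even when $\mathcal C$ contains functions well beyond the rotated copies of $f$: a priori, the learner could try to output a single hypothesis competitive with $\opt_{\vec P_i}+\eps$ for every $\vec P_i$ in the packing and thereby bypass the distinguishing requirement. This is handled by a Bessel-type argument relying on the near-orthogonality of $\{g(\vec P_i\bx)\}_i$ in $L^2(\normald{n})$, which itself follows from the same Hermite-decay estimate used for the $\chi^2$-bound: for any fixed hypothesis $h$, the quantities $\E[h\cdot g(\vec P_i\bx)]$ are square-summable in a Bessel sense, so $h$ can be $\Omega(\eps)$-correlated with $g(\vec P_i\bx)$ for only a small fraction of the packing. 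This ensures that any learner attaining error $\opt+\eps$ uniformly across the family must effectively identify $\vec P_i$, completing the reduction to the SQ-dimension lower bound.
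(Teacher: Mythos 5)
Your duality step, the definition of the planted distributions $D_{\vec P}$ with $\Pr[y=1\mid\bx]=(1+g(\vec P\bx))/2$, and the $\chi^2$/packing computation all match the paper (Proposition~\ref{prop:duality-real-sq}, Definition~\ref{def:hard-family}, Lemma~\ref{lem:cor}, Lemma~\ref{lem:near-orth-mat}, Proposition~\ref{prop:testing-lower-bound}), so the hardness of the \emph{distinguishing} problem is fine. The gap is in the reduction from distinguishing to agnostic learning. With unscaled $\pm 1$ labels, the guarantee $\snorm{2}{h-y}\leq\opt+\eps$ simply does not force the learner to reveal anything: the trivial hypothesis $h\equiv 0$ has $L^2$-error exactly $1$ under every $D_{\vec P}$ (and under the null), whereas $\opt$ is roughly $\sqrt{2-2\,\E_{\bx\sim\normald{m}}[f(\bx)g(\bx)]}$, and the duality only guarantees $\E[fg]\geq\eps$ --- in all the intended applications (ReLU, sigmoid) one has $\E[fg]=\Theta(\eps)\ll 1/2$, so $\opt\geq 1+\Omega(1)\geq 1+\eps$ and $h\equiv 0$ already meets the agnostic guarantee while carrying no information about $\vec P$. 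Your claimed ``$\Omega(\eps)$ gap between correct and incorrect hypotheses'' only constrains proper learners; the learner here is improper. Your Bessel-type argument in the last paragraph also does not repair this: it rules out one $h$ correlating with many $g(\vec P_i\cdot)$'s, but the failure mode is that the learner needs \emph{no} correlation with the labels at all to satisfy $\opt+\eps$.

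The missing idea, which is exactly how the paper proceeds, is to rescale the labels so that label-uncorrelated hypotheses violate the guarantee. The paper feeds the learner samples $(\bx, Cy)$ with $C=1/\E_{\bx\sim\normald{m}}[f(\bx)g(\bx)]\leq 1/\eps$ and accuracy $\eps'=1/(4C)=\Theta(\eps)$. Then $\opt\leq\sqrt{1+C^2-2C\,\E[fg]}=\sqrt{C^2-1}\leq C-1/(2C)$, so the returned $h$ satisfies $\snorm{2}{h-Cy}\leq C-1/(4C)$, while $\E[(h(\bx)-Cy)^2]\geq C^2-2C\,\E[h(\bx)y]$; combining forces $\E[h(\bx)y]\geq\Omega(\eps)$ whenever the input is planted, whereas this correlation is $0$ under $\normald{n}\times\mathcal{U}(\{\pm 1\})$. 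A single additional query of tolerance $\Omega(\eps)$ estimating $\E[h(\bx)y]$ then completes the distinguisher, and Proposition~\ref{prop:testing-lower-bound} gives the stated tolerance/query trade-off. Without this rescaling (or an equivalent renormalization of the target class), your reduction does not go through.
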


\medskip 
\noindent {\bf \em Applications of Theorems~\ref{them:real-val-lb-csq} and~\ref{them:real-val-lb-sq}.}
As in the Boolean-valued setting, obtaining explicit (C)SQ lower bounds for agnostically learning
real-valued concepts requires analytic tools to establish lower bounds on the degree of polynomial approximations.
In this paper, we give such lower bounds for two fundamental concept classes: ReLUs and sigmoids.
Establishing degree lower bounds for other non-linear activations is left as a question for future work.

\begin{table}[ht]
\center
\begin{tabular}{l c c c c c}
\hline \hline
&\multicolumn{2}{c}{$p=1$} &\multicolumn{2}{c}{$p=2$} \\
\textbf{Concept Class}  &Lower Bound &Upper Bound  &Lower Bound &Upper Bound  \\
\hline \hline
ReLUs                  & $ \Omega \left(1/\eps \right)$ (Cor.~\ref{cor:relu-lb})  
                             & $O \left( 1/\eps \right)$ & $\Omega \left( 1/{\eps}^{4/3} \right)$ (Cor.~\ref{cor:relu-lb}) & $O \left( 1/{\eps}^{4/3} \right)$  \\
Sigmoids               & $\Omega(\log(1/\eps ))$  (Thm.~\ref{thm:sigmoid-l1})                 & $O(\log^2(1/\eps ))$  
                             & $\Omega \left( \log^2(1/\eps) \right)$ (Cor.~\ref{cor:sigmoid-l2})     & $O \left( \log^2(1/\eps) \right)$ \\
\end{tabular}

\caption{Bounds on the degree $d$ of $\eps$-approximating polynomials in $\lspace{1}$ and $L^2$-error
under the Gaussian measure. For each concept class, we obtain a CSQ (resp. SQ) lower bound of $n^{\Omega(d)}$, 
where $d$ is the $L^2$ degree (resp. $L^1$ degree).}
\label{tab:real-valued-app}
\end{table}

Our degree lower bounds applications for both $L^1$ and $L^2$ polynomial approximations 
are summarized in Table~\ref{tab:real-valued-app}.
Combining these degree lower bounds  Theorems~\ref{them:real-val-lb-csq} and~\ref{them:real-val-lb-sq} 
implies explicit SQ lower bounds for ReLUs and sigmoids. 

Concretely, for agnostically learning ReLUs, we establish a CSQ lower bound of  
$n^{\Omega(1/\eps^{4/3})}$ (matching the $n^{O(1/\eps^{4/3})}$ upper bound obtained via $L^2$-regression);
and an SQ lower bound of $n^{\Omega(1/\eps)}$, improving on the previous best bound 
of $n^{\Omega((1/\eps)^{1/36})}$~\cite{GGK20, DKZ20}.

\subsection{Overview of Techniques} \label{ssec:techniques}

\paragraph{SQ Lower Bounds for Boolean-valued Functions}
The starting point for our lower bounds is the work of~\cite{DKS17-sq}, which
shows that if $D$ is a univariate distribution whose low-degree moments match those
of a standard Gaussian (and which satisfies some other mild niceness
conditions), then it is SQ-hard to distinguish between a standard
multivariate Gaussian and a distribution that is a copy of $D$ 
in a random direction and a standard Gaussian in the orthogonal directions. 
(This is shown in~\cite{DKS17-sq} for $D$ a $1$-dimensional distribution, 
but it is not hard to generalize to higher dimensional distributions.)

Note that the above setting is unsupervised.
To go from distributions to functions, 
we will try to produce a Boolean function $f$ of a few variables 
such that the distributions of $X$ conditioned on $f(X)=1$ and on $f(X)=-1$ 
match moments with a Gaussian. We generalize the techniques 
of~\cite{diakonikolas2020algorithms} to show that such
a function $f$ embedded in a hidden {\em low-dimensional subspace}
is SQ hard to distinguish from a random function. Our goal then is to find 
such a function $f$ that is $(1/2-\eps)$-close to a function in our family. 
Given this construction, learning the function to error $\opt+\eps/2$  
requires being able to distinguish $f$ from a random function.

The aforementioned approach was recently used by~\cite{DKZ20}. 
However, while that work constructs the function $f$
somewhat directly, here we take a more general approach. 
In more detail, it is not hard to phrase the conditions that (1) $f$ is bounded in $[-1, 1]$,
(2) it matches moments with low-degree polynomials, 
and (3) is not too far from the function we are trying to learn, 
as an infinite-dimensional linear program (LP). We can then non-constructively attempt to find 
the optimal value of such an LP by duality. We note that ``LP duality'' in this setting
is non-trivial -- we require some (basic) functional analysis tools 
to show that duality applies for the LPs we are considering on function spaces. 
Given this, we find that the dual program is equivalent to finding 
a low-degree polynomial that approximates the function we are trying to learn in $L^1$-norm. 
The degree of such a polynomial conveniently matches the parameter that determines 
the runtime of the $L^1$- polynomial regression algorithm. 
We can thus show that, for reasonable function families, 
the $L^1$-regression algorithm is in fact optimal, 
among SQ algorithms, up to polynomial factors.

The above characterization allows us to determine the complexity 
of agnostically learning LTFs, by leverage tight degree lower bounds for the sign function. 
For the cases of degree-$k$ PTFs and intersections of $k$ halfspaces, 
we do not know what the correct answer is, but we are able to prove 
non-trivial, and qualitatively close to optimal, lower bounds. 

\new{We note that the $L^1$ approximation theory for these functions is more challenging 
than the $L^2$ approximation theory (which is entirely determined by the Fourier
decay). To that end, we develop new techniques relating $L^1$
approximability to the Gaussian Noise sensitivity (Theorem~\ref{thm:bound-l1-by-gns}), 
which allows us to prove the first non-trivial lower bounds.
The proof of Theorem~\ref{thm:bound-l1-by-gns} works via a symmetrization technique. 
In particular, let $\theta = \arccos(1-\eps)$ and let $X$ and $Y$ be standard
Gaussians. Let $F_{X,Y}(\phi) := f(\sin(\phi) X + \cos(\phi)Y)$. 
Then we can write $\gns_{\eps}(f) = \Pr[F_{X,Y}(\phi) \neq F_{X,Y}(\phi+\theta)]$. 
On the other hand, $\|f-p\|_1 = \E[|F_{X,Y}(\phi) - p(\sin(\phi) X + \cos(\phi) Y)|]$. 
Thus, it suffices to show that if $F$ is {\em any} Boolean function on the circle 
that the $L^1$ approximation error of $F$ by low degree polynomials can be
bounded below by $\Pr[F(\phi) \neq F(\phi+\theta)]$. 
To show this, we use basic Fourier analysis to show that any low-degree polynomial with
small $L^1$ norm cannot have any large higher derivatives. This implies
that if $F$ transitions from being $0$ to being $1$ over some small
interval, that any low-degree polynomial will not be able to match it
very well in this interval.}

\paragraph{(C)SQ Lower Bounds for Real-valued Functions}
We now move to real-valued functions and sketch our CSQ and SQ lower
bounds. For CSQ lower bounds, we obtain a similar characterization.
The difference is that, in the real-valued setting, we need to find a
real-valued function $f$ whose low-degree moments vanish, and which is
close to the function we are trying to learn {\em in $L^2$ norm}. This can be
phrased as a similar LP and, applying duality, we find that the
complexity is determined by the degree needed to approximate the
function we are trying to learn in $L^2$ norm. For this particular
setting, the LP can actually be solved explicitly and the best
possible approximation function is obtained by taking the high-degree
Hermite component of $f$. This lower bound matches (up to polynomial
factors in the final error) the upper bound coming from the $L^2$ polynomial
regression algorithm. This means that we can qualitatively
characterize the complexity of agnostic learning using CSQ algorithms. 
In particular, we use this characterization to obtain new CSQ lower bounds
on agnostically learning ReLUs and sigmoids.

Our SQ lower bounds against learning
real-valued functions are somewhat more challenging, 
since the approximating function $f$ must have more than just vanishing moments.
It must have all its level-sets match low-degree moments with a standard Gaussian
(which is equivalent only for Boolean-valued functions). 
Because of this additional requirement,  we restrict our ``imitating functions'' to Boolean-valued functions. 
We can still find an LP defining $f$, however the dual gives us the 
relevant parameter of the degree needed to approximate the function we
are trying to learn in $L^1$-norm (rather than $L^2$-norm) 
for which a matching upper bound is not known. So, in this case, 
while we can still obtain significantly improved SQ lower bounds 
for agnostically learning a number of concept classes, we do not
obtain optimal results.

\subsection{Comparison to Prior Work} \label{ssec:related}

At the level of results, the most relevant prior works are the two independent works~\cite{DKZ20, GGK20},
which established the previously best SQ lower bounds for LTFs, ReLUs, and sigmoids
under the Gaussian distribution. We have already provided a technical comparison to~\cite{DKZ20}
in the previous subsection. The work~\cite{GGK20} relies on a boosting procedure that 
translates recent SQ lower bounds for (non-agnostic) learning 
one-hidden-layer neural networks~\cite{diakonikolas2020algorithms} 
to agnostically learning simple concept classes.

A useful point of technical comparison is the work~\cite{DFTWW15},
which gave an analogue of our results on agnostically learning Boolean functions 
on the Boolean hypercube. The basic statement is the same ---  that
the complexity of agnostic learning Boolean functions under a  discrete product distribution 
is characterized by the $L^1$-approximation degree --- and the duality-based 
proof techniques are similar. In particular,~\cite{DFTWW15} sets up a {\em finite} LP 
to find a function $f$ that has vanishing Fourier coefficients but is close in $L^1$-norm 
to the target function. Due to the discrete nature of the setting they consider, \cite{DFTWW15} 
avoids the functional analysis based arguments required to establish duality in our setting.  

A more significant difference with our framework is that the hard family of~\cite{DFTWW15}
embeds a copy of $f$ as a junta on a random subset of coordinates, while ours 
embeds it in a random low-dimensional subspace. This is a critical distinction
and is necessary in the Gaussian setting to obtain our tight characterization and 
the associated applications to LTFs/PTFs and intersections of halfspaces.
\new{Finally, we remark that the appendix of \cite{DFTWW15} sketches a generalization
of their results to arbitrary  product distributions (including the Gaussian distribution). 
We emphasize, however, that the lower bound obtained from their construction does 
{\em not} match the guarantee of the $L^1$-regression algorithm~\cite{KKMS:08} for the following reason:
The exponent for their lower bounds for the continuous setting have to do with the degree necessary 
to $\eps$-approximate the hard function as {\em a linear combination of $d$-juntas}. 
On the other hand, the upper bound of~\cite{KKMS:08} is related to the approximation by degree-$d$ polynomials. 
Note that degree-$d$ polynomials are always linear combinations of $d$-juntas,
and thus the approximation degree by linear combinations of juntas is 
lower than the approximation degree by polynomials. In summary, while the lower bound
of~\cite{DFTWW15} is tight for discrete product distributions, this is not true in general.}

\subsection{Preliminaries} \label{ssec:prelims}

\paragraph{Notation}
For $n \in \Z_+$, we denote $[n] \eqdef \{1, \ldots, n\}$.
We typically use small letters to denote random variables when the underlying distribution is clear from the context.
We use $\E[x]$ for the expectation of the random variable $x$
and $\pr[\mathcal{E}]$ for the probability of event $\mathcal{E}$.
We will use $\mathcal{U}(S)$ for the uniform distribution on the set $S$.
Let $\normalone$ denote the standard univariate Gaussian distribution
and $\normald{n}$ denote the standard $n$-dimensional Gaussian distribution.
We use $\phi_n$ to denote the pdf of $\normald{n}$.
Sometimes we may use the same symbol for a distribution
and its pdf, i.e., denote by $D(\bx)$ the density that the distribution
$D$ gives to the point $\bx$.

Small boldface letters are used for vectors and capital boldface letters are used for matrices. Let $\|\bx\|_2$ denote the $L^2$-norm of the vector $\bx \in \R^n$.
We use $\langle \mathbf{u}, \mathbf{v} \rangle$ for the inner product
of vectors $\mathbf{u}, \mathbf{v} \in \R^n$.
For a matrix $\vec{P} \in \R^{m \times n}$, let $\snorm{2}{ \vec{P}}$ denote
its spectral norm and $\snorm{F}{ \vec{P}}$ denote its Frobenius norm.
We use $\vec{I}_n$ to denote the $n \times n$ identity matrix.
We denote by ${\cal P}^n_d$ the class of all polynomials from $\R^n$ to $\R$
with degree at most $d$. \nnew{We sometimes use the notation $\tilde{O}(\cdot)$ (resp. $\tilde{\Omega}(\cdot)$), this is the same with $O(\cdot)$ (resp. ${\Omega}(\cdot)$), ignoring logarithmic factors, i.e., $O(d\log^k d)=\tilde{O}(d)$.}

\paragraph{Statistical Query Dimension}
To bound the complexity of SQ learning a concept class $\cal C$,
we use the SQ framework for problems over distributions~\cite{FGR+13}.

\begin{definition}[Decision Problem over Distributions] \label{def:decision}
Let $D$ be a fixed distribution and $\D$ be a distribution family.
We denote by $\mathcal{B}(\D, D)$ the decision (or hypothesis testing) problem
in which the input distribution $D'$ is promised to satisfy either
(a) $D' = D$ or (b) $D' \in \D$, and the goal
is to distinguish between the two cases.
\end{definition}

\begin{definition}[Pairwise Correlation] \label{def:pc}
The pairwise correlation of two distributions with probability density functions
$D_1, D_2 : \R^n \to \R_+$ with respect to a distribution with
density $D: \R^n \to \R_+$, where the support of $D$ contains
the supports of $D_1$ and $D_2$, is defined as
$\chi_{D}(D_1, D_2) \eqdef \int_{\R^n} D_1(\bx) D_2(\x)/D(\bx)\, \d\bx - 1$.
\end{definition}

\begin{definition} \label{def:uncor}
We say that a set of $s$ distributions $\mathcal{D} = \{D_1, \ldots , D_s \}$
over $\R^n$ is $(\gamma, \beta)$-correlated relative to a distribution $D$
if $|\chi_D(D_i, D_j)| \leq \gamma$ for all $i \neq j$,
and $|\chi_D(D_i, D_j)| \leq \beta$ for $i=j$.
\end{definition}

\begin{definition}[Statistical Query Dimension] \label{def:sq-dim}
For $\beta, \gamma > 0$, a decision problem $\mathcal{B}(\D, D)$,
where $D$ is a fixed distribution and $\D$ is a family of distributions,
let $s$ be the maximum integer such that there exists a finite set of distributions
$\mathcal{D}_D \subseteq \D$ such that
$\mathcal{D}_D$ is $(\gamma, \beta)$-correlated relative to $D$
and $|\mathcal{D}_D| \geq s.$ The {\em Statistical Query dimension}
with pairwise correlations $(\gamma, \beta)$ of $\mathcal{B}$ is defined to be $s$,
and denoted by $\mathrm{SD}(\mathcal{B},\gamma,\beta)$.
\end{definition}
\begin{lemma} \label{lem:sq-from-pairwise}
Let $\mathcal{B}(\D, D)$ be a decision problem, where $D$ is the reference distribution
and $\mathcal{D}$ is a class of distributions. For $\gamma, \beta >0$,
let $s= \mathrm{SD}(\mathcal{B}, \gamma, \beta)$.
For any $\gamma' > 0,$ any SQ algorithm for $\mathcal{B}$ requires queries of tolerance at most $\sqrt{\gamma + \gamma'}$ or makes at least
$s  \gamma' /(\beta - \gamma)$ queries.
\end{lemma}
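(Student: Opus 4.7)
The plan is the standard adversarial-oracle argument for SQ lower bounds, combined with an $L^2(D)$ Cauchy--Schwarz estimate that controls how many distributions in the hard family a single query can ``distinguish.'' First, I would work in the Hilbert space $L^2(D)$ and, for each $D_i$ in the postulated $(\gamma,\beta)$-correlated family of size $s$, introduce the normalized density ratio $h_i(\bx) := D_i(\bx)/D(\bx) - 1$. A direct calculation gives the key identity $\langle h_i, h_j\rangle_{L^2(D)} = \chi_D(D_i, D_j)$, so the Gram matrix of $\{h_1,\ldots,h_s\}$ has off-diagonal entries bounded by $\gamma$ in absolute value and diagonal entries bounded by $\beta$. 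Moreover, for any query $q:X\times[-1,1]\to[-1,1]$ we have $\|q\|_{L^2(D)}\leq 1$ together with $\E_{D_i}[q] - \E_D[q] = \langle q, h_i\rangle_{L^2(D)}$, converting the SQ distinguishing problem into a question about inner products.

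Next, I would bound, for a fixed query $q$ and tolerance $\tau$, the cardinality of the ``distinguishing set'' $S_q := \{i : |\langle q, h_i\rangle_{L^2(D)}| > \tau\}$. Setting $\sigma_i := \sgn \langle q, h_i\rangle_{L^2(D)}$ for $i\in S_q$ and applying Cauchy--Schwarz,
\[ |S_q|\,\tau \;<\; \sum_{i\in S_q} \sigma_i\langle q, h_i\rangle_{L^2(D)} \;=\; \Bigl\langle q,\; \textstyle\sum_{i\in S_q} \sigma_i h_i\Bigr\rangle_{L^2(D)} \;\leq\; \Bigl\|\textstyle\sum_{i\in S_q} \sigma_i h_i\Bigr\|_{L^2(D)}. \]
Expanding the square and invoking the Gram-matrix bounds yields $\|\sum_{i\in S_q}\sigma_i h_i\|_{L^2(D)}^2 \leq |S_q|\beta + |S_q|(|S_q|-1)\gamma$. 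Rearranging then gives $|S_q|\leq (\beta-\gamma)/(\tau^2-\gamma)$ whenever $\tau^2>\gamma$. Being careful to keep the cross-term as $|S_q|(|S_q|-1)\gamma$ instead of the cruder $|S_q|^2\gamma$ is exactly what produces the sharper denominator $\beta-\gamma$ demanded by the statement.

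Finally, I would invoke the standard adversarial oracle: on every query $q$, the oracle simply returns $v := \E_D[q]$. This $v$ is a valid $\tau$-approximation of $\E_{D_i}[q]$ precisely when $i\notin S_q$, so any SQ algorithm that issues queries $q_1,\ldots,q_T$ produces the identical transcript on input $D$ and on input $D_i$ for every $i\notin \bigcup_{t=1}^T S_{q_t}$, and therefore fails to solve $\mathcal{B}(\mathcal{D},D)$ on all such $D_i$. By a union bound, $|\bigcup_t S_{q_t}|\leq T(\beta-\gamma)/(\tau^2-\gamma)$, so correctly solving the problem on all $s$ alternatives forces $T \geq s(\tau^2-\gamma)/(\beta-\gamma)$. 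Specializing to tolerance $\tau > \sqrt{\gamma+\gamma'}$ (i.e.\ $\tau^2-\gamma>\gamma'$) yields $T > s\gamma'/(\beta-\gamma)$, giving the stated dichotomy. The only genuine technical point is the quadratic-form manipulation in the middle paragraph; the adversarial-oracle step is by now routine bookkeeping in the SQ literature following~\cite{FGR+13}.
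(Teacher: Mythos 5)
Your argument is correct: the identity $\langle h_i,h_j\rangle_{L^2(D)}=\chi_D(D_i,D_j)$, the Cauchy--Schwarz/Gram-matrix bound $|S_q|\leq(\beta-\gamma)/(\tau^2-\gamma)$ keeping the $|S_q|(|S_q|-1)\gamma$ cross-term, and the adversarial oracle returning $\E_D[q]$ together give exactly the claimed dichotomy. The paper does not prove this lemma itself but imports it from the SQ framework of~\cite{FGR+13}, and your proof is essentially the standard argument from that line of work, so there is nothing substantive to reconcile.
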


\section{SQ Lower Bound for Boolean-Valued Concepts: Proof of Theorem~\ref{thm:main-boolean}} \label{sec:sq-bool}

The idea of our construction is to find a function $g: \R^m \to [-1, 1]$
whose low-degree moments vanish and is non-trivially close to $f$.
Our hard distribution will then embed $g$ in a random $m$-dimensional subspace.
Given this construction, we can apply Lemma~\ref{lem:sq-from-pairwise}
to prove Theorem~\ref{thm:main-boolean}.
The following result establishes the existence of such a function $g$.

\begin{proposition}\label{prop:duality1}
Let $f : \R^m \to \{\pm 1\}$ be such that for any polynomial $p:\R^m \to \R$
of degree at most $d{-1}$, it holds $\E_{\x\sim \normald{m}}[|p(\x){-f(\x)}|] \geq 2\eps$.
There exists a function $g{:\R^m\to [-1,1]}$ such that:
\begin{enumerate}
\item For any degree at most $d-1$ polynomial $P:\R^m\to \R$, we have that
      $\E_{\bx \sim \normald{m}}[P(\x)g(\x)]=0$, i.e., $g$ has zero low-degree moments, and,
\item $\E_{\bx \sim \normald{m}}[|g(\x)-f(\x)|]\leq 1-2\eps$, i.e., $g$ is non-trivially close to $f$.
\end{enumerate}
\end{proposition}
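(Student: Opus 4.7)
The plan is to phrase the existence of $g$ as the optimum of an infinite-dimensional linear program and then invoke Hahn--Banach duality to convert the $L^1$ polynomial approximation lower bound in the hypothesis into a statement about the primal. First observe that for any $g:\R^m\to[-1,1]$ and $f:\R^m\to\{\pm 1\}$ we have the pointwise identity $|g(\bx)-f(\bx)|=1-g(\bx)f(\bx)$, so $\E[|g-f|]=1-\E[gf]$. The required bound $\E[|g-f|]\le 1-2\eps$ is therefore equivalent to $\E[gf]\ge 2\eps$, and Proposition~\ref{prop:duality1} reduces to producing a $g\in L^\infty(\normald{m})$ with $\|g\|_\infty\le 1$, $\E[Pg]=0$ for every $P$ in the finite-dimensional subspace $V\subset L^1(\normald{m})$ of polynomials of degree at most $d-1$, and $\E[gf]\ge 2\eps$.

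The key step is the standard $L^1/L^\infty$ duality identity
\[
\min_{P\in V}\E[|f-P|] \;=\; \sup\{\E[gf]\,:\,g\in L^\infty,\;\|g\|_\infty\le 1,\;\E[Pg]=0 \text{ for all } P\in V\}.
\]
This is a direct application of Hahn--Banach to the quotient space $L^1(\normald{m})/V$: the left-hand side is the norm of the coset $f+V$, which Hahn--Banach identifies with the supremum of $\phi(f)$ over norm-one linear functionals $\phi$ on $L^1$ that vanish on $V$. Since the dual of $L^1(\normald{m})$ is $L^\infty(\normald{m})$, such functionals are exactly the $g\in L^\infty$ with $\|g\|_\infty\le 1$ and $g\perp V$, and $\phi(f)=\E[gf]$. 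The hypothesis that every polynomial of degree at most $d-1$ is at $L^1$-distance at least $2\eps$ from $f$ implies that the left-hand side is at least $2\eps$, so the supremum on the right is also at least $2\eps$.

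It remains to argue that the supremum is actually attained, so as to extract a concrete $g$. The admissible set is the intersection of the closed unit ball of $L^\infty(\normald{m})$ with the subspace $V^\perp$. The unit ball is weak-$*$ compact by Banach--Alaoglu, and $V^\perp$ is weak-$*$ closed because it is cut out by finitely many weak-$*$ continuous linear constraints (one per basis polynomial of $V$, each of which lies in $L^1$ against the Gaussian measure). Since $f\in L^1$, the functional $g\mapsto \E[gf]$ is itself weak-$*$ continuous, hence attains its maximum on this weak-$*$ compact set. The maximizer $g$ satisfies all three requirements and, in particular, $\E[gf]\ge 2\eps$, which translates back to $\E[|g-f|]\le 1-2\eps$ via the opening identity.

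The main obstacle, foreshadowed by the remark in the introduction that ``some (basic) functional analysis tools'' are needed, is carrying out the duality step rigorously: verifying that $V$ is a closed finite-dimensional subspace of $L^1(\normald{m})$ (immediate since all polynomials are Gaussian-integrable and finite-dimensional subspaces of Banach spaces are closed), and confirming that the constraint set behaves well under the weak-$*$ topology so that Banach--Alaoglu actually delivers attainment. Beyond these standard checks, everything else is the reformulation of the primal objective as $1-\E[gf]$ and the direct translation of the hypothesis through the duality identity.
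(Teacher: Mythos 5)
Your proof is correct, but it follows a genuinely different route from the paper. The paper formulates the existence of $g$ as an infinite linear program, rewrites the box constraint $|g|\le 1$ via constraints indexed by all $h\in L^1$, and then invokes a generalized Farkas-type theorem (Fan's theorem, requiring a somewhat delicate verification that the convex cone generated by the constraint set is closed) to conclude that the system is feasible unless a ``dual'' LP is feasible, which is then ruled out by the degree hypothesis. You instead observe, as the paper also does, that $\E[|g-f|]=1-\E[gf]$ for Boolean $f$ and $\|g\|_\infty\le 1$, and then apply the classical quotient-space duality $\operatorname{dist}_{L^1}(f,V)=\sup\{\E[gf]: \|g\|_\infty\le 1,\ g\perp V\}$ for the finite-dimensional subspace $V=\mathcal{P}^m_{d-1}$, handling attainment by Banach--Alaoglu plus weak-$*$ continuity of $g\mapsto\E[gf]$ and weak-$*$ closedness of $V^\perp$ (attainment is in fact automatic from the Hahn--Banach step itself: the norm-one functional on $L^1/V$ achieving the norm of the coset $f+V$ pulls back to the desired $g$, so Banach--Alaoglu is a valid but dispensable extra safeguard). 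What your route buys is a shorter and more standard argument that sidesteps the cone-closedness verification, which is the technical heart of the paper's Appendix on infinite-dimensional LP duality; what the paper's LP formulation buys is a single template that it reuses verbatim for the real-valued variant (Proposition~\ref{prop:duality-real-sq}), although your distance-duality argument would cover that case just as directly. One small point to make explicit: the maximizer is a priori an element of $L^\infty$, so it should be modified on a null set to obtain a genuine function $g:\R^m\to[-1,1]$ as demanded by the statement.
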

\begin{proof}
Note that such a function $g$ would be a solution to the following infinite linear program (LP): \begin{empheq}[left=(\ast)\empheqlbrace]{align}
&\E_{\bx \sim \normald{m}}[|g(\x)-f(\x)|]  \leq 1-2\eps \notag\\
& \E_{\bx \sim \normald{m}}[P(\x)g(\x)]  =0 &\forall P &\in{\cal P}_{d-1}^m \notag\\
&|g(\x)| \leq 1 &\forall \x &\in \R^m \notag
\end{empheq}
We claim that the LP $(\ast)$ is equivalent to the following LP:
\begin{empheq}[left=(\ast\ast)\empheqlbrace]{align}\label{sys:primal-boolean}
&-\E_{\bx \sim \normald{m}}[g(\x)f(\x)]+2\eps  \leq 0 & \notag\\
&\E_{\bx \sim \normald{m}}[P(\x)g(\x)]  =0 & \forall P & \in {\cal P}_{d-1}^m \notag\\
&\E_{\bx \sim \normald{m}}[g(\x)h(\x)] -\|h\|_1\leq 0 &\forall h&\in\lspace{1}(\R^m) \notag
\end{empheq}
We now show the equivalence between the two formulations. We claim
that the third constraint of  $(\ast)$ is equivalent with the third constraint of  $(\ast\ast)$.
This follows by introducing the ``dual variable'' $h:\R^m \to \R$.
The forward direction follows from H\"older's inequality
and the inverse follows from the definition of dual norms as suprema.
Finally, for the first constraints, note that since $f$ is Boolean-valued and $\|g\|_\infty\leq 1$,
we have that $\E_{\bx \sim \normald{m}}[|g(\x)-f(\x)|]=1-\E_{\bx \sim \normald{m}}[g(\x)f(\x)]$.

At this point, we would like to use ``LP duality'' to argue that $(\ast\ast)$ is feasible if and only
if its ``dual LP'' is infeasible. While such a statement turns out to be true, it requires some care to prove
since we are dealing with infinite LPs (both in number of variables and constraints). The proof requires
a version of the geometric Hahn-Banach theorem from functional analysis.\nnew{
\begin{lemma}[Informal]\label{lem:informal-duality}The LP defined by $(\ast\ast)$ is feasible
if only if there is no conical combination of the inequalities of $(\ast\ast)$
that yields the contradictory inequality $ \E_{\bx \sim \normald{m}}[g(\x)\cdot 0]+1\leq 0$.
\end{lemma}
\noindent A proof of this lemma can be found on Appendix~\ref{app:duality}.
Using Lemma~\ref{lem:informal-duality}, the LP defined by $(\ast\ast)$ is feasible if and only if the following ``dual'' LP is infeasible:}
\begin{empheq}[left=(\ast\ast')\empheqlbrace]{align}\label{sys:dual-boolean}
&\|h\|_1 -2 \lambda \, \eps<0 \notag\\
&h(\x)+P(\x)-\lambda \, f(\x) = 0  &\forall \x \in \R^m \notag \\
&{\lambda \geq 0, h \in L^1(\R^m), P\in{\cal P}_{d-1}^m}  \notag
\end{empheq}
Suppose that such a solution $(\lambda, h, P)$ exists.
    We can assume that $\lambda>0$, since otherwise the first inequality is violated.
    Moreover, by scaling the solution, we can further assume $\lambda=1$.
    Then, the second constraint becomes $h = f-P$ and the first becomes $\|f-P\|_1<2\eps$.
However, this cannot happen by the definition of the degree $d$ (since, by assumption,
there is no polynomial of degree less than $d$ such that $\|f-P\|_1< 2\eps$).
Therefore, the LP $(\ast\ast)$ is feasible, which completes our proof.
\end{proof}

Our construction will use rotated versions of the function $g$ from Proposition~\ref{prop:duality1}
to create a family of distributions that is hard to distinguish from a fixed reference distribution.
To bound the SQ dimension of this hypothesis testing problem, we will need a generalization of
Lemma~16 in~\cite{diakonikolas2020algorithms}, which bounds the correlation of 
two rotated versions of $g$. To formally state our lemma, we will need one additional piece of terminology.
If $g(\bx) = \sum_{J \in \N^m} \hat{g}(J)   H_J(\bx)$ is the Hermite expansion of $g$,
the degree-$t$ Hermite part of $g$ is the sum of the terms corresponding 
to the Hermite polynomials of degree exactly $t$. 
(For background in multilinear algebra and Hermite analysis, 
see Appendices~\ref{app:multilinear_algebra} and~\ref{app:hermite_polynomials}.)
Our main correlation lemma is the following.

\begin{lemma}[Correlation Lemma] \label{lem:cor}
Let $g: \R^m \to \R$ and $ \vec U,  \vec V \in \R^{m\times n}$ be linear maps
such that $\vec U \vec U^\top =  \vec V \vec V^\top = \vec I_m$. Then, we have that
$$\E_{\bx \sim \normald{n}}[g(\vec U\x)g(\vec V\x)] \leq
    \sum_{t=0}^\infty \|\vec U\vec V^\top\|_2^t \E_{\bx \sim \normald{m}}[(g^{[t]}(\bx))^2] \;,$$
where $g^{[t]}$ denotes the degree-$t$ Hermite part of $g$.
\end{lemma}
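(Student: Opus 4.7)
The plan is to expand $g$ in its Hermite series, observe that the cross-degree terms in the resulting bilinear expansion vanish, and bound each diagonal term by a tensor spectral-norm estimate controlled by $\snorm{2}{\vec U \vec V^\top}$. Writing $g = \sum_{t \geq 0} g^{[t]}$ in $L^2(\normald{m})$ gives
$$\E_{\bx \sim \normald{n}}[g(\vec U \bx)\, g(\vec V \bx)] = \sum_{t,s \geq 0} \E_{\bx \sim \normald{n}}[g^{[t]}(\vec U \bx)\, g^{[s]}(\vec V \bx)].$$

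The first substantive step is to show that $g^{[t]} \circ \vec U$ lies in the degree-$t$ Hermite subspace of $L^2(\normald{n})$. I would pick an orthonormal basis of $\R^n$ whose first $m$ vectors span the row-space of $\vec U$; in these coordinates $\bx = (\by, \bz)$ with $\by = \vec U \bx \sim \normald{m}$ independent of $\bz \sim \normald{n-m}$. Any polynomial $q(\bx)$ of degree less than $t$ is a finite sum $\sum_i q_i(\by)\, r_i(\bz)$ with $\deg q_i < t$, so
$$\E[g^{[t]}(\vec U \bx)\, q(\bx)] = \sum_i \E_{\by}\bigl[g^{[t]}(\by)\, q_i(\by)\bigr]\, \E_{\bz}[r_i(\bz)] = 0$$
by Hermite orthogonality over $\normald{m}$. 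The same argument places $g^{[s]} \circ \vec V$ in the degree-$s$ Hermite subspace of $L^2(\normald{n})$, and Hermite orthogonality in $\normald{n}$ then kills every $t \neq s$ cross-term, reducing the sum to its diagonal.

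For the diagonal, I would pass to the multilinear representation (Appendices~\ref{app:multilinear_algebra} and~\ref{app:hermite_polynomials}), writing $g^{[t]}(\bx) = \tfrac{1}{t!}\langle G^{[t]}, H_t(\bx)\rangle$ with $G^{[t]}$ a symmetric $t$-tensor on $\R^m$ and $H_t$ the degree-$t$ Hermite tensor. The covariance identity
$$\E_{\bx \sim \normald{n}}\bigl[H_t(\vec U \bx)_{I}\, H_t(\vec V \bx)_{J}\bigr] = \sum_{\pi \in S_t} \prod_{k=1}^t M_{i_k,\, j_{\pi(k)}},$$
the multi-index lift of $\E[H_i(X)H_j(Y)] = \delta_{ij}\, i!\, \rho^i$ for a $\rho$-correlated Gaussian pair, follows from Isserlis' theorem. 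Contracting with $G^{[t]} \otimes G^{[t]}$ and using the symmetry of $G^{[t]}$ to collapse the sum over $S_t$ into a factor of $t!$ gives
$$\E[g^{[t]}(\vec U \bx)\, g^{[t]}(\vec V \bx)] = \tfrac{1}{t!}\bigl\langle G^{[t]},\, M^{\otimes t} G^{[t]}\bigr\rangle,$$
where $M := \vec U \vec V^\top$. Since $\snorm{2}{M^{\otimes t}} = \snorm{2}{M}^t$, Cauchy--Schwarz bounds this by $\snorm{2}{M}^t \cdot \tfrac{1}{t!}\snorm{F}{G^{[t]}}^2 = \snorm{2}{M}^t\, \E_{\bx \sim \normald{m}}[(g^{[t]}(\bx))^2]$, and summing over $t$ yields the lemma.

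The step I expect to be the main obstacle is the first one --- verifying that $g^{[t]} \circ \vec U$ genuinely inherits the Hermite degree of $g^{[t]}$. This is precisely where the partial-isometry hypothesis $\vec U \vec U^\top = \vec I_m$ is used (to guarantee $\vec U \bx \sim \normald{m}$ and that the coordinates $\by, \bz$ are independent), and without it the cross-term orthogonality collapses. The remaining tensor bookkeeping mimics Lemma~16 of~\cite{diakonikolas2020algorithms}, modified to handle two distinct partial isometries instead of a single random projection.
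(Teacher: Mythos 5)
Your proposal is correct, and it follows the paper's overall skeleton: decompose $g$ into Hermite parts, reduce to the diagonal degree-$t$ terms, identify each diagonal term with $\tfrac{1}{t!}\langle G^{[t]}, (\vec U\vec V^\top)^{\otimes t} G^{[t]}\rangle$ for the coefficient tensor $G^{[t]}=\nabla^t g^{[t]}$, and finish with $\snorm{2}{(\vec U\vec V^\top)^{\otimes t}} = \snorm{2}{\vec U\vec V^\top}^t$ plus Cauchy--Schwarz and the identity $\E[(g^{[t]})^2]=\snorm{2}{G^{[t]}}^2/t!$. Where you genuinely differ is in how the central contraction identity is derived: the paper stays with derivative tensors, combining Fact~\ref{fct:harmonic_nabla_dot} (which converts $\E[p\,q]$ for degree-$k$ harmonic $p,q$ into $\tfrac{1}{k!}\langle\nabla^k p,\nabla^k q\rangle$) with the chain rule $\nabla^k\bigl(g^{[k]}\circ\vec U\bigr)=(\vec U^\top)^{\otimes k}\nabla^k g^{[k]}$, so that $\vec U\vec V^\top$ appears by contracting the two pulled-back constant tensors; you instead invoke the Wick/Isserlis covariance formula for Hermite tensors of the correlated Gaussian pair $(\vec U\bx,\vec V\bx)$ and collapse the permutation sum using the symmetry of $G^{[t]}$, and your $t!$ bookkeeping checks out. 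A point in your favor: you explicitly prove that $g^{[t]}\circ\vec U$ is harmonic of degree $t$ in $L^2(\normald{n})$ (via the split $\bx=(\by,\bz)$ with $\by=\vec U\bx$ independent of $\bz$), which is exactly the fact the paper uses implicitly when it replaces the Hermite parts of $g\circ\vec U$ by $g^{[k]}\circ\vec U$ in its first display; this is where the hypothesis $\vec U\vec U^\top=\vec I_m$ enters, as you note. The paper's route avoids any appeal to the permutation-sum covariance formula; yours packages the cross-term orthogonality and the diagonal identity into standard Gaussian/Hermite calculus, at the cost of stating that formula carefully. Either way the proof is sound.
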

\begin{proof}
To simplify notation, write $g_1(\vec x) = g(\matr U \vec x)$ and 
$g_2(\vec x) = g(\matr V \vec x)$.  Moreover, we will write 
$g_1(\vec x) \sim \sum_{k=0}^{\infty} g_1^{[k]}(\vec x)$ 
and $g_2(\vec x) \sim \sum_{k=0}^{\infty} g_2^{[k]}(\vec x)$.  
Using Fact~\ref{fct:harmonic_nabla_dot}, we obtain
\begin{align} \label{eq:correlated_inner_product}
\E_{\vec x \sim \normald{n}}[g_1(\vec x) g_2(\vec x)]
 & = \sum_{k=0}^\infty \E_{\vec x \sim \normald{n}}[g_1^{[k]}(\vec x) g_2^{[k]}(\vec x)]
= \sum_{k=0}^\infty \frac{1}{k!} \dotp{\nabla^k g_1^{[k]}(\vec x)}{\nabla^k g_2^{[k]}(\vec x)} \nonumber \\
&= \sum_{k=0}^\infty \frac{1}{k!} \dotp{\nabla^k g^{[k]} (\matr U \vec x )}{\nabla^k g^{[k]}(\matr V \vec x)} \,.
\end{align}
Denote by $\mathcal{U}\subseteq \R^n$ the image of the linear map $\matr U^\top$.
Now observe that, using the chain rule, for any function 
$h(\matr U \vec x): \vec \R^n \to \R$ it holds
$\nabla h(\matr U \vec x) = \partial_{i} h(\matr U \vec x) \matr U_{ij} \in \mathcal{U}$,
where we used Einstein's summation notation for repeated indices.
Applying the above rule $k$ times, we have that
$$ \nabla^k h(\matr U \vec x) = \partial_{i_k} \ldots \partial_{i_1} h(\matr U \vec x) \matr U_{i_1j_1}
\ldots \matr U_{i_kj_k} \ \in \mathcal{U}^{\otimes k} \;.
$$
We denote $\matr R = \nabla^k g^{[k]}(\bx)$ and observe that this tensor does not depend on $\bx$.  
Moreover, denote 
$\matr M = \matr U \matr V^\top $, 
$\matr S = \nabla^k g^{[k]} (\matr U \vec x) = (\matr U^\top)^{\otimes k} \matr R \in \mathcal{U}^{\otimes k}$, 
and $\matr T = \nabla^k g^{[k]} (\matr V \vec x) = (\matr V^\top)^{\otimes k} \matr R \in \mathcal{V}^{\otimes k}$.
We have that
\begin{align*}
\dotp{\matr S}{\matr T}
= \dotp{(\matr U^\top)^{\otimes k} \matr R}
{(\matr V^\top)^{\otimes k} \matr R}
= \dotp{\matr R}{\matr M^{\otimes k} \matr R}
\leq \snorm{2}{\matr M^{\otimes k}} \snorm{2}{\matr R}^2
= k! \snorm{2}{\matr M}^k \E_{\vec x \sim \normald{n}}[(g^{[k]}(\vec x))^2] \;,
\end{align*}
where to get the last equality we used again Fact~\ref{fct:harmonic_nabla_dot}.
To finish the proof, we combine this inequality with
Equation~\eqref{eq:correlated_inner_product}.
\end{proof}

We consider high-dimensional distributions that encode a function in a subspace
and are Gaussian in the orthogonal complement. 
Using Lemma~\ref{lem:cor}, we can bound their pairwise correlations.

\begin{corollary} \label{cor:correlation-bound}
Let $d\geq 2$ and $D$ be a distribution over $\R^m$ such that the first $(d{-1})$ moments of $D$
match the corresponding moments of $\normald{m}$.
Let $G(\x)=D(\x)/\phi_m(\x)$ be the ratio of the corresponding probability density functions.
For matrices $\vec U,  \vec V \in \R^{m\times n}$ such that $\vec U \vec U^\top =  \vec V \vec V^\top = \vec I_m$,
define $D_{\vec U}$ and $D_{\vec V}$ to have probability density functions
$G(\vec U\x)\phi_n(\bx)$ and $G(\vec V\x)\phi_n(\bx)$, respectively. Then, we have that
$|\chi_{\normald{n}}(D_{\vec U},D_{\vec V})| \leq \|\vec U\vec V^\top\|_2^d \chi^2(D,\normald{m})$.
\end{corollary}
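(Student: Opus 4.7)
The plan is to express the pairwise correlation directly in terms of an expectation involving $G$, then use the moment-matching hypothesis to show that the Hermite expansion of $G$ has a constant term equal to $1$ and vanishing components of degrees $1, \ldots, d-1$, and finally invoke Lemma~\ref{lem:cor} on $g = G$ to control the surviving high-degree contribution.

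Concretely, the first step is a direct computation: since $D_{\vec U}(\x) = G(\vec U \x) \phi_n(\x)$ and similarly for $\vec V$, the density ratios combine cleanly, giving
\[
\chi_{\normald{n}}(D_{\vec U}, D_{\vec V})
= \E_{\x \sim \normald{n}}\!\left[ G(\vec U \x) G(\vec V \x) \right] - 1.
\]
Next I would analyze the Hermite coefficients of $G$. For any multi-index $J$ with $|J| \geq 1$, the coefficient equals $\E_{\x \sim \normald{m}}[G(\x) H_J(\x)] = \E_{\x \sim D}[H_J(\x)]$, and this matches $\E_{\x \sim \normald{m}}[H_J(\x)] = 0$ whenever $|J| \leq d-1$ by the moment-matching assumption. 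Also, $G^{[0]} = \E_{\x \sim \normald{m}}[G(\x)] = \int D = 1$. Hence the Hermite expansion of $G$ reads $G \sim 1 + \sum_{t \geq d} G^{[t]}$.

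The third step is to apply Lemma~\ref{lem:cor} with $g = G$. Since the $t = 0$ Hermite part contributes exactly $1$ to the double expectation (the two rotated copies of the constant function $1$ still equal $1$), subtracting $1$ removes that term. Inspecting the proof of Lemma~\ref{lem:cor}, the bound $\dotp{\matr S}{\matr T} \leq \snorm{2}{\matr M^{\otimes k}} \snorm{2}{\matr R}^2 = \|\vec U \vec V^\top\|_2^k \, k! \, \E_{\x \sim \normald{m}}[(G^{[k]}(\x))^2]$ applies with absolute value on the left, since $\dotp{\matr R}{\matr M^{\otimes k} \matr R}$ is bounded in magnitude by the spectral norm of $\matr M^{\otimes k}$ times $\|\matr R\|_2^2$. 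Thus
\[
\bigl| \chi_{\normald{n}}(D_{\vec U}, D_{\vec V}) \bigr|
\leq \sum_{t \geq d} \|\vec U \vec V^\top\|_2^{t} \, \E_{\x \sim \normald{m}}\!\left[(G^{[t]}(\x))^2\right].
\]
Finally, I would observe that $\|\vec U \vec V^\top\|_2 \leq 1$ (because $\vec U$ and $\vec V$ have orthonormal rows, so each singular value of $\vec U \vec V^\top$ is at most $1$), letting me factor out $\|\vec U \vec V^\top\|_2^d$. The remaining sum $\sum_{t \geq d} \E[(G^{[t]})^2]$ equals $\E_{\x \sim \normald{m}}[G(\x)^2] - 1 = \chi^2(D, \normald{m})$ by Parseval (using that $G^{[0]} = 1$ and the intermediate Hermite parts vanish), which completes the bound.

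I do not foresee a substantial obstacle here: the argument is essentially a bookkeeping computation once the right framework is in place. The only subtle point is confirming that Lemma~\ref{lem:cor}'s bound is genuinely an absolute-value bound on each Hermite term (so that we can sum absolute values and still control the signed quantity $\chi_{\normald{n}}$), and ensuring that the Hermite coefficients of $G$ are legitimately read off from the moment-matching condition. Once those are verified, the remaining steps are routine.
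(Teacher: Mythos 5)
Your proposal is correct and follows essentially the same route as the paper: compute $\chi_{\normald{n}}(D_{\vec U},D_{\vec V})$ as a Gaussian expectation, use moment matching to kill the Hermite parts of degrees $1,\ldots,d-1$, apply Lemma~\ref{lem:cor}, factor out $\|\vec U\vec V^\top\|_2^d$ using $\|\vec U\vec V^\top\|_2\leq 1$, and finish with Parseval to identify the tail sum with $\chi^2(D,\normald{m})$. The only (immaterial) difference is that the paper applies Lemma~\ref{lem:cor} to $g=G-1$ rather than to $G$ with the constant term handled separately, and your explicit remarks that the tensor bound is a magnitude bound and that the singular values of $\vec U\vec V^\top$ are at most $1$ are correct and, if anything, slightly more careful than the paper's write-up.
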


\begin{proof}
We compute
\begin{align*}
\chi_{\normald{n}}(D_{\vec U},D_{\vec V}) & = \E_{\bx \sim \normald{n}}\left[\frac{(D_{\vec U}(\bx)-\phi_n(\bx))(D_{\vec V}(\bx)-\phi_n(\bx))}{\phi^2_n(\bx)}\right]
= \E_{\bx \sim \normald{n}}[(G(\vec U\x)-1)(G(\vec V\x)-1)] \;.
\end{align*}
We then apply Lemma~\ref{lem:cor} to the function $g(\x)=G(\x)-1$. Note that the assumption that $D$ matches the first $d-1$ moments with $\normald{m}$ is equivalent to saying that $g^{[t]}=0$ for $t<d$. Thus, Lemma~\ref{lem:cor} implies that
\begin{align*}
|\chi_{\normald{n}}(D_{\vec U},D_{\vec V})| & \leq \|\vec U\vec V^\top\|_2^d \sum_{t=0}^\infty \E_{\bx \sim \normald{m}}[(g^{[t]}(\bx))^2] = \|\vec U\vec V^\top\|_2^d \E_{\bx \sim \normald{m}}[g^2(\x)] \\
                                            & \leq \|\vec U\vec V^\top\|_2^d \chi^2(D,\normald{m})\;,
\end{align*}
where the equality is Parseval's identity and in the last inequality we used the definition of $G$.
\end{proof}

Note that $D_{\vec U}$ and $D_{\vec V}$ are copies of $D$
in the subspaces defined by $\vec U$ and $\vec V$ respectively,
and independent Gaussians in the orthogonal component.

In order to create our hard family of distributions, we will need the following lemma
which states that there exist exponentially many linear operators from $\R^n$ to $\R^m$
that are nearly orthogonal. 

\begin{lemma}\label{lem:near-orth-mat}
Let $0<a,c<1/2$ and $m,n \in \Z_+$ such that $m\leq n^a$.
There exists a set $S$ of $2^{\Omega(n^c)}$ matrices in $\R^{m\times n}$
such that  every $\vec U \in S$ satisfies $\vec{U}\vec{U}^\top=\vec{I}_m$
and every pair $\vec{U},\vec{V} \in S$ with $\vec{U} \neq \vec{V}$ satisfies
$\snorm{F}{\vec{U}\vec{V}^\top} \leq O(n^{2c-1+2a})$.
\end{lemma}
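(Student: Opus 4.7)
The plan is a probabilistic construction. I will sample $N := 2^{\Omega(n^c)}$ matrices $\vec U_1, \ldots, \vec U_N$ independently from the Haar measure on the Stiefel manifold $\{\vec U \in \R^{m \times n} : \vec U\vec U^\top = \vec I_m\}$. Each $\vec U_k$ then satisfies the orthonormality condition automatically, so I only need to argue that with positive probability every pair $k \neq l$ satisfies $\snorm{F}{\vec U_k \vec U_l^\top} \leq O(n^{2c-1+2a})$, after which one may extract a deterministic set $S$ of size $N$.

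The first step is a reduction via rotation invariance. Fix $(k,l)$ with $k \neq l$ and condition on $\vec U_l$. By right-$O(n)$ invariance of the Haar measure, I may rotate $\R^n$ so that $\vec U_l = [\vec I_m \mid \vec 0]$ without altering the law of $\vec U_k$. Under this choice, $\vec U_k \vec U_l^\top$ equals the $m \times m$ submatrix $\vec W$ formed by the first $m$ columns of $\vec U_k$, and the pairwise problem reduces to a one-sided tail bound on $\snorm{F}{\vec W}$.

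The second step is to establish a sub-Gaussian tail on $\snorm{F}{\vec W}$ via a Gaussian coupling. I would realize $\vec U_k = (\vec G \vec G^\top)^{-1/2} \vec G$, with $\vec G \in \R^{m \times n}$ having i.i.d.\ standard Gaussian entries. Standard Wishart concentration gives $(\vec G \vec G^\top)^{-1/2} = n^{-1/2}(\vec I_m + O(\sqrt{m/n}))$ except on an event of probability $e^{-\Omega(n)}$, so up to constants $\vec W$ equals $n^{-1/2}$ times the leftmost $m \times m$ block of $\vec G$. Gaussian concentration then yields
\[
\Pr\bigl[\snorm{F}{\vec W} \geq \tau\bigr] \;\leq\; \exp\bigl(-\Omega(n\tau^2)\bigr)
\]
for any $\tau$ above the typical scale $m/\sqrt n$. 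Choosing $\tau$ of order $n^{2c-1+2a}$, the exponent $n\tau^2$ dominates $n^c$ under the hypotheses $a, c < 1/2$ and $m \leq n^a$, so a union bound over the $\binom{N}{2} \leq 2^{O(n^c)}$ pairs succeeds and yields the claim.

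The main obstacle I expect is handling the dependence among the rows of a single Stiefel matrix, which are constrained to be orthonormal rather than independent. The Gaussian coupling above converts this into an i.i.d.\ setting, but introduces a Wishart correction whose size must be controlled on a suitably high-probability event so that it does not spoil the target Frobenius bound. An alternative route, avoiding the coupling altogether, is to apply Levy-style concentration of measure on the Stiefel manifold directly, using that the map $\vec U \mapsto \snorm{F}{\vec W(\vec U)}$ is $O(1)$-Lipschitz in the intrinsic Riemannian metric, thereby obtaining the same sub-Gaussian tail with no decomposition at all.
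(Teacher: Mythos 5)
Your route is genuinely different from the paper's --- the paper takes the $2^{\Omega(n^c)}$ nearly orthogonal unit vectors of Fact~\ref{fct:near-orth-vec}, groups them $m$ at a time into rows, and orthonormalizes by Gram--Schmidt --- and the individual ingredients of your probabilistic argument (rotation invariance, the Gaussian coupling for Haar matrices on the Stiefel manifold, a sub-Gaussian tail above the typical scale) are sound. The gap is the final parameter check, and it is fatal for the lemma as stated. For independent Haar matrices, $\E[\snorm{F}{\vec U_k\vec U_l^\top}^2]$ equals the trace of a product of two independent random rank-$m$ projections, which is $m^2/n$; hence the typical value of $\snorm{F}{\vec U_k\vec U_l^\top}$ is $\Theta(m/\sqrt{n})$, i.e.\ $n^{a-1/2}$ when $m=n^a$. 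The target $n^{2c-1+2a}$ exceeds this typical scale only when $a+2c\geq 1/2$. For, say, $a=c=1/10$, the lemma demands $O(n^{-0.6})$ while a random pair typically has $\snorm{F}{\vec U_k\vec U_l^\top}\approx n^{-0.4}$, so with overwhelming probability the random family violates the required bound, and no concentration estimate or union bound can rescue it --- this is precisely the restriction ``$\tau$ above the typical scale $m/\sqrt{n}$'' that you flag but never verify. Relatedly, your claim that $n\tau^2=n^{4c+4a-1}$ dominates $n^c$ amounts to $4a+3c>1$, which does not follow from $a,c<1/2$.

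What the random construction does yield, after a correct union bound over the $2^{O(n^c)}$ pairs, is $\snorm{F}{\vec U_k\vec U_l^\top}\leq O(m/\sqrt{n}+n^{(c-1)/2})$, which is $n^{-\Omega(1)}$ for all $a,c<1/2$ and would in fact suffice for the way the lemma is consumed in Proposition~\ref{prop:testing-lower-bound}, where only $\snorm{2}{\vec U\vec V^\top}\leq n^{-\Omega(1)}$ is used and $c$ is taken to be a small constant. But it does not establish the inequality $\snorm{F}{\vec U\vec V^\top}\leq O(n^{2c-1+2a})$ claimed in Lemma~\ref{lem:near-orth-mat} for every admissible pair $(a,c)$, and since the gap between the typical value and the target is a growing power of $n$, adjusting the implied constants cannot close it. As written, the proposal therefore has a genuine gap; it could only be salvaged by weakening the stated Frobenius bound to the one the random construction actually delivers.
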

\begin{proof}
Our proof relies on the following fact that there exist exponentially many nearly orthogonal unit vectors.
\begin{fact}[see, e.g., Lemma 3.7 of \cite{DKS17-sq}]\label{fct:near-orth-vec}
For any $0<c<1/2$ there exists a set $S'$ of $2^{\Omega(n^c)}$ unit vectors in $\R^n$
such that any pair $\mathbf{u}, \mathbf{v} \in S'$, with $\mathbf{u} \neq \mathbf{v}$,
satisfies $|\langle \mathbf{u},\mathbf{v} \rangle |<O(n^{c-1/2})$.
\end{fact}
\noindent Let $S'$ be the set of unit vectors that Fact~\ref{fct:near-orth-vec} constructs. 
We group them into sets of size $m$ and use the vectors of each group as rows for each matrix that we make. 
Thus, we create at least $|S'|/n^a = 2^{\Omega(n^c)}$ many matrices. Next, we ortho-normalize each matrix 
$\vec V \in S'$ using the Gram-Schmidt process, in order to get $\vec{V} \vec{V}^\top = \vec{I}_m$. 
In every row of $\vec{V}$, the Gram-Schmidt algorithm adds at most $m$ orthogonal vectors, 
each having norm $O(n^{c-1/2})$. Thus, the total correction term for each row has norm at most $\sqrt{m}O(n^{c-1/2})$. 
Putting everything together, we have that for all $\vec{U},\vec{V}$ obtained that way,
\begin{equation*}
\snorm{F}{\vec{U}\vec{V}^\top} \leq \left(m^{2} m^2 O(n^{4(c-1/2)}) \right)^{1/2}= O\left(n^{2c-1+2a} \right)\;.\qedhere
\end{equation*}
\end{proof}

\noindent We now formally define the family of distributions that we use to prove our hardness result.
\begin{definition}\label{def:hard-family}
Given a function $g:\R^m\to [-1,1]$, we define $\D_g$ to be the class of distributions over
$\R^n\times \{\pm 1\}$ of the form $(\bx,y)$ such that $\bx \sim \normald{n}$
and $\E[y| \bx=\bz] = g(\vec U\bz)$, where $\vec U\in \R^{m\times n}$ with $\vec U\vec U^\top = \vec I_m$.
\end{definition}

In the following, we show that if $g$ has low-degree moments equal to zero,
then distinguishing $\D_g$ from the distribution $(\bx,y)$ with $\bx \sim \normald{n}$, 
$y \sim \mathcal{U}(\{\pm 1\})$ is hard in the SQ model.
\begin{proposition}\label{prop:testing-lower-bound}
Let $g:\R^m\to [-1,1]$ be such that $\E_{\bx \sim \normald{m}}[g(\x)p(\x)]=0$,
for every polynomial $p: \R^m \to \R$ of degree less than $d$,
and $\D_g$ be the class of distributions from Definition~\ref{def:hard-family}.
Then, if $m\leq n^{a}$, for some constant $a<1/2$, any SQ algorithm that solves the decision problem
$\mathcal{B}(\D_g,\normald{n}\times \mathcal{U}(\{\pm 1\}))$ must either use queries of tolerance $n^{-\Omega(d)}$
or make at least $2^{n^{\Omega(1)}}$ queries.
\end{proposition}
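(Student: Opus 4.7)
The plan is to apply the SQ-dimension framework of Lemma~\ref{lem:sq-from-pairwise} to a family of distributions in $\D_g$ indexed by nearly orthogonal matrices from Lemma~\ref{lem:near-orth-mat}. Concretely, for each $\vec U \in \R^{m\times n}$ with $\vec U\vec U^\top=\vec I_m$, let $D_{\vec U}\in\D_g$ denote the joint distribution on $\R^n\times\{\pm 1\}$ with Gaussian marginal on $\bx$ and $\E[y\mid \bx=\bz]=g(\vec U\bz)$. The reference distribution is $D_0=\normald{n}\times\mathcal{U}(\{\pm 1\})$. The goal will be to exhibit a $(\gamma,\beta)$-correlated subfamily with $\gamma=n^{-\Omega(d)}$, $\beta=1$, and cardinality $2^{n^{\Omega(1)}}$.

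The first key step is to compute the pairwise correlation in the labeled setting. Writing the density of $D_{\vec U}$ at $(\bx,y)$ as $\phi_n(\bx)(1+y\,g(\vec U\bx))/2$ and that of $D_0$ as $\phi_n(\bx)/2$, and using $y^2=1$ when summing over $y\in\{\pm 1\}$, a short calculation gives
\[
\chi_{D_0}(D_{\vec U},D_{\vec V})
  \;=\; \E_{\bx\sim\normald{n}}\bigl[g(\vec U\bx)\,g(\vec V\bx)\bigr].
\]
This is precisely the quantity that Lemma~\ref{lem:cor} controls. Since by hypothesis the Hermite components $g^{[t]}$ vanish for all $t<d$, Lemma~\ref{lem:cor} together with Parseval and $\|g\|_\infty\leq 1$ gives
\[
\bigl|\chi_{D_0}(D_{\vec U},D_{\vec V})\bigr|
  \;\leq\; \|\vec U\vec V^\top\|_2^{\,d}\sum_{t\geq d}\E_{\bx\sim\normald{m}}[(g^{[t]}(\bx))^2]
  \;\leq\; \|\vec U\vec V^\top\|_2^{\,d}\,\E_{\bx\sim\normald{m}}[g^2(\bx)]
  \;\leq\; \|\vec U\vec V^\top\|_2^{\,d}.
\]
For the diagonal case, the trivial bound $|\chi_{D_0}(D_{\vec U},D_{\vec U})|=\E[g^2(\vec U\bx)]\leq 1$ suffices.

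Next, I would instantiate Lemma~\ref{lem:near-orth-mat} with constants $a,c$ chosen so that $a<1/2$ and $2c-1+2a<0$ (both of which are allowed by the hypothesis $m\leq n^a$ for any $a<1/2$, taking $c$ small). This yields a set $S$ of $2^{\Omega(n^c)}$ matrices with orthonormal rows and $\|\vec U\vec V^\top\|_2\leq\|\vec U\vec V^\top\|_F=O(n^{2c-1+2a})$ for all distinct $\vec U,\vec V\in S$. Combined with the above correlation bound, $\{D_{\vec U}:\vec U\in S\}$ is $(\gamma,\beta)$-correlated relative to $D_0$ with $\gamma=n^{-\Omega(d)}$ and $\beta=1$, so $\mathrm{SD}(\mathcal{B}(\D_g,D_0),\gamma,\beta)\geq 2^{\Omega(n^c)}$. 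Applying Lemma~\ref{lem:sq-from-pairwise} with $\gamma'=\gamma$ then shows that any SQ algorithm for the decision problem must use a query of tolerance at most $\sqrt{2\gamma}=n^{-\Omega(d)}$ or make at least $|S|\gamma'/(\beta-\gamma)=2^{n^{\Omega(1)}}$ queries, which is exactly the conclusion.

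The main obstacle I expect is the very first step: Corollary~\ref{cor:correlation-bound} is stated only for unlabeled distributions (via the Gaussian ratio $G=D/\phi_m$), so it cannot be cited as a black box in the labeled decision problem. The fix is the explicit expansion of $\chi_{D_0}(D_{\vec U},D_{\vec V})$ sketched above, which exploits the Rademacher structure of the label to collapse the mixed terms; once this reduction is in place, everything else follows standard lines using Lemma~\ref{lem:cor}, Lemma~\ref{lem:near-orth-mat}, and Lemma~\ref{lem:sq-from-pairwise}.
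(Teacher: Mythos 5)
Your proposal is correct and follows essentially the same route as the paper: the hard family indexed by the near-orthogonal matrices of Lemma~\ref{lem:near-orth-mat}, pairwise correlations controlled via Lemma~\ref{lem:cor} using the vanishing of $g^{[t]}$ for $t<d$, and the conclusion via Lemma~\ref{lem:sq-from-pairwise} with $\gamma'=\gamma$. The only (cosmetic) difference is in handling the labels: the paper conditions on $y=\pm 1$, writes $\chi_{\normald{n}\times\mathcal{U}(\{\pm 1\})}(D_{\vec U},D_{\vec V})=\tfrac{1}{2}\bigl(\chi_{\normald{n}}(A_{\vec U},A_{\vec V})+\chi_{\normald{n}}(B_{\vec U},B_{\vec V})\bigr)$ and invokes Corollary~\ref{cor:correlation-bound} together with a $\chi^2$ bound, whereas you collapse the label sum directly to get $\chi_{D_0}(D_{\vec U},D_{\vec V})=\E_{\bx\sim\normald{n}}[g(\vec U\bx)g(\vec V\bx)]$ and apply Lemma~\ref{lem:cor} to $g$ itself -- an equivalent, slightly more streamlined calculation.
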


\begin{proof}
Consider the set of matrices $S$ of Lemma~\ref{lem:near-orth-mat},
for an appropriately small value of $c>0$.
Each matrix $\vec U \in S$ is associated with a unique element of $\mathcal{D}_g$.
For every pair of distinct $\vec U,\vec V \in S$, we have that
$$\|\vec U\vec V^\top\|_2 \leq \|\vec U\vec V^\top\|_F \leq O(n^{2c-1+2a}) \leq n^{-\Omega(1)}\;,$$
where for the last inequality we chose $c$ to be a sufficiently small constant, e.g., $c=(1-2a)/4$.

Note that the distribution in $\mathcal{D}_g$ associated to a matrix $\vec U$ has probability
density $(1+g(\vec U\x))\phi_n(\x)$ when conditioned on $y=1$, and density
$(1-g(\vec U\x))\phi_n(\x)$ when conditioned on $y={-1}$.
Let $D_{\vec U}$ be the distribution associated to $\vec U$ and $D_{\vec V}$
the distribution  associated to $\vec V$. Denote by $A_{\vec U}$ the distribution $D_{\vec U}$
conditioned on the event $y=1$ and $B_{\vec U}$ the same distribution conditioned on $y={-1}$.
Similarly, let $A_{\vec V}$ and $B_{\vec V}$ denote the conditional distributions associated
with $\vec  V$. Using the definition of pairwise correlation and the fact that $y$ gets each label with equal probability,
it follows directly that
$$\chi_{\normald{n}\times \mathcal{U}( \{\pm 1\})}(D_{\vec U},D_{\vec V}) = 
\frac{1}{2}\left(  \chi_{\normald{n}}(A_{\vec U},A_{\vec V}) +\chi_{\normald{n}}(B_{\vec U},B_{\vec V}) \right)\;.$$
By Corollary~\ref{cor:correlation-bound} applied 
to $A_{\vec U},A_{\vec V}$ and $B_{\vec U},B_{\vec V}$, we obtain
$$
\chi_{\normald{n}}(A_{\vec U},A_{\vec V}) +\chi_{\normald{n}}(B_{\vec U},B_{\vec V})  
\leq  \|\vec U\vec V^\top\|_2^d \left( \chi^2(A,\normald{m}) + \chi^2(B,\normald{m}) \right)\;,
$$
where $A$ is the distribution of the random variable $\vec{U} \bx$ for $\bx \sim A_{\vec U}$ (and similarly for $B$). 
For the $\chi^2$-divergence terms, we have that
\begin{align*}
\chi^2(A,\normald{m}) 
& = \int_{\R^m} \frac{A^2(\bz)}{\phi_m(\bz)} \d \bz - 1 = \int_{\R^m} \frac{\phi_m^2(\bz)\pr^2[y=1 | \bx=\bz]}{\phi_m(\bz)\pr^2[y=1]} \d \bz - 1 \notag \\
& \leq  4 \int_{\R^m} \phi_m(\bz) \pr[y=1 | \bx=\bz] \d \bz - 1 = 4\pr[y=1] - 1 = 1\;,
\end{align*}
where we used the definition of $A$, Bayes' rule and the fact that $\pr[y=1]=1/2$.
Combining the above, we get that
$| \chi_{\normald{n}\times \mathcal{U}( \{\pm 1\})}(D_{\vec U},D_{\vec V}) | \leq  n^{-\Omega(d)}$.
This inequality implies that $\mathrm{SD}(\mathcal{B}, \gamma, \beta) = 2^{\Omega(n^c)}$,
for $\gamma = n^{-\Omega(d)}$ and $\beta = O(1)$. Using Lemma~\ref{lem:sq-from-pairwise},
with $\gamma'=\gamma$, completes the proof.
\end{proof}

\begin{proof}[Proof of Theorem~\ref{thm:main-boolean}]
Let $\mathcal{A}$ be an agnostic learner for $\mathcal{C}$.
We use $\mathcal{A}$ to solve the decision problem
$\mathcal{B}(\D_g, \normald{n}\times\mathcal{U}(\{\pm 1\}))$,
where $g:\R^m \to [-1,1]$ is the function from Proposition~\ref{prop:duality1}
and $\D_g$ the family of Definition~\ref{def:hard-family}.
Let $D'$ be the target distribution, i.e., $D' =\normald{n}\times\mathcal{U}(\{\pm 1\})$
if the null hypothesis is true or $D' \in \mathcal{D}_g$ otherwise.
We feed $\mathcal{A}$ examples drawn from $D'$
and it outputs a hypothesis $h:\R^n \to \{\pm 1\}$ such that
$\pr_{(\bx,y) \sim D'}[h(\bx) \neq y] \leq \opt + \frac{\eps}{2}$.
If $D' \in \D_g$, then for a matrix $\vec U \in \R^{m \times n}$ with $\vec{U} \vec{U}^\top=\vec{I}_m$,
we have that
$\opt \leq \pr_{(\bx,y) \sim D'}[f(\vec{U}\bx) \neq y]  = \frac{1}{2}\snorm{1}{f-g} \leq \frac{1}{2}(1- 2 \eps)$,
where in the equality we used the fact that the expectation of $y$ conditioned on $\bx$ is
$g(\bx)$ and the last inequality is due to Proposition~\ref{prop:duality1}.
Combining the above, we get that $\pr_{(\bx,y) \sim D'}[h(\bx) \neq y] \leq (1-\eps)/2$, or equivalently that
$\E_{(\bx,y) \sim D'}[h(\bx)y]\geq \eps$. On the other hand, if the labels were drawn uniformly at random,
this correlation would be exactly $0$. Therefore, we can distinguish between the two cases
by performing a final query of tolerance $\eps/2$ for the correlation of $h$ with $y$.
\end{proof}

\section{Explicit SQ Lower Bounds for Boolean Concept Classes} \label{sec:app-bool}

\subsection{LTFs and Degree-$k$ PTFs} \label{ssec:ltfs-ptfs}

Linear threshold functions (LTFs) are Boolean functions of the form
$F(\bx) = \sign ( \langle \bw, \bx \rangle + \theta)$, where $\bw \in \R^n$ and $\theta\in \R$.
A degree-$k$ PTF is any Boolean function of the form
$F(\bx) = \sign (q(\bx))$, where $q: \R^n \to \R$ is a real degree-$k$ polynomial.
In this section, we show:

\begin{theorem}[Degree Lower Bound for PTFs] \label{thm:ptfs-degree}
There exists a degree-$k$ PTF $f: \R \to \{\pm 1\}$ such that any degree-$d$ polynomial
$p : \R \to \R$ with $\snorm{1}{f-p} < \eps$ must have $d = \Omega(k^2/\eps^2)$.
\end{theorem}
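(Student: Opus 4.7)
\textbf{Proof proposal for Theorem~\ref{thm:ptfs-degree}.}

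The plan is to construct an explicit univariate degree-$k$ PTF $f$ and show that no polynomial of degree $d = o(k^2/\eps^2)$ approximates it in $L^1$ under the Gaussian to error $\eps$. The natural candidate is $f(x) = \sign(q(x))$ where $q$ is a degree-$k$ polynomial with all $k$ roots real and placed in a region of constant Gaussian density. Two canonical choices to consider are $q = H_k$ (the $k$-th Hermite polynomial, whose roots lie in $[-\sqrt{2k{+}1},\sqrt{2k{+}1}]$ with bulk spacing $\Theta(1/\sqrt{k})$), or $q(x)=\prod_{i=1}^{k}(x - x_i)$ with equispaced or Chebyshev-distributed roots $x_i$ in a fixed interval such as $[-1/2, 1/2]$. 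Intuitively, such an $f$ is a high-frequency square wave under the Gaussian with $k$ sign transitions in a Gaussian-dense region, which should force any low-degree polynomial approximation to have large $L^1$ error.

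I see two viable strategies. The first is a reduction to the tight $\Omega(1/\eta^2)$ lower bound of~\cite{ganzburg2002limit} for $L^1$-approximating the sign function under the Gaussian, applied locally at each of the $k$ transitions of $f$. If one can show that any degree-$d$ polynomial $p$ with $\|f-p\|_1 \le \eps$ induces, near each root $x_i$, a local approximation of a shifted sign function with error $\eps_i$ (so that $\sum_i \eps_i \le \eps$) and that each local problem requires $d \ge \Omega(1/\eps_i^2)$, then the pointwise bound $\eps_i \ge \Omega(1/\sqrt{d})$ summed over $i$ yields $\eps \ge \sum_i \eps_i \ge \Omega(k/\sqrt{d})$, giving $d \ge \Omega(k^2/\eps^2)$ as desired. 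The second strategy is the LP duality of Proposition~\ref{prop:duality1} specialized to $\R$: it suffices to exhibit a bounded witness $g: \R \to [-1,1]$, orthogonal to every polynomial of degree less than $c\,k^2/\eps^2$, with $\E_{x\sim\normald{1}}[g(x)f(x)] \ge 2\eps$. For $f = \sign(H_k)$, a natural building block for $g$ is a suitably truncated or projected combination of high-degree Hermite polynomials engineered to track the oscillation pattern of $f$ at the correct scale.

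The main technical obstacle is the local-to-global step in either approach. In the reduction route, localizing the Gaussian to a window of width $\Theta(1/k)$ around each $x_i$ essentially converts it to a Lebesgue measure on a bounded interval, and Lebesgue $L^1$-approximation of the sign function only yields $\Theta(1/d)$ error rather than the Gaussian rate $\Theta(1/\sqrt{d})$; thus the localization must preserve the Gaussian tail structure, for instance via an extension argument that converts a local Gaussian-weighted approximation back to a global one, or via a Markov-Bernstein-type inequality under the Gaussian weight that couples the $k$ transition constraints into a single global bound. In the duality route, the obstacle shifts to explicitly constructing $g$ and verifying simultaneously that $\|g\|_\infty \le 1$ (which demands fine control over growth and cancellation of Hermite sums of degree $\Theta(k^2/\eps^2)$) and that $\E[gf] \ge 2\eps$, for which the specific structure of $f = \sign(H_k)$ should be essential.
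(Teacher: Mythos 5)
Your proposal is a plan with two candidate routes, and in both cases the central step is left unresolved (as you yourself flag), so there is a genuine gap. Route 1 fails as stated: there is no way to ``localize'' the Ganzburg bound to a window of width $\Theta(1/k)$ around each sign change, because on such a short window a degree-$d$ polynomial can approximate a single jump far better than the global rate $\Theta(1/\sqrt{d})$ suggests; the per-transition claim $\eps_i \geq \Omega(1/\sqrt{d})$ therefore has no justification, and this is exactly the obstacle you describe without a fix. Route 2 (a bounded witness $g$ orthogonal to low-degree polynomials, via Proposition~\ref{prop:duality1}) is the right spirit, but ``a truncated combination of high-degree Hermite polynomials'' is not constructed, and controlling $\|g\|_\infty \leq 1$ together with the correlation is precisely the hard part, so no proof is obtained.

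The paper supplies the missing idea in a quite different form. The witness is not built by Hermite analysis; instead one constructs a univariate distribution $D$ that matches the first $d$ moments of $\normalone$, has density pointwise at most $2\phi$, and places only $2^{-\Omega(d)}$ mass on the set $\{x : (x \bmod a) \in (a/2,a)\}$ for $a = \Theta(1/\sqrt{d})$ (Lemma~\ref{lem:matching-moments-distr}). This is achieved probabilistically: $X = \sum_{i=1}^{t} X_i/\sqrt{t}$ with $t = O(d)$, where the $X_i$ form a $d$-wise independent family of standard Gaussians obtained by a rejection-sampling trick that biases the fractional part of the sum into $[0,1/2]$ while keeping the acceptance event independent of any $d$ of the coordinates (Lemma~\ref{lem:d-wise-ind-gaussians}). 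The implicit dual witness is then $g = 1 - D/\phi \in [-1,1]$, which is orthogonal to all polynomials of degree at most $d$. Equally important is the choice of hard function: it is \emph{not} $\sign(H_k)$ (whose roots are spaced $\Theta(1/\sqrt{k})$), but a square wave equal to $1$ on the second half of each of $k/2$ consecutive intervals of length $a = \Theta(1/\sqrt{d}) = \Theta(\eps/k)$ — i.e., the transition scale is tied to the target degree, not to $k$. With this $f$, one gets $\E_{x\sim\normalone}[f(x)] - \E_{x\sim D}[f(x)] = \Omega(k/\sqrt{d}) - 2^{-\Omega(d)}k$, and moment matching plus $0 \leq D(x) \leq 2\phi(x)$ give $\E_{x\sim\normalone}[f(x)] - \E_{x\sim D}[f(x)] \leq \|f-p\|_1$ for every degree-$d$ polynomial $p$, yielding $d = \Omega(k^2/\eps^2)$. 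Neither the moment-matching construction nor the correct choice of transition scale appears in your proposal, and without them both of your routes stall exactly where you predicted.
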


Theorems~\ref{thm:main-boolean} and~\ref{thm:ptfs-degree}  imply that
any SQ algorithm that agnostically learns the class of degree-$k$ PTFs on $\R^n$ under the Gaussian distribution
must have complexity at least $n^{\Omega(k^2/\eps^2)}$.

We now elaborate on these contributions.

\paragraph{Lower Bound for LTFs}
The $L^1$-regression algorithm~\cite{KKMS:08} is known to be an agnostic learner for LTFs
under Gaussian marginals with complexity $n^{O(1/\eps^2)}$. This upper bound uses
the known fact that the $L^1$ polynomial $\eps$-approximate degree of LTFs under the Gaussian distribution
is $d = O(1/\eps^2)$ (see, e.g.,~\cite{DKN10}). This upper bound is tight. Specifically,
known results in approximation theory (see Appendix~\ref{ssec:ganz-lb}) imply that,
any polynomial that $\eps$-approximates the function $\sign(t)$ in $L^1$-norm,
under the standard Gaussian distribution, requires degree $\Omega(1/\eps^2)$.
Given this structural result, an application of Theorem~\ref{thm:main-boolean},
for $m=1$ and $f(t) = \sign(t)$ gives the tight SQ lower bound of $n^{\Omega(1/\eps^2)}$.
This bound improves on the best previous bound of $n^{\Omega(1/\eps)}$~\cite{GGK20, DKZ20}.
Importantly, our approach is much simpler and generalizes to any concept class satisfying the mild
assumptions of Theorem~\ref{thm:main-boolean}.

\paragraph{Lower Bound for Degree-$k$ PTFs}
The $L^1$-regression algorithm is known to be an agnostic learner for degree-$k$ PTFs
under Gaussian marginals with complexity $n^{O(k^2/\eps^4)}$. This upper bound uses
the known upper bound of $O(k \sqrt{\eps})$ on the Gaussian noise sensitivity of degree-$k$ PTFs~\cite{Kane:10},
which implies an upper bound of $O(k^2/\eps^2)$ on the $L^2$ polynomial $\eps$-approximate degree, and therefore
an upper bound of  $O(k^2/\eps^4)$ on the $L^1$ polynomial $\eps$-approximate degree. This degree upper bound
is not known to be optimal (in fact, it is provably sub-optimal for $k=1$) and it is a plausible conjecture
that the right answer is $\Theta(k^2/\eps^2)$. Here we prove a lower bound of $\Omega(k^2/\eps^2)$,
which applies even for the univariate case.

\begin{proposition} \label{prop:ptfs-degree}
There exists a $({k+1})$-piecewise-constant function $f : \R \to \{0,1\}$ such that
any degree-$d$ polynomial $p : \R \to \R$ that satisfies $\snorm{1}{f-p} < \eps$ must have
$d = \Omega(k^2/\eps^2)$.
\end{proposition}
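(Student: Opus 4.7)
The plan is to reduce the degree lower bound, via a one-dimensional analog of the LP duality argument underlying Proposition~\ref{prop:duality1}, to the construction of an appropriate dual witness, and then build this witness by extending Ganzburg's classical sign-function $L^1$ lower bound to the multi-transition case. I would take $f : \R \to \{0,1\}$ to be $\tfrac{1+\sgn(q_k(x))}{2}$, where $q_k$ is a degree-$k$ polynomial with $k$ distinct real roots $\alpha_1 < \cdots < \alpha_k$ well-spread in the Gaussian bulk---a natural choice being the zeros of the $k$-th Hermite polynomial $H_k$, whose typical spacing in the bulk is $\Theta(1/\sqrt{k})$. Then $f$ is $(k+1)$-piecewise constant, and it suffices, by LP duality, to exhibit $g : \R \to [-1,1]$ satisfying $\int g(x) x^j \phi(x)\,dx = 0$ for every $0 \leq j < c k^2/\eps^2$ and $\int g(x) f(x) \phi(x)\,dx > 2\eps$.

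To build $g$, I would start from the classical Ganzburg dual witness $w_0 : \R \to [-1,1]$ for the single sign function, which at approximation threshold $\eps_0$ has $\Omega(1/\eps_0^2)$ vanishing Gaussian moments and correlation $\Omega(\eps_0)$ with $\sgn$. Setting $\eps_0 = \Theta(\eps/k)$ gives each copy $\Omega(k^2/\eps^2)$ vanishing moments and per-transition correlation $\Omega(\eps/k)$ with the local sign function. I would then form $g$ as the signed superposition $g(x) = \sum_{j=1}^{k} \epsilon_j\, w_0(x-\alpha_j)$, with $\epsilon_j \in \{\pm 1\}$ chosen to align each shifted copy with the direction of the $j$-th transition of $f$. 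The contributions from the $k$ transitions then add constructively to a total correlation of $\Omega(\eps)$ with $f$, and vanishing of the Gaussian moments of $g$ is inherited from the vanishing moments of each $w_0(\cdot - \alpha_j)$, up to a small error stemming from the variation of $\phi$ across the transitions, which can be cancelled by a polynomial correction of negligible effect.

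The delicate step, and main obstacle of the proof, is ensuring $\|g\|_\infty \leq 1$: naively, summing $k$ copies of a $[-1,1]$-valued function could inflate the sup-norm by a factor of $k$. Two possible routes are (i) exploit the oscillatory structure of $w_0$---which at moment count $\Omega(k^2/\eps^2)$ oscillates at Gaussian-bulk frequency $\sim k/\eps$---so that the relative phases induced by shifts of size $\Theta(1/\sqrt{k})$ between adjacent copies produce $L^\infty$ cancellation in the superposition; or (ii) truncate each shifted copy to a disjoint interval of width $\Theta(1/\sqrt{k})$ around $\alpha_j$, which guarantees $|g|\leq 1$ automatically by disjointness of supports, at the cost of introducing small residual low-degree Gaussian moments from the truncation that are then absorbed by a low-degree polynomial correction of negligible $L^\infty$ norm. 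Verifying this cancellation/correction step rigorously is the technical heart of the argument.
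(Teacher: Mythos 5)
Your dual-witness framing is sound: exhibiting $g$ with $\|g\|_\infty \le 1$, vanishing Gaussian moments up to degree $d$, and $\E_{x\sim\normalone}[g(x)f(x)] \ge \eps$ does certify $\snorm{1}{f-p}\ge\eps$ for all degree-$d$ polynomials, and the paper's proof can itself be read this way (its witness is $g = 1 - D/\phi$, where $D$ is the moment-matching distribution of Lemma~\ref{lem:matching-moments-distr} built from $d$-wise independent Gaussians, so boundedness and \emph{exact} moment vanishing come for free). But your construction has a genuine gap even before the step you flag as the technical heart: the parameters do not deliver the claimed bound. Placing the $k$ transitions at the zeros of $H_k$ spreads them over an interval of length $\Theta(\sqrt{k})$, so all but $\Theta(\sqrt{k})$ of them lie where the Gaussian density is exponentially small; a transition at $\alpha_j$ contributes correlation suppressed by a factor of order $e^{-\alpha_j^2/2}$, not $\Omega(\eps/k)$. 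The constructive-addition step therefore yields total correlation at most on the order of $(\eps/k)\cdot\sqrt{k} = \eps/\sqrt{k}$ rather than $\Omega(\eps)$, which certifies only $d = \Omega(k/\eps^2)$. To get $\Omega(k^2/\eps^2)$ the $k$ transitions must all be packed into a region of constant Gaussian mass with spacing $\Theta(1/\sqrt{d}) = \Theta(\eps/k)$, which is exactly the paper's choice of $f$ (all intervals inside $[0,O(1)]$).

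The acknowledged open steps are also not mere technicalities. Shifted copies $w_0(\cdot-\alpha_j)$ do not inherit vanishing moments, because the Gaussian measure is not translation invariant; for shifts up to $\Theta(\sqrt{k})$ the density ratio is $e^{\Theta(k)}$, so this is not a small perturbation, and ``absorbing'' it by a polynomial correction conflicts with $\|g\|_\infty\le 1$: any non-negligible coefficient on a Hermite polynomial of degree near $d$ makes the correction unbounded on $\R$. The truncation route has the same defect — the witness oscillates at frequency $\Theta(\sqrt{d})$, the same scale as degree-$d$ polynomials, so the residual moments of the truncated pieces at degrees comparable to $d$ need not be small — and the superposition route requires sup-norm cancellation among $k$ shifted oscillations whose relative phases are essentially arbitrary, for which you give no mechanism. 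The paper sidesteps all of these issues with a single global witness coming from a distribution that matches $d$ moments exactly, has density at most $2\phi$, and avoids the second half of each period mod $a$; I would either adopt that construction or supply a genuinely new argument for a bounded, exactly-moment-vanishing superposition before this proof can stand.
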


An application of Theorem~\ref{thm:main-boolean}, for $m=1$ and $f(t)$ being
the piecewise constant function of Proposition~\ref{prop:ptfs-degree}, implies an
SQ lower bound of $n^{\Omega(k^2/\eps^2)}$.

Before we provide the formal proof, we sketch the proof of Proposition~\ref{prop:ptfs-degree}.
The hard function $f$ consists of $k/2$ intervals with the same carefully chosen length; 
we split each interval in half and we let $f=0$ in the first half, and $f=1$ in the second half.
We construct a distribution $D$ that puts almost all of its mass in the first half of each interval, 
matches the first $d$ moments with the standard Gaussian, and $D(x)\leq 2 \phi(x)$ for all $\x\in \R$.
Then, by construction $\E_{x \sim \normalone}[f(x)]$ is much larger than the same expectation under $D$.
We show that, in fact, this difference bounds from below the error of any degree-$d$
polynomial approximation to the function $f$.

The main technical lemma we establish in this context is the following:
\begin{lemma} \label{lem:matching-moments-distr}
There exists a univariate distribution $D$ that (i) matches its first $d$ moments with $\normalone$,
(ii) the pdf of $D$ is at most $2$ times the pdf of $\normalone$ pointwise in $\R$,
and (iii) for some $\alpha = \Theta(1/\sqrt{d})$ it holds that
$\pr[(X \mod a) \in (a/2,a)] = 2^{-\Omega(d)}$.
\end{lemma}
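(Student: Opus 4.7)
The plan is to construct $D$ as a small perturbation of the ``comb'' density
$D_0(x) = 2\phi(x)\,\mathbf{1}_{S_1}(x)$, where $S_1 = \{x \in \R : x\bmod a \in (0,a/2)\}$ and $a = c/\sqrt{d}$ for a sufficiently small constant $c>0$.  By construction $D_0(x)\in[0,2\phi(x)]$ pointwise, $D_0$ vanishes on the ``second halves'' $S_2 = \R\setminus S_1$, and $\int D_0\,\d x\approx 1$.  The rest of the proof amounts to showing that (a) the first $d$ moments of $D_0$ already match those of $\normalone$ to within an exponentially small error, and (b) one can correct this residual error by adding a small perturbation supported on $S_1$, without violating $0\le D\le 2\phi$.

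\textbf{Key estimate via Hermite--Fourier analysis.}  The periodic function $g_0(x) \eqdef \mathbf{1}[x\bmod a\in(0,a/2)] - \tfrac12$ has the Fourier series $g_0(x) = \sum_{k\text{ odd}}(\pi i k)^{-1}\,e^{2\pi i k x/a}$.  Using the identity
\[
\int_{\R} x^j e^{i\omega x}\phi(x)\,\d x \;=\; i^{\,j}\,H_j^{(p)}(\omega)\,e^{-\omega^2/2},
\]
where $H_j^{(p)}$ is the probabilists' Hermite polynomial, the $j$-th moment error of $D_0$ relative to $\normalone$ is
\[
e_j \;=\; 2\int x^j\phi(x)\,g_0(x)\,\d x \;=\; 2\sum_{k\text{ odd}}\frac{1}{\pi i k}\;i^{\,j} H_j^{(p)}\!\left(\tfrac{2\pi k}{a}\right)e^{-2\pi^2 k^2/a^2}.
\]
With $a = c/\sqrt{d}$, the exponent $e^{-2\pi^2 k^2/a^2} = e^{-2\pi^2 k^2 d/c^2}$ dominates the polynomial-in-$d$ growth of $H_j^{(p)}(2\pi k/a)$ for all $j\le d$, provided $c$ is small enough (depending only on the target exponent in $2^{-\Omega(d)}$).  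Summing the geometric series in $k$ and using the standard $L^{\infty}$ bound for Hermite functions then yields $|e_j|\le 2^{-\Omega(d)}$ uniformly for $j=0,1,\dots,d$.

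\textbf{Correcting the residual moment error.}  To finish, seek the correction in the form $\delta(x) = p(x)\,\phi(x)\,\mathbf{1}_{S_1}(x)$ for a degree-$d$ polynomial $p$, and choose the coefficients of $p$ (in the Hermite basis) to solve the linear system $\int x^j\delta(x)\,\d x = -e_j$ for $j=0,\dots,d$.  The corresponding moment matrix is a perturbation of $\tfrac12$ times the Gram matrix of Hermite polynomials under $\phi$ (which is the identity), with perturbation of size $2^{-\Omega(d)}$ by the estimate above, so it is well-conditioned.  Hence one can choose $\|p\|_{\infty}\le 2^{-\Omega(d)}$, which guarantees $D \eqdef D_0+\delta$ stays inside $[0,2\phi]$ and still satisfies $\Pr_D[X\in S_2] = \int_{S_2}\delta\,\d x = 2^{-\Omega(d)}$; after the trivial renormalization of its constant-order Hermite coefficient, $D$ becomes a bona fide probability density verifying (i)--(iii).

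\textbf{Main obstacle.}  The delicate point is the quantitative Hermite--Fourier bound:  the prefactor $H_j^{(p)}(2\pi k/a)$ can grow like $(2\pi\sqrt d/c)^{j}$ (plus lower-order terms), so the Gaussian decay in $e^{-2\pi^2 k^2/a^2}$ must be strong enough to overwhelm it simultaneously for every $j\le d$ and every $k$ odd.  This is precisely what forces the scale $a=\Theta(1/\sqrt d)$ with a sufficiently small implicit constant, and it is the step where one must be careful with the $L^\infty$ bounds for Hermite functions (or equivalently, with the Plancherel--Rotach-type asymptotics) rather than their cruder $L^2$ counterparts.
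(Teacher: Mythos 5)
Your key estimate is wrong as stated, and this is not a cosmetic issue. You claim $|e_j|\le 2^{-\Omega(d)}$ uniformly for $j\le d$, but for $j$ close to $d$ the dominant ($k=1$) term of your Fourier expansion has magnitude roughly $He_j(2\pi\sqrt d/c)\,e^{-2\pi^2 d/c^2}\approx(2\pi\sqrt d/c)^{\,j}e^{-2\pi^2 d/c^2}$, whose logarithm is $\tfrac j2\ln d+j\ln(2\pi/c)-2\pi^2d/c^2$; at $j=d$ the superlinear term $\tfrac d2\ln d$ beats the linear-in-$d$ Gaussian decay for every fixed $c$, and the $k\ge3$ terms are far too small to cancel it, so $|e_d|$ is in fact exponentially \emph{large}. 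What is true is that the error is $2^{-\Omega(d)}$ only after dividing by $\sqrt{j!}$, i.e., when measured in the orthonormal Hermite frame $He_j/\sqrt{j!}$ (equivalently, relative to the size of the Gaussian moments themselves); your ``main obstacle'' paragraph asserts precisely the absolute version of the bound, which is the false one. The plan can plausibly be repaired by carrying out both the error estimate and the correction consistently in the normalized Hermite basis, but that repair has to be done, and it changes the quantitative bookkeeping throughout.

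The correction step has a second genuine gap: with $\delta=p\,\phi\,\mathbf{1}_{S_1}$ you need $-2\le p\le 0$ on $S_1$ to keep $0\le D\le 2\phi$, and $S_1$ is an unbounded set, so no nonconstant polynomial can satisfy this; in particular the advertised bound $\|p\|_\infty\le 2^{-\Omega(d)}$ is impossible for any nonzero polynomial on $\R$. You would have to localize the correction to a window $|x|\le O(\sqrt d)$ (where Cram\'er-type bounds do convert small Hermite coefficients into pointwise smallness) and then re-absorb the truncation error into the moment system, none of which appears in the proposal. For contrast, the paper's proof avoids this analysis entirely: it constructs $D$ probabilistically as $\sum_{i=1}^{t}X_i/\sqrt t$ for a $d$-wise independent family of standard Gaussians produced by a rejection-sampling scheme whose acceptance probability is exactly $1/2$; exact moment matching follows from $d$-wise independence, the factor-$2$ density bound from the $1/2$ acceptance probability, and property (iii) from a Chernoff bound. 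Your route, if completed, would give an explicit analytic construction, but as written both the moment estimate and the pointwise-boundedness of the correction fail.
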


\new{We defer the proof of Lemma~\ref{lem:matching-moments-distr} to Section~\ref{ssec:proof-matching}
and show how it implies Proposition~\ref{prop:ptfs-degree} below.}

\begin{proof}[Proof of Proposition~\ref{prop:ptfs-degree}]
We can assume that $k$ is even.
Let $f$ be $1$ on the $k/2$ intervals $(ia+a/2,(i+1)a)$, for $i=0,\ldots, k/2-1$,
and zero elsewhere. Denote by $D$ the distribution of Lemma~\ref{lem:matching-moments-distr}.
From property (iii), we have that $\E_{x \sim D}[f(x)]= 2^{-\Omega(d)}k$.
On the other hand, assuming that $k=O(\sqrt{d})$, we have that
$\E_{x \sim \normalone}[f(x)] = \Omega(k/\sqrt{d})$. This is because the regions where $f$
is $1$ are contained in the interval $[0,\Theta(k/\sqrt{d})]\subseteq [0,O(1)]$,
where the pdf of the standard Gaussian is bounded below by some constant.

Let $D(x)$ and $\phi(x)$ denote the density on point $x$ of the distribution $D$ and $\normalone$ respectively.
For every polynomial $p:\R \to \R$ of degree at most $d$, it holds
\begin{align*}
\E_{x \sim \normalone}[f(x)] - \E_{x \sim D}[f(x)] & = \E_{x \sim \normalone}\left[f(x)\left(1-\frac{D(x)}{\phi(x)}\right)\right]
= \E_{x \sim \normalone}\left[(f(x)-p(x))\left(1-\frac{D(x)}{\phi(x)}\right)\right]\\ 
& \leq \E_{x \sim \normalone}[|f(x)-p(x)|] \;,
\end{align*}
where the second equality follows from the fact that $D$ matches its first $d$ moments with $\normalone$,
and in the last inequality we used that $0\leq D(x) \leq 2 \phi(x)$ for all $x \in \R$.
Thus, if $f$ could be $L^1$-approximated to error $\eps$ by a degree-$d$ polynomial, then
$\E_{x \sim \normalone}[f(x)] - \E_{x \sim D}[f(x)]$ would be at most $\eps$.
But we already showed that this is $\Omega(k/\sqrt{d})$, which implies that $d = \Omega(k^2/\eps^2)$.
\end{proof}

\subsection{Proof of Lemma~\ref{lem:matching-moments-distr}}\label{ssec:proof-matching}

First, we need the following lemma.
\begin{lemma}\label{lem:d-wise-ind-gaussians}
There is a $d$-wise independent family of $t=O(d)$ standard Gaussians $X_1,X_2,\ldots , X_t$
such that $\left(\sum_{i=1}^t X_i \right) \mod 1 \in [0,1/2]$ with probability $1-2^{-\Omega(d)}$.
Furthermore, such a distribution can be obtained by rejection sampling a set of independent standard Gaussians,
where a sample is rejected with probability $1/2$.
\end{lemma}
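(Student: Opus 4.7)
The plan is to take $t=Cd$ iid standard Gaussians $Y_1,\dots,Y_t$ for a sufficiently large absolute constant $C$ (e.g., $C=2$) and construct a bounded function $r\colon\R^t\to[0,1]$ with $\E[r]=1/2$ that satisfies (a) $\E[r\mid Y_T]=1/2$ for every $T\subseteq[t]$ of size $d$, and (b) the event $A:=\{\sum_i Y_i\bmod 1\in[0,1/2]\}$ occurs with conditional probability at least $1-2^{-\Omega(d)}$ given acceptance. Accepting each iid sample with probability $r(Y)$ produces the required family, since the accepted joint density is $2r(y)\prod_i\phi(y_i)$ and integrating out any $t-d$ coordinates yields $2\,\E[r\mid Y_T]\prod_{i\in T}\phi(y_i)=\prod_{i\in T}\phi(y_i)$: exactly the statement of $d$-wise independence of standard Gaussians.

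To design $r$, let $f(Y):=\mathbf{1}_A$ and consider its Hoeffding (ANOVA) decomposition $f=\sum_{T\subseteq[t]}f_T$, where each $f_T$ depends only on $Y_T$ and is orthogonal to every function of $Y_{T'}$ for $T'\subsetneq T$. Set $f_{\le d}:=\sum_{|T|\le d}f_T$. The orthogonality properties immediately give $\E[f\mid Y_T]=\E[f_{\le d}\mid Y_T]$ for every $|T|\le d$, so $f-f_{\le d}$ has vanishing conditional expectation given any $d$ coordinates. Accordingly, define
\[
r(Y):=\tfrac12+\frac{f(Y)-f_{\le d}(Y)}{1+2e}, \qquad e:=\|f_{\le d}-\tfrac12\|_\infty.
\]
Then $\E[r]=1/2$ (using $\E[f]=1/2$ by the symmetry of $\mathcal{N}(0,t)$ around $0$), property (a) holds by construction, and $r\in[0,1]$ follows from $\|f-\tfrac12\|_\infty=1/2$ together with the definition of $e$. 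A short calculation gives
\[
\Pr[A\mid\text{accept}]=2\E[rf]=\tfrac12+\frac{1-2\|f_{\le d}\|_2^2}{1+2e}=1-e+O(e^2),
\]
where $\|f_{\le d}\|_2^2\le\tfrac14+e^2$ (since $\E[f_{\le d}]=1/2$ and $\|\cdot\|_2\le\|\cdot\|_\infty$). So it suffices to prove $e=2^{-\Omega(d)}$.

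This $L^\infty$ bound is the main obstacle. The approach combines the Fourier series of $f$ on the circle with the tensor-product structure of iid Gaussians. Expanding $f(s)=\tfrac12+\sum_{m\text{ odd}}c_m e^{2\pi i ms}$ with $|c_m|\le 1/(\pi|m|)$, substituting $s=\sum_i Y_i$, and writing $e^{2\pi i mY_i}=\mu_m+Z_i^{(m)}$ with $\mu_m:=e^{-2\pi^2 m^2}$ and $|Z_i^{(m)}|\le 2$, a direct expansion identifies the ANOVA pieces as
\[
f_T=\sum_{m\text{ odd}}c_m\,\mu_m^{\,t-|T|}\prod_{i\in T}Z_i^{(m)}\qquad(|T|\ge 1),
\]
giving $\|f_T\|_\infty\le O\bigl(2^{|T|}e^{-2\pi^2(t-|T|)}\bigr)$ (only $|m|=1$ matters, by Gaussian damping). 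Summing over $1\le|T|\le d$ yields $e\le e^{-2\pi^2 t}\sum_{k=1}^d\binom{t}{k}(2e^{2\pi^2})^k$. The final step is a tight binomial-tail bound: for $t=Cd$ with $C$ large enough, the ratio of consecutive summands is $\gtrsim 2(C-1)e^{2\pi^2}\gg 1$, so the sum is geometric and dominated by its top term $\binom{Cd}{d}(2e^{2\pi^2})^d\le(2eCe^{2\pi^2})^d$. Combined with the prefactor, this gives $e\le\exp\{d[\ln(2eC)-2\pi^2(C-1)]\}=2^{-\Omega(d)}$ for any $C$ beyond an absolute threshold such as $C=2$. This calculation drives the proof; everything else is routine bookkeeping.
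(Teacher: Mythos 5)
Your proposal is correct, but it proves the lemma by a genuinely different route than the paper. The paper's proof is more elementary: it writes the standard Gaussian as a mixture $\normalone = c\,\mathcal{U}([0,1]) + (1-c)E$, draws $t > d/c$ independent Gaussians while recording which mixture component produced each, and rejects (i) with probability $1/2$ if at most $d$ coordinates came from the uniform component (an event of probability $2^{-\Omega(d)}$ by a Chernoff bound), and (ii) otherwise exactly when $\big(\sum_i Y_i\big) \bmod 1 \in (1/2,1]$. In case (ii), conditioning on any $d$ coordinates still leaves a surplus uniform coordinate, so the sum mod $1$ is exactly uniform and acceptance is exactly unbiased given those coordinates; this yields exact $d$-wise independence and overall acceptance probability exactly $1/2$ with no analytic estimates. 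You instead construct an explicit acceptance function $r = \tfrac12 + (f - f_{\le d})/(1+2e)$ from the Hoeffding/ANOVA decomposition of the target indicator; your marginalization argument for $d$-wise independence, the identity $\E[f\mid Y_T]=\E[f_{\le d}\mid Y_T]$ for $|T|\le d$, the bound $r\in[0,1]$, and the computation $\Pr[A\mid \mathrm{accept}] = \tfrac12 + (1-2\|f_{\le d}\|_2^2)/(1+2e) \ge 1-e-O(e^2)$ are all correct. The burden is thereby shifted to the $L^\infty$ bound $e = 2^{-\Omega(d)}$, which you get from the Fourier series of the indicator of $[0,1/2]$ on the circle and the decay $\E[e^{2\pi i m Y}] = e^{-2\pi^2 m^2}$; the identification of the ANOVA pieces and the binomial-sum estimate for $t=Cd$ check out (the interchange of the sum over $m$ with the product expansion converges absolutely for $|T| \le d < t$ thanks to the Gaussian damping, which deserves one explicit sentence in a full write-up). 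What each approach buys: the paper's mixture trick avoids harmonic analysis entirely and gives exact conditional uniformity, so it is shorter and more robust; your argument produces an explicit acceptance rule with concrete exponential constants and adapts directly to other periodic target events, at the cost of the Fourier-analytic estimate.
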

\begin{proof}
The standard Gaussian distribution can be decomposed into a uniform component and a remaining term. 
That is, $\normalone = c \, \mathcal{U}([0,1]) + (1-c) E$, where $\mathcal{U}([0,1])$ is the uniform distribution in $[0,1]$, 
$E$ is another distribution, and $c>0$ is a constant. Let $t\in \N$ such that $t>d/c$. 
We generate this $d$-wise independent family $X_1,\ldots, X_t$ as follows.

First, we sample $Y_1,\ldots, Y_t$ independent standard Gaussians, 
writing each $Y_i$  either as a sample from $\mathcal{U}([0,1])$ or a sample from $E$. 
Then, two complementary cases are considered.
\begin{itemize}
\item[] \textbf{Case 1}. The number of $Y_i$'s that came from $\mathcal{U}([0,1])$ is at most $d$. 
In this case, the sample is rejected with probability $1/2$.
\item[] \textbf{Case 2}. Otherwise, the sample is rejected if and only if $(\sum_{i=1}^t Y_i) \mod 1 \in (1/2,1]$.
\end{itemize}
Let $X_1,\ldots, X_t$ be the output of this rejection sampling procedure. 
The probability that the sample is generated by the first case of the algorithm is exponentially small. 
To see this, define $Z_i \in \{0,1\}$  to be one if and only $Y_i$ is drawn from $\mathcal{U}([0,1])$. 
If $C_1$ denotes the event of being in Case 1, then by standard Chernoff bounds we have that
\begin{align*}
\pr[C_1] 
& = \pr\left[\sum_{i=1}^t Z_i \leq d \right] 
= \pr\left[\sum_{i=1}^t Z_i \leq  \E\left[\sum_{i=1}^t Z_i \right] \left(1- \left( 1 -\frac{d}{t c}\right) \right)\right] \\
&\leq \exp \left( - \frac{(1-d/(tc))^2 tc}{2} \right) = 2^{-\Omega(d)} \;,
\end{align*}
where we used that $t>d/c$. Therefore, the probability that 
$(\sum_{i=1}^t X_i) \mod 1 \in [0,1/2]$ is $1-2^{-\Omega(d)}$.

Moreover, the probability of accepting the sample is exactly $1/2$ independently of the $Y_i$'s. 
To see this, let $C_1,C_2=\overline{C_1}$ be the events of Case 1 and Case 2 being true respectively, 
and $A$ be the event of accepting the sample. For Case 1, we have $\pr[A|C_1]=1/2$. 
In Case 2, we know that at least one element is drawn from $\mathcal{U}([0,1])$, 
which means that the $(\sum_{i=1}^t X_i) \mod 1$ is going to be uniform in $[0,1]$. 
Thus, $\pr[A|C_2]=1/2$. Therefore, 
$\pr[C_1|A]=\pr[A|C_1]\pr[C_1]/\pr[A] = \pr[C_1]$ and 
$\pr[C_2|A]=\pr[A|C_2]\pr[C_2]/\pr[A] = \pr[C_2]$, 
i.e., accepting is independent of $C_1,C_2$, and thus independent of the sample itself. 
This means that the output $X_1,\ldots, X_t$ remains Gaussian.

For the $d$-wise independence of the variables $X_1,\ldots, X_t$, 
let $\mathcal{I}$ be an arbitrary set of at most $d$ indices from $\{1,\ldots,t\}$. 
We claim that $\{X_i\}_{i \in \mathcal{I}}$ are independent. 
Case 1 is trivial, since we accept independently of the values of the $Y_i$'s. 
For Case 2, note that in that case there are more than $d$ $Y_i$'s drawn from $\mathcal{U}([0,1])$. 
This means that there exists one $j \not\in \mathcal{I}$ such that $Y_j$ is uniform 
and forces the $(\sum_{i=1}^t X_i)$ to be uniform in $[0,1]$. 
Thus, the event $(\sum_{i=1}^t X_i) \in [0,1/2]$ is independent of $\{Y_i\}_{i \in \mathcal{I}}$, 
and therefore $\{X_i\}_{i \in \mathcal{I}}$ is a set of independent random variables.
\end{proof}

\begin{proof}[Proof of Lemma~\ref{lem:matching-moments-distr}]
Consider the random variable $X = \sum_{i=1}^t X_i/ \sqrt{t}$ 
for the $X_i$'s of Lemma~\ref{lem:d-wise-ind-gaussians}. 
For (i), note that the $d$-th moment involves the expectation of at most $d$ of the $X_i$'s, 
which are independent. Note that (ii) holds because the distribution of $X$ 
puts almost all of its mass on half of the real line, and (iii) follows from our scaling of $1/\sqrt{t}$.
\end{proof}

\subsection{Intersections of Halfspaces: Degree Lower Bound via Gaussian Noise Sensitivity} \label{ssec:inters}

An intersection of $k$ halfspaces on $\R^n$ is any function $f:\R^n \to \{\pm1\}$ such that there exist $k$
LTFs $h_i:\R^n \to \{\pm1\}$, $i \in [k]$, such that $f(\bx) = 1$ if and only if $h_i(\bx)=1$ for all $i \in [k]$.

The $L^1$-regression algorithm~\cite{KKMS:08} is known to be an agnostic learner for
intersection of $k$ halfspaces on $\R^n$ under Gaussian marginals with complexity $n^{O((\log k)/\eps^4)}$.
This upper bound uses the known tight upper bound of $O(\sqrt{\eps\log k })$
on the Gaussian noise sensitivity of this concept class~\cite{KOS:08},
which implies an upper bound of  $O(\log k/\eps^4)$ on the $L^1$ polynomial $\eps$-degree.
This degree upper bound is not known to be optimal (in fact, it is provably suboptimal for $k=1$)
and it is a plausible conjecture that the right answer is $\Theta(\sqrt{\log k}/\eps^2)$.
Here we prove a lower bound of $\tilde{\Omega}(\sqrt{\log k}/\eps)$, which applies even for $k$-dimensional
functions.

\begin{theorem}[Degree Lower Bound for Intersections of Halfspaces] \label{thm:inters-degree}
There exists an intersection of $k$ halfspaces $f$ on $\R^k$ such that the following holds:
Any degree-$d$ polynomial $p : \R^k \to \R$ that satisfies $\snorm{1}{f-p} < \eps$
must have $d=\tilde{\Omega}(\sqrt{\log k}/\eps)$.
\end{theorem}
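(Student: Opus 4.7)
The plan is to exhibit an intersection of $k$ halfspaces on $\R^k$ whose Gaussian Noise Sensitivity is large at an appropriately small noise scale, and then invoke the first bound of Theorem~\ref{thm:bound-l1-by-gns} to convert this into an $L^1$-approximation degree lower bound. Concretely, I would take $f:\R^k\to\{\pm1\}$ defined by $f(\bx)=1$ iff $x_i\leq T$ for all $i\in[k]$---the intersection of the $k$ coordinate halfspaces $\{x_i\leq T\}$---with $T$ chosen so that $\Phi(T)=2^{-1/k}$, so that $\pr_{\bx\sim\normald{k}}[f(\bx)=1]=1/2$. Standard Mills' ratio estimates then yield $T=\Theta(\sqrt{\log k})$ and $\phi(T)=\Theta(\sqrt{\log k}/k)$.

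The crucial step is a GNS lower bound $\gns_\rho(f)=\Omega(\sqrt{\rho\log k})$, valid when $\rho\log k=o(1)$. Let $(\bx,\by)$ be a $(1-\rho)$-correlated Gaussian pair, so that each $y_i\mid x_i\sim\mathcal{N}((1-\rho)x_i,\,1-(1-\rho)^2)$, which for small $\rho$ is essentially $\mathcal{N}(x_i,2\rho)$. Consider the ``boundary event'' that some coordinate $x_{i^\ast}$ lies in $[T-\sqrt{2\rho},T]$ while $x_j\leq T$ for every $j\neq i^\ast$. On this event $f(\bx)=1$; moreover $\pr[y_{i^\ast}>T\mid\bx]=\Omega(1)$, forcing $f(\by)=-1$ with constant probability. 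By coordinate independence, for each fixed $i^\ast$ this event has probability $\Theta(\phi(T)\sqrt{\rho})\cdot\Phi(T)^{k-1}=\Theta(\sqrt{\rho\log k}/k)$, so summing over $i^\ast$ gives a first-moment contribution of $\Theta(\sqrt{\rho\log k})$. The pairwise correction in inclusion--exclusion is $\binom{k}{2}\cdot\Theta(\phi(T)^2\rho)=\Theta(\rho\log k)$, which is of lower order once $\rho\log k=o(1)$. Thus $\gns_\rho(f)\geq\Omega(\sqrt{\rho\log k})$ in this regime.

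Applying Theorem~\ref{thm:bound-l1-by-gns} with $\rho=(\log d/d)^2$, for any degree-$d$ polynomial $p:\R^k\to\R$,
\[
\snorm{1}{f-p}\;\geq\;\Omega\!\left(\tfrac{1}{\log d}\right)\cdot\gns_{(\log d/d)^2}(f)\;\geq\;\Omega\!\left(\tfrac{1}{\log d}\right)\cdot\tfrac{\log d\cdot\sqrt{\log k}}{d}\;=\;\Omega\!\left(\tfrac{\sqrt{\log k}}{d}\right).
\]
Demanding $\snorm{1}{f-p}<\eps$ then forces $d=\Omega(\sqrt{\log k}/\eps)$, which is $\tilde\Omega(\sqrt{\log k}/\eps)$ as claimed.

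The main obstacle I anticipate is the quantitative analysis of the GNS in step 2. One must (i) justify the small-$\rho$ Gaussian approximation of the conditional law of $y_{i^\ast}$ with explicit constants (including the slight drift $(1-\rho)x_{i^\ast}$ instead of $x_{i^\ast}$), (ii) ensure the pairwise inclusion--exclusion term is genuinely dominated by the first-order sum, which requires $\rho\log k=o(1)$, and (iii) verify that this regime is self-consistent with the derived lower bound on $d$---it holds whenever $1/\eps\gg\log d$, a mild condition absorbed by the $\tilde\Omega(\cdot)$ notation. The remaining ingredients, namely the Mills'-ratio asymptotics for $T$ and $\phi(T)$ and the union-bound/inclusion--exclusion computations, are entirely standard.
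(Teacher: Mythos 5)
Your proposal is correct and follows essentially the same route as the paper: the paper's Lemma~\ref{lem:gns-lb-inters} uses exactly this construction (an intersection of $k$ axis-aligned halfspaces with threshold $\Theta(\sqrt{\log k})$) and proves $\gns_{\rho}(f)=\Omega(\sqrt{\rho\log k})$ via the single-halfspace noise-sensitivity bound plus a Taylor/inclusion computation, which is the same first-moment-plus-Bonferroni calculation you sketch, and then combines it with the first bound of Theorem~\ref{thm:bound-l1-by-gns} at $\rho=(\log d/d)^2$ exactly as you do. The regime caveat $\rho\log k=O(1)$ that you flag is likewise left implicit in the paper and absorbed into the $\tilde\Omega(\cdot)$.
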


Theorem~\ref{thm:inters-degree} combined with Theorem~\ref{thm:main-boolean},
applied for $m=k$ and $f$ being the function from Theorem~\ref{thm:inters-degree},
implies that any SQ algorithm that agnostically learns
intersections of $k$ halfspaces on $\R^n$ under the Gaussian distribution
must have complexity at least $n^{\tilde{\Omega}(\sqrt{\log k}/\eps)}$.

To prove Theorem~\ref{thm:inters-degree}, we make essential use
of our structural result, Theorem~\ref{thm:bound-l1-by-gns}, combined with the following
tight lower bound on the Gaussian noise sensitivity of a well-chosen family 
of intersection of halfspaces (see Appendix~\ref{ssec:gns-lb-inters} for the proof).

\begin{lemma}\label{lem:gns-lb-inters}
There exists an intersection of $k$ halfspaces on $\R^k$, $f: \R^k \to \{\pm 1\}$, such that
$\gns_{\eps} (f) = \Omega(\sqrt{\eps \log k})$.
\end{lemma}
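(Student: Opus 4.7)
}
The plan is to take the canonical ``box'' construction and choose the threshold so that the number of halfspaces is just large enough to push the Gaussian surface area up to $\sqrt{\log k}$. Concretely, I would define $f(\bx) = +1$ iff $x_i \le \theta$ for every $i\in[k]$, where $\theta>0$ is chosen so that $\Phi(\theta)^k = 1/2$, i.e.\ $1-\Phi(\theta) = \Theta(1/k)$. By Mills' ratio this gives $\theta = \Theta(\sqrt{\log k})$ and, crucially, $\phi(\theta) = \theta\,(1-\Phi(\theta))(1+o(1)) = \Theta(\sqrt{\log k}/k)$. This $f$ is an intersection of $k$ axis-aligned halfspaces by construction.

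Next I would exploit the product structure. Writing $A = \{\bx : x_i\le\theta\ \forall i\}$ and noting that the pairs $(x_i,y_i)$ are independent across $i$ with each pair being a $(1-\eps)$-correlated standard bivariate Gaussian, I get $\gns_\eps(f) = 2\bigl(\Phi(\theta)^k - \prod_i \Pr[x_i\le\theta,y_i\le\theta]\bigr) = 2\bigl(\Phi(\theta)^k - (\Phi(\theta)-\delta)^k\bigr)$, where $\delta := \Pr[x_i>\theta,\,y_i\le\theta]$ and I used the symmetric fact $\Pr[x_i\le\theta,y_i>\theta]=\delta$. Using $(\Phi(\theta)-\delta)^k = \Phi(\theta)^k(1-\delta/\Phi(\theta))^k$ together with the identity $\Phi(\theta)^k=1/2$ and $\Phi(\theta)=1-O(1/k)$, this reduces the problem to proving the pointwise bound $\delta = \Omega(\sqrt{\eps}\,\phi(\theta))$, since then $k\delta/\Phi(\theta) = \Omega(\sqrt{\eps}\,k\phi(\theta)) = \Omega(\sqrt{\eps\log k})$, and the elementary inequality $1-e^{-t}\ge t/2$ for $t\in[0,1]$ yields $\gns_\eps(f) \ge \Omega(\sqrt{\eps\log k})$ in the meaningful regime $\eps = O(1/\log k)$.

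The main work is the single-coordinate lower bound $\delta \ge c\sqrt{\eps}\,\phi(\theta)$, which I expect to be the main obstacle (everything else is routine). For this I would write $y = (1-\eps)x + s\,z$ with $z\sim\normalone$ independent of $x$ and $s=\sqrt{2\eps-\eps^2}=\Theta(\sqrt{\eps})$, and parametrize $x = \theta + u$ for $u\ge 0$. The condition $y\le\theta$ translates to $z \le (\eps\theta - u(1-\eps))/s$. Restricting to $u\in[0, \eps\theta+s]$ keeps this threshold at least $-O(1)$, so the conditional probability on $z$ is $\Omega(1)$. Since $\eps\theta+s = \Theta(\sqrt{\eps})$ in the regime $\eps\theta^2\le 1$ (i.e.\ $\eps = O(1/\log k)$) and the density $\phi(\theta+u) \ge \phi(\theta)\,e^{-\theta u - u^2/2} = \Omega(\phi(\theta))$ on this interval, integrating gives $\delta \ge \int_0^{c\sqrt{\eps}} \phi(\theta+u)\cdot \Omega(1)\,du = \Omega(\sqrt{\eps}\,\phi(\theta))$, as needed.

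Finally, the two regimes are handled together: for $\eps\le c/\log k$ the calculation above yields $\gns_\eps(f)=\Omega(\sqrt{\eps\log k})$ directly, while for larger $\eps$ monotonicity of $\gns$ in $\eps$ combined with the previous case at $\eps' = c/\log k$ gives $\gns_\eps(f) \ge \gns_{\eps'}(f) = \Omega(1)$, which matches the asserted bound up to the trivial cap $\gns\le 1$. I expect the orthant computation to require a careful but elementary bivariate Gaussian argument; the rest is bookkeeping.
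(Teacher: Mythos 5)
Your proposal is correct and follows essentially the same route as the paper: the same intersection of $k$ axis-aligned halfspaces with threshold $\theta=\Theta(\sqrt{\log k})$ chosen so each coordinate's tail has mass $\Theta(1/k)$, and the same product-structure computation reducing $\gns_{\eps}(f)$ to a single-coordinate crossing probability, finished by an expansion of $(1-1/k-\cdot)^k$. The only difference is that you prove the per-coordinate bound $\delta=\Omega(\sqrt{\eps}\,\phi(\theta))=\Omega(\sqrt{\eps\log k}/k)$ by a direct bivariate Gaussian calculation (and treat the regime $\eps=O(1/\log k)$ and monotonicity explicitly), whereas the paper cites a known Gaussian noise sensitivity lower bound for a single halfspace.
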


\subsection{Proof of Theorem~\ref{thm:bound-l1-by-gns}} \label{ssec:structural-gns}

We require the following proposition.

\begin{proposition}\label{prop:bound-l1}
Let $p(\theta)$ be a degree-$d$ polynomial on the circle, 
i.e., a degree at most $d$ polynomial in $\sin\theta$ and $\cos\theta$, 
and let $B(\theta)$ be a Boolean-valued function that is periodic modulo $2\pi$. 
Then, for $t$ being a sufficiently small multiple of $\log\,d/d$, it holds
\begin{equation*}
\frac{1}{2\pi}\int_{0}^{2\pi} |p(\theta)-B(\theta)|\d\theta =
\tilde \Omega(1/\log d)\pr_{\phi \sim \mathcal{U}([0,2\pi])}[B(\phi-t)\neq B(\phi+t)]\;.
\end{equation*}
\end{proposition}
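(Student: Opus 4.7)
The plan is to count the sign-changes of $B$ on the circle and show that each one forces any degree-$d$ trigonometric polynomial $p$ to incur a local $L^1$ error of order $1/d$.

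\paragraph{Counting step.} Let $N$ denote the number of sign-changes of $B$ on $[0,2\pi)$. The event $B(\phi-t)\neq B(\phi+t)$ occurs only when the interval $[\phi-t,\phi+t]$ contains an (odd) number of such sign-changes, so a union bound over transitions yields
\[
\mu := \Pr_{\phi\sim\mathcal{U}([0,2\pi])}[B(\phi-t)\neq B(\phi+t)]\ \leq\ \frac{1}{2\pi}\cdot (2t)\cdot N\ =\ \frac{Nt}{\pi},
\]
hence $N\geq \pi\mu/t$. This step does not use $p$.

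\paragraph{Local lower bound.} The main technical claim is that $\|p-B\|_1 \geq \tilde\Omega(N/d)$. The plan rests on two classical tools for trigonometric polynomials of degree $d$: Bernstein's inequality $\|p^{(k)}\|_\infty \leq d^k\|p\|_\infty$ and Nikolskii's inequality $\|p\|_\infty \leq O(d)\|p\|_1$. Set $\delta:=\|p-B\|_1$ (assuming WLOG $\delta$ is small, else the claim is trivial), and define the ``good set'' $A:=\{\theta:|p(\theta)-B(\theta)|\leq 1/4\}$; Markov gives $|A^c|/(2\pi)\leq 4\delta$, so on $A$ the polynomial $p$ lies within $1/4$ of $B\in\{\pm 1\}$. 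Near a sign-change of $B$ at $\theta_0$ with $\theta_0\pm \eta \in A$, the polynomial $p$ must transition from $\approx -1$ to $\approx +1$; Bernstein then forces this transition to occupy an interval of length $\Omega(1/d)$, and on a constant fraction of this interval $|p-B|\geq 1/2$. Summing over transitions yields the claimed bound $\|p-B\|_1 \geq \tilde\Omega(N/d)$.

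\paragraph{Combining.} For $t$ a sufficiently small multiple of $\log d/d$, so that $td=\Theta(\log d)$,
\[
\|p-B\|_1 \ \geq\ \tilde\Omega(N/d)\ \geq\ \tilde\Omega\!\left(\frac{\mu}{td}\right)\ =\ \tilde\Omega\!\left(\frac{\mu}{\log d}\right),
\]
which is the desired inequality.

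\paragraph{Main obstacle.} The hard step is the local lower bound. Combining Nikolskii ($\|p\|_\infty = O(d\|p\|_1)$) with Bernstein naively gives only $\|p'\|_\infty = O(d^2)$, which forces each transition interval to have length merely $\Omega(1/d^2)$ and would yield the weaker bound $\|p-B\|_1 \gtrsim \mu/(d\log d)$. To recover the claimed $\mu/\log d$, one must refine the analysis by separating out the ``spikes'' of $p$: on points where $|p|\gg 1$ the error $|p-B|$ is already large, so such spikes contribute $\Omega(1/d)$ directly; and on the complementary bulk one effectively has $\|p\|_\infty=O(1)$, giving the sharper $\|p'\|_\infty=O(d)$ and thus a transition length of $\Omega(1/d)$ per sign-change of $B$. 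Care is also needed for clusters of sign-changes at scale $\ll 1/d$, but these are benign because they reduce $\mu$ through cancellation at the rate required by the bound.
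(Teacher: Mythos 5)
There is a genuine gap, and it sits exactly where you flag the ``main obstacle.'' Your chain is $\mu \leq Nt/\pi$ followed by $\|p-B\|_1 \geq \tilde\Omega(N/d)$, but the second inequality is false for a general Boolean $B$: take $B=+1$ except on $M$ flipped intervals each of width $w \ll 1/d$ (say $w = 2^{-d}$, $M = d^{10}$). Then $N = 2M$ is huge while $p \equiv 1$ already achieves $\|p-B\|_1 \approx Mw/\pi$, which is tiny, so no per-transition cost of order $1/d$ (or even $1/d^2$) can hold. The remark that clusters ``reduce $\mu$ through cancellation at the rate required'' is not a benign technicality — it is the entire content of the proposition: one must lower bound the error directly by $\mu$ (which only counts flips visible at scale $t$), and routing through the raw transition count $N$ cannot work, since $N$ and $\mu$ are not comparable in the needed direction. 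Note also that for an arbitrary measurable Boolean $B$ (which is what the application to $\gns$ requires, since $B$ is the restriction of an arbitrary Boolean $f$ to a random circle), $N$ need not even be finite or well defined. Separately, your local bound needs $\|p'\|_\infty = O(d)$, but only $\|p\|_1 = O(1)$ may be assumed, so Bernstein plus Nikolskii gives only $O(d^2)$; the proposed spike/bulk splitting does not rescue this, because Bernstein is a global inequality and Remez-type substitutes blow up like $e^{O(d\delta)}$, which is useless in the relevant regime $\delta \lesssim 1/\log d$.

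The paper's proof avoids both issues by never counting transitions. Fixing $\phi$, it places $k+1$ Chebyshev nodes $z_m = t\cos(\pi m/k)+\phi$ in the window $[\phi-t,\phi+t]$ with $k = \Theta(\log d)$, interpolates $p$ there by a degree-$k$ polynomial $q$, and uses iterated Rolle together with the bound $|p^{(k)}| = O(d)^k$ (which follows from $\|p\|_1 = O(1)$ via the Fourier expansion, with $k\sim\log d$ taming the loss) to show the leading coefficient of $q$ is small; the factor $t^k$ with $t$ a small multiple of $\log d/d$ is what beats $O(d)^k$. A roots-of-unity filtering then expresses this leading coefficient as an alternating sum of the values $p(z_{|m|})$, and a parity argument shows that whenever $B(\phi-t)\neq B(\phi+t)$ one must have $\sum_m |p(z_{|m|})-B(z_{|m|})| \geq 1$. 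Averaging over $\phi$ converts this pointwise statement into the $L^1$ bound with the $1/k = \Omega(1/\log d)$ loss, for arbitrary measurable $B$ and with no structural assumption on its set of sign changes.
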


\begin{proof}
We can assume that $\frac{1}{2\pi}\int_{0}^{2\pi} |p(\theta)|\d\theta$ is at most $2$, 
since otherwise the $ \frac{1}{2\pi}\int_{0}^{2\pi} |p(\theta)-B(\theta)|\d\theta $ is at least $1$. 
Let $k$ be an odd integer proportional to $\log d$.
We start with the following technical claim.
\begin{claim}\label{clm:bounded-derivative} For any $\theta\in [0,2\pi]$, it holds $|p^{(k)}(\theta)| = O(d)^k$.
\end{claim}

\begin{proof}
Using $\cos\theta=\left(e^{i\theta}+e^{-i\theta}\right)/2$ and $\sin\theta=\left(e^{i\theta}-e^{-i\theta}\right)/2$, 
we write $p(\theta) = \sum_{n=-\infty}^\infty a_n e^{n i \theta}$, for some coefficients $a_n$, where
$ a_n = \frac{1}{2\pi}\int_0^{2\pi} p(\phi)e^{-n i \theta} \d\phi$.
Since $p$ has degree at most $d$, it holds that $a_n=0$, for all $n>d$ and $n<-d$.  Therefore, we have that
$p(\theta) = \sum_{n=-d}^d \frac{1}{2\pi} \int_0^{2\pi} p(\phi)e^{n i (\theta-\phi)} \d\phi$.
Taking the $k$-{th} derivative (using Leibniz's rule) gives
\[
    p^{(k)}(\theta) = \sum_{n=-d}^d \frac{1}{2\pi} \int_0^{2\pi} p(\phi)(n i)^k e^{n i(\theta-\phi)} \d\phi \;.
\]
This implies that
\[
    |p^{(k)}(\theta)| \leq \sum_{n=-d}^d \frac{1}{2\pi} \int_0^{2\pi} |p(\phi)|n^k \d\phi \leq 2 \sum_{n=-d}^d n^k = O(d^{k+1}) \,.
\]
Moreover, $k$ is proportional to $\log d$, thus $|p^{(k)}(\theta)| = O(d)^k$, for all $\theta\in [0,2\pi]$.
\end{proof}

We next pick $t$ to be a small multiple of $\log\, d/d$ and $\phi\in[0,2\pi]$.
Let $z_m = t\cos(\pi m/k)+\phi$, for $m=0,1,\ldots,k$, and
let $q(z)=\sum_{j=0}^k c_j z^j$ be the unique degree-$k$ polynomial such that
$q(z_m)=p(z_m)$, for $m=0,1,\ldots,k$.
Observe that $q-p$ has $k+1$ zeroes.
Therefore, iterating Rolle's theorem, we obtain
that there is a point $\phi-t\leq z \leq \phi+t$ such that $p^{(k)}(z)=q^{(k)}(z)$,
and thus $|q^{(k)}(z)| = O(d)^k$, or equivalently $c_k=2^kO(d/k)^k$.

Let $R(\theta)=q(t\cos \theta +\phi)$.
For some constants $b_n$ (which depend on $t$ and $\phi$),
we have that $R(\theta) = \sum_{n=-k}^k b_n e^{n i \theta}$.
Since $R(\theta)$ is an even function, its Fourier coefficients are real numbers.
The following claim provides an upper bound on the coefficient $b_k$.

\begin{claim}\label{clm:bounded-coef}
It holds that  $|b_k| \leq 1/(4k)$.
\end{claim}

\begin{proof}
Note that $b_k = (1/2\pi)\int_0^{2\pi} R( \theta) e^{-k i \theta} \d \theta$. Using the orthogonality
of the trigonometric polynomials, only terms containing $\cos (k \theta)$ are non-zero.
Moreover, $\cos^k\theta = \sum_{j=0}^k u_j \cos(j \theta)$ with $u_k = 2^{-k+1}$, 
which can be verified using the identity $\cos \theta = (e^{i\theta} + e^{-i\theta})/2$. 
Therefore, we have that
\begin{equation*}
b_k = \frac{1}{2 \pi}\int_0^{2\pi}R(\theta) e^{-k i \theta} \d \theta =  
\frac{1}{2 \pi}\int_0^{2\pi} c_k u_k t^k \cos(k \theta) e^{-k i \theta} \d \theta 
= c_k u_k \frac{t^k}{2 \pi}  \pi = \left( \frac{t}{2}\right)^k c_k \;,
\end{equation*}
where we used that $R(\theta)=\sum_{j=0}^k c_j (t \cos\theta + \phi)^j$. 
Since $c_k=2^kO(d/k)^k$, we have that $b_k = O(td/k)^k$; 
this is at most $1/(4k)$, if $t$ is a small enough multiple of $\log\, d/d$.
\end{proof}

On the other hand, by doing a filtering using the $(2k)$-th roots of unity,
we get  that $\sum_{m=0}^{2k-1} R(2\pi m/(2k))=2k b_k$,
and this is equivalent to $\sum_{m=-k+1}^k q(t\cos (\pi m/k)+\phi) (-1)^m=2kb_k$.
Therefore,
\begin{align*}
b_k & = \frac{1}{2k}\sum_{m=-k+1}^k q(t\cos (\pi m/k)+\phi) (-1)^m = \frac{1}{2k}\sum_{m=-k+1}^k p(z_{|m|}) (-1)^m\\ 
& = \frac{1}{2k} \Big(\sum_{m=-k+1}^k (p(z_{|m|})-B(z_{|m|})) (-1)^m+\sum_{m=-k+1}^k B(z_{|m|}) (-1)^m+(B(\phi+t)-B(\phi-t))\Big)\;.
\end{align*}
Since $k-1$ is even and $B$ is Boolean,
$2\sum_{m=1}^{k-1} B(z_{m}) (-1)^m$ is a multiple of $4$.
If $B(\phi+t)\neq B(\phi-t)$, the reverse triangle inequality gives
$\left| B(\phi+t)-B(\phi-t)+2\sum_{m=1}^{k-1} B(z_{m}) (-1)^m \right| \geq 2$.
Therefore, in this case, we have that
$\frac{1}{4k} > |b_k| \geq \frac{1}{2k}\left(2 - \sum_{m=-k+1}^k |p(z_{|m|})-B(z_{|m|})| \right)$,
or in other words,
$$ \sum_{m=-k+1}^k |p(z_{|m|})-B(z_{|m|})| \geq \mathds{1}\{B(\phi+t)\neq B(\phi-t)\} \;.$$
Integrating this over $\phi$ from $0$ to $2\pi$ gives
$$\int_0^{2\pi} |p(\theta)-B(\theta)|d\theta \geq \frac{\pi}{k}\pr_{\phi \sim \mathcal{U}([0,2\pi])}[B(\phi-t)\neq B(\phi+t)]\;.$$
The result follows from our assumption that $k$ is proportional to $\log d$.
\end{proof}

Using Proposition~\ref{prop:bound-l1}, we can prove the main theorem of this section.

\begin{proof}[Proof of Theorem~\ref{thm:bound-l1-by-gns}]
The latter statement follows from the fact that
$\E_{\bx \sim \normald{n}}[|f(\bx)-p(\bx)|] \geq \E_{\bx \sim \normald{n}}[|f(\bx)-\sgn(p(\bx))|/2]$.
On the other hand, we can write
\begin{align*}
& \gns_\eps(f)-\gns_\eps(\sgn(p)) 
= \pr_{(\bx,\by)\sim \normald{d}^\eps}[f(\bx)\neq f(\by)]-\pr_{(\bx,\by)\sim \normald{d}^\eps}[\sgn(p(\bx))\neq \sgn(p(\by))]\\                                                                                                     & \leq \pr_{\bx \sim \normald{n}}[f(\bx)\neq \sgn(p(\bx))] + 
\pr_{\bx \sim \normald{n}}[f(\by)\neq \sgn(p(\by))] = 2\E_{\bx \sim \normald{n}}[|f(\bx)-\sgn(p(\bx))| \;.
\end{align*}
Combining these, we find that
$\E_{\bx \sim \normald{n}}[|f(\bx)-p(\bx)|] \geq \gns_\eps(f)/4 - \gns_\eps(\sgn(p))/4$.
The result then follows from noting that $\sgn(p)$ is a degree-$d$ PTF,
and therefore by~\cite{Kane:10} it holds that $\gns_\eps(\sgn(p))=O(d\sqrt{\eps})$.

For the first statement, let $\by$ and $\bz$ be independent Gaussians and let
$\bx(\phi) = \cos\phi\,\by+\sin\phi\,\bz$. Let $a$ be a sufficiently small multiple of $\log\, d/d$.
For any $\phi\in[0,2\pi]$, $\bx(\phi-a)$ and $\bx(\phi+a)$ are
$(1{-\delta})$-correlated Gaussian random variables, where $\delta=\Theta(\log d/d)^2$.
We have that
\begin{align*}
\E_{\bx \sim \normald{n}}[|f(\bx)-p(\bx)|]
 & = \E_{\phi\in \mathcal{U}([0,2\pi])}\left[\E_{\by,\bz \sim \normald{n}}[|f(\bx(\phi)) -p(\bx(\phi))|]\right]
= \E_{\by,\bz \sim \normald{n}}\left[\E_{\phi\in \mathcal{U}([0,2\pi])}[|f(\bx(\phi)) -p(\bx(\phi))|]\right]                      \\
 & \geq \Omega(1/\log(d))\E_{\by,\bz \sim \normald{n}}[\pr_{\phi\in\mathcal{U}([0,2\pi])}[f(\bx(\phi-a))\neq f(\bx(\phi+a))]] \;,
\end{align*}
where in the inequality we used Proposition~\ref{prop:bound-l1}.
Moreover, using Fubini's theorem, we have
\begin{align*}
& \E_{\bx \sim \normald{n}}[|f(\bx)-p(\bx)|]
\geq \Omega(1/\log(d))\E_{\phi\in\mathcal{U}([0,2\pi])}\left[\pr_{\by, \bz \sim \normald{n}}[f(\bx(\phi-a))\neq f(\bx(\phi+a))]\right] \\
 & = \Omega(1/\log(d))\E_{\phi\in\mathcal{U}([0,2\pi])}[\gns_\delta(f)] = \Omega(1/\log(d))\gns_\delta(f) 
 = \Omega(1/\log(d))\gns_{(\log(d)/d)^2}(f)\,.\quad \qedhere
\end{align*}
\end{proof}

 \section{Lower Bound for Real-Valued Functions} \label{sec:l2}

\subsection{CSQ Lower Bound: Proof of Theorem~\ref{them:real-val-lb-csq}}

To prove our CSQ lower bound, we need to find a hard function $g:\R^m \to \R$ 
that is uncorrelated with low-degree polynomials and, at the same time, is close to $f$ in the $L^2$-sense. 
Instead of using duality to establish the existence of such a function $g$,
 we let $g$ be the orthogonal component of the truncated Hermite expansion of $f$.

\begin{proof}[Proof of Theorem~\ref{them:real-val-lb-csq}]
Let an algorithm $\mathcal{A}$ that agnostically learns $\mathcal{C}$ up to $L^2$-error $\eps$.
Let $g(\x)=f(\x)-\sum_{i=0}^{d-1} f^{[i]}(\x)$, i.e., $g$ is the same as the function $f$ without the low-degree moments up to $d-1$. Note that $\snorm{2}{g}\geq\eps$. Let $C=2/(\eps \snorm{2}{g})$ 
and let $S$ be the set of nearly orthogonal matrices of Lemma~\ref{lem:near-orth-mat}. 
Consider the class $\mathcal{C}_g$ that consists of all functions from $\R^n$ to $\R$ of the form 
$G_{\vec V}(\bx) = C g(\vec{V} \bx)$, for any matrix $\vec{V} \in S$. 
Every $G_{\vec V} \in \mathcal{C}_g$ is orthogonal to all polynomials of degree less than $d$, 
and also $\snorm{2}{G_{\vec V}} = 2/ \eps$.
We feed $\mathcal{A}$ with samples $(\bx,G_{\vec V}(\bx))$, 
where $\bx \sim \normald{n}$, $\vec{V} \in S$. 
Let $\eps'>0$ be the accuracy parameter used with $\mathcal{A}$. 
Then, $\mathcal{A}$ returns a hypothesis $h$ satisfying
\begin{align} \label{eq:guaranteecsq}
\sqrt{\E_{\bx \sim \normald{n}}[(h(\bx)-G_{\vec V}(\bx))^2] }\leq \opt + \eps' \;.
\end{align}
For our choice of $C$, the optimal error becomes
\begin{align*}
\opt &\leq \sqrt{\E_{\bx \sim \normald{n}}[(f(\bx)-G_{\vec V}(\bx))^2] } = 
\sqrt{1 + C^2 \snorm{2}{g}^2 - 2 C \E_{\bx \sim \normald{m}}[f(\bx)g(\bx)]} \\
&\leq \sqrt{1+\frac{4}{\eps^2}-\frac{2\snorm{2}{g}}{\eps}}
\leq \sqrt{\frac{4}{\eps^2}- 1}
\leq \frac{2}{\eps}\sqrt{1-\frac{\eps^2}{4}} 
\leq \frac{2}{\eps} - \frac{\eps}{4}
\;,
\end{align*}
where in the second inequality we used that 
$2\E_{\bx \sim \normald{m}}[f(\bx)g(\bx)] \geq \snorm{2}{g}^2$. 
By choosing $\eps' = \eps/8$, Equation~\eqref{eq:guaranteecsq} becomes $\snorm{2}{h-G_{\vec{V}}} \leq  2/\eps - \eps/8$.

It remains to bound from above the pairwise correlation of the class $\mathcal{C}_g$. 
For any two different $\vec{U},\vec{V} \in S$, we have that
\begin{align*}
\E_{\bx \sim \normald{n}}[G_{\vec{U}}(\bx) G_{\vec{V}}(\bx)] 
& \leq C^2 \sum_{t=0}^\infty \snorm{2}{\vec{U} \vec{V}^\top}^t \E_{\bx \sim \normald{m}}[(g^{[t]}(\bx) )^2]
\leq C^2  \snorm{2}{\vec{U} \vec{V}^\top}^d \sum_{t =d}^{\infty}  \E_{\bx \sim \normald{m}}[(g^{[t]}(\bx) )^2]\\                                                                      & \leq 4\eps^{-2}\snorm{F}{\vec{U} \vec{V}^\top}^d
\leq \eps^{-2} n^{-\Omega(d)} \leq n^{-\Omega(d)} \;,
\end{align*}
where in the first inequality we used Lemma~\ref{lem:cor}, 
in the second inequality we used the fact that $g$ is uncorrelated with all polynomials of degree less than $d$, 
the third inequality follows from Parseval's identity and the fact that $\snorm{2}{g} C =2/\eps$, 
the next one follows from Lemma~\ref{lem:near-orth-mat}, 
and the last one from our assumption $\eps > n^{-c}$ for an appropriate constant $c$. 
As a note, we extend our class $\mathcal{C}_g$ to include the identically zero function, 
which does not increase the pairwise correlations. Using Lemma~\ref{lem:convert-SD-to-SDA} with $\gamma'=\gamma$, 
we have that $\mathrm{CSDA}_{\normald{n}}(\mathcal{C}_g, 2 \gamma) = 2^{n^{\Omega(1)}}$ 
for $\gamma =n^{-\Omega(d)}$. An application of Lemma~\ref{lem:gen-csq-lb} with $\eta=2/\eps$ concludes the proof.
\end{proof}

\subsection{SQ Lower Bound: Proof of Theorem~\ref{them:real-val-lb-sq}}

To prove lower bounds for the general SQ model, 
we require our hard function $g$ to be pointwise bounded. 
This allows us to define a learning problem with Boolean labels, 
for which we have SQ lower bounds ready to be used. 
Because of our $L^\infty$ constraint on $g$, the resulting lower bound 
is expressed in terms of the degrees of polynomials that approximate $f$ in $L^1$ rather than $L^2$ sense.

\noindent Our duality argument will now use the pair of dual norms $L^1,L^\infty$.

\begin{proposition} \label{prop:duality-real-sq}
Let $f \in L^2(\R^m)$ be such that for any degree at most $d-1$ polynomial $p:\R^m \to \R$, it holds $\snorm{1}{f-p} \geq  \eps $. Then, there exists a function $g:\R^m\to [-1,1]$ such that:
\begin{enumerate}
\item $\E_{x \sim \normald{m}}[f(\bx)g(\bx)] \geq \eps  $, and,
\item $\E_{\bx \sim \normald{m}}[P(\x)g(\x)]=0$, for any polynomial $P:\R^m \to \R$ with degree less than $d$.
\end{enumerate}
\end{proposition}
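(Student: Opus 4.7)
The plan is to mimic the LP duality argument used in the proof of Proposition~\ref{prop:duality1}, with two adjustments: the ``closeness'' constraint is now replaced by a correlation constraint $\E[fg] \geq \eps$, and there is no need to use the fact that $f$ is Boolean to rewrite $\|f-g\|_1$ as a correlation, so the setup is actually cleaner. Concretely, I would set up the following infinite-dimensional primal LP over the unknown function $g \in L^\infty(\R^m)$:
\begin{align*}
&-\E_{\bx \sim \normald{m}}[f(\bx) g(\bx)] + \eps \leq 0,\\
&\E_{\bx \sim \normald{m}}[P(\bx) g(\bx)] = 0 \quad \forall P \in \mathcal{P}^m_{d-1},\\
&|g(\bx)| \leq 1 \quad \forall \bx \in \R^m.
\end{align*}

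Following the transformation in the Boolean case, I would replace the pointwise $L^\infty$ constraint $|g| \leq 1$ by its dual formulation $\E[g h] - \|h\|_1 \leq 0$ for all $h \in L^1(\R^m)$; one direction is H\"older's inequality and the other is the standard characterization of the $L^\infty$ norm as a supremum. This rewritten LP has exactly the structure needed to apply the geometric Hahn--Banach--based duality statement from Appendix~\ref{app:duality} (i.e., the analog of Lemma~\ref{lem:informal-duality}), so feasibility of the primal is equivalent to infeasibility of the following dual program, whose variables are $\lambda \geq 0$, $h \in L^1(\R^m)$, and $P \in \mathcal{P}^m_{d-1}$:
\begin{align*}
&\|h\|_1 - \lambda\,\eps < 0,\\
&-\lambda f(\bx) + P(\bx) + h(\bx) = 0 \quad \forall \bx \in \R^m.
\end{align*}

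The final step is to rule out the dual. If a dual solution existed, necessarily $\lambda > 0$ (otherwise the first inequality gives $\|h\|_1 < 0$); by rescaling I may take $\lambda = 1$, so the equality constraint forces $h = f - P$, and the strict inequality becomes $\|f - P\|_1 < \eps$. This contradicts the hypothesis that no polynomial of degree at most $d-1$ approximates $f$ in $L^1$ to error less than $\eps$. Hence the primal is feasible, and any feasible $g$ satisfies both conclusions of the proposition.

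The only nontrivial point is justifying the strong duality step for this particular infinite LP; the integrability of $fg$ and $Pg$ is not an issue because $g \in L^\infty$ and $f, P \in L^2(\R^m, \normald{m})$, so Cauchy--Schwarz makes all the appearing expectations well-defined, and the constraint on $h$ lives in $L^1$ so the pairing $\E[gh]$ is also fine. The remaining abstract duality step is essentially identical to what the paper already carries out for Proposition~\ref{prop:duality1}, so it suffices to invoke the same functional-analytic machinery verbatim.
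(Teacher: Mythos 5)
Your proposal is correct and follows essentially the same route as the paper: the same primal system with the $L^\infty$ constraint rewritten via the $L^1$/$L^\infty$ dual-norm pairing, the same appeal to the Hahn--Banach-based duality result of Appendix~\ref{app:duality} (Corollary~\ref{lem:duality}), and the same refutation of the dual by normalizing $\lambda=1$ and deriving $\snorm{1}{f-P}<\eps$, contradicting the hypothesis. Your added remarks on $\lambda>0$ and integrability are fine but not substantively different from the paper's argument.
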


\begin{proof}
\noindent The function $g$ is a solution to the infinite system:
\begin{empheq}[left=(\ast)\empheqlbrace]{align}
&\E_{x \sim \normald{m}}[f(\bx)g(\bx)] \geq \eps  &\notag
\\& \E_{\bx \sim \normald{m}}[P(\x)g(\x)]  =0 &\forall P\in{\cal P}_{d-1}^m \notag
\\&\|g\|_\infty \leq 1 \notag
\end{empheq}
\noindent This is equivalent to the following LP:
\begin{empheq}[left=(\ast\ast)\empheqlbrace]{align}
&- \E_{x \sim \normald{m}}[f(\bx)g(\bx)] + \eps  \leq 0 &\notag
\\& \E_{\bx \sim \normald{m}}[P(\x)g(\x)]  =0 &\forall P&\in{\cal P}_{d-1}^m \notag
\\&\E_{x \sim \normald{m}}[g(\bx) h(\bx)]\leq \|h\|_1 &\forall h &\in L^1(\R^m)\notag
\end{empheq}

\noindent From Corollary~\ref{lem:duality}, the above LP is feasible unless the following is infeasible:
\begin{empheq}[left=(\ast\ast')\empheqlbrace]{align}\label{sys:dual-csq}
&\|h\|_1 - \lambda\eps  <0 \notag
\\ &h(\x)+P(\x)-\lambda f(\x) = 0, &\forall \bx \in \R^m \notag
\\ &\lambda\geq 0, h \in L^1(\R^m), P\in{\cal P}_{d-1}^m \notag
\end{empheq}
Let $(h,P,\lambda)$ be a solution to $(\ast\ast')$. Note that we can assume that $\lambda=1$ since all constraints are homogeneous. Then, the constraints become $h = f-P$ and
\begin{equation*}
\snorm{1}{f-P} < \eps \;,
\end{equation*}
which is a contradiction. Therefore, the original system $(\ast)$ is feasible.
\end{proof}

We conclude with the proof of the main theorem for this section.
\begin{proof}[Proof of Theorem~\ref{them:real-val-lb-sq}]
Suppose that we have such an agnostic learner $\mathcal{A}$. Let $g:\R^m \to [-1,1]$ be the function of Proposition~\ref{prop:duality-real-sq}, for a parameter $\eps'>0$ to be specified.
Let $\mathcal{D}_g$ be the family of distributions over $\R^n \times \{\pm 1\}$ from Definition~\ref{def:hard-family}.
We use $\mathcal{A}$ to solve the problem of distinguishing between a distribution from $\mathcal{D}_g$ and the distribution where the labels are drawn uniformly at random.
That is, we convert $\mathcal{A}$ into an algorithm for $\mathcal{B}(\mathcal{D}_g, \normald{n} \times \mathcal{U}(\{ \pm 1 \}))$, and the hardness result will follow from the hardness of that decision problem, as established by Proposition~\ref{prop:testing-lower-bound}.

Let $D'$ be a distribution that is either $D'=\normald{n}\times \mathcal{U}(\{ \pm 1 \}))$ or $D' \in \mathcal{D}_g$. 
We feed $\mathcal{A}$ a set of i.i.d.\ samples of the form $(\bx,Cy)$, 
where $(\bx,y)\sim D'$ and $C = 1/\E_{\bx \sim \normald{m}}[f(\bx)g(\bx)]$. 
Let $\eps'>0$ be the accuracy parameter used when running $\mathcal{A}$ and $h$ be the returned hypothesis. 
We have that
\begin{align} \label{eq:guarantee}
\sqrt{\E_{(\bx,y) \sim D'}[(h(\bx)-Cy)^2]} \leq \opt + \eps' \;.
\end{align}
If $D' \in \mathcal{D}_g$, for the optimal error we have that
\begin{align*}
\opt \leq  \sqrt{1 + C^2 - 2C \E_{\bx \sim \normald{m}}[f(\bx)g(\bx)] }
\leq \sqrt{C^2 - 1} =C \sqrt{1-1/C^2}
\leq C - 1/(2C) \;.
\end{align*}
If we choose $\eps' = 1/(4C)$, Equation~\eqref{eq:guarantee} becomes 
$\sqrt{\E_{(\bx,y) \sim D'}[(h(\bx)-Cy)^2]} \leq C - 1/(4C)$. 
On the other hand, we can write 
$\sqrt{\E_{(\bx,y) \sim D'}[(h(\bx)-Cy)^2]} \geq \sqrt{C^2 - 2 C \E_{\bx \sim \normald{m}}[h(\bx)y]}$. 
Combining these two, we obtain
\begin{align*}
2 C \E_{\bx \sim \normald{m}}[h(\bx)y] \geq C^2 - (C - 1/(4C))^2 \geq 1/3 \;,
\end{align*}
which gives that $\E_{\bx \sim \normald{m}}[h(\bx)y] \geq 1/(6C) = \E_{\bx \sim \normald{m}}[f(\bx)g(\bx)]/6 \geq \eps/6$ 
from Proposition~\ref{prop:duality-real-sq}.

Note that if $D' = \normald{n} \times \mathcal{U}(\{ \pm 1 \})$, then $\E_{(\bx,y) \sim D'}[h(\bx)y]=0$. 
Therefore, by performing a query of tolerance $\Omega(\eps)$ for the correlation of $h$ with the labels, 
we can distinguish between the two cases of our hypothesis testing problem. 
By Proposition~\ref{prop:testing-lower-bound}, this requires either $2^{n^{\Omega(1)}}$ queries 
or queries of tolerance $n^{-\Omega(d)}$.
\end{proof}

 \section{Applications for Classes of Real-Valued Functions}\label{app:real}

\subsection{ReLU Activation}
The class of Rectified Linear Unit (ReLU) functions consists of all functions 
of the form $\relu(\langle \bw, \bx \rangle)$, where $\bw \in \R^n$ is any vector 
with $\snorm{2}{\vec w}=1$ and $\relu: \R \to \R$ is defined as $\relu(t) = \max\{0, t\}$.

Upper and lower bounds for agnostically learning ReLUs were given in~\cite{GGK20, DKZ20}. 
\cite{DKZ20} established an SQ lower bound of $n^{\Omega(1/\eps^{c})}$, for some
constant $c>0$. This constant $c$ was not explicitly calculated in \cite{DKZ20},
but can be shown to be approximately $1/40$.
\cite{GGK20} gave an SQ lower bound of $n^{\Omega(1/\eps^{1/36})}$ for this problem.
We note that~\cite{GGK20} considered 
a correlational type of guarantee, i.e., finding a hypothesis whose correlation 
with the labels is within $\eps$ of the optimal, as opposed to $L^2$-error.  
For this correlational guarantee, the upper bound of~\cite{GGK20} is an 
$L^2$-regression algorithm with complexity $n^{O(\eps^{-4/3})}$, 
and the lower bound states that any SQ algorithm needs to perform queries with tolerance 
$\tau<n^{-\Omega(\eps^{-1/12})}$ or at least $2^{n^{\Omega(1)}}\eps$ queries. 
Furthermore,~\cite{GGK20} showed that any agnostic learner with the square loss guarantee 
can be run with increased accuracy to satisfy the correlational guarantee. 
This reduction costs a ``third root'' in the exponent, yielding an $n^{\Omega(\eps^{-1/36})}$ 
SQ lower bound for the square loss guarantee. As a note,~\cite{GGK20} 
assumes bounded labels. In this setting, agnostically learning within $L^2$-error $\opt+\eps$ 
is equivalent to agnostically learning in squared $L^2$-error $\opt+\eps'$, 
for $\eps'=\Theta(\eps)$.

\medskip

Given the context of prior work, we can now present our results.
To apply our generic lower bound theorems, we bound from below the degree of any polynomial 
that $\eps$-approximates the univariate ReLU function. 
This can be done by appealing to a known powerful theorem 
from the approximation theory literature by Ganburg~\cite{ganzburg2002limit,ganzburg2008limit}.
This result can be used to derive tight polynomial degree lower bounds for the $\relu$ function
and the sign function (see Appendix~\ref{ssec:ganz-lb}). 

Let $A_{\sigma} (f)_p = \inf_{g \in B_\sigma} \snorm{p}{f-g}$, where $B_\sigma$, $\sigma>0$ is the class of all entire functions of exponential type $\sigma$, i.e., the class consisting of every entire function $g$ such that for every $\eps>0$ there exists a $C$ for which $|g(z)| \leq C e^{\sigma(1+\eps)|z|}$.

\begin{fact}\label{fct:ganzburg}
For any function $f: \R \to \R$ of polynomial growth
\begin{equation*}
\lim_{n \to \infty} \left( \frac{b_n}{\sigma} \right)^{1/p} \inf_{p \in \mathcal{P}_n} \snorm{p}{f\left( \frac{b_n}{\sigma} x \right) - p(x)} = A_{\sigma} (f)_p \;,
\end{equation*}
where $b_n=2 \sqrt{n}$, $p \in [1,2]$ and $A_{\sigma} (f)_p$ is the error 
of the best approximation of $f$ by entire functions of exponential type $\sigma$ in $L^p(\R)$.
\end{fact}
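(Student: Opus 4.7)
The plan is to establish Ganzburg's limit theorem by separately proving matching asymptotic upper and lower bounds on the rescaled polynomial approximation error. The key intuition is that after the change of variables $y = (b_n/\sigma) x$, the infimum on the left becomes $\inf_{p \in \mathcal{P}_n} \snorm{p}{f(y) - p(\sigma y/b_n)}$, i.e., polynomial approximation of $f$ by polynomials whose natural scale on $\R$ is $b_n/\sigma = 2\sqrt{n}/\sigma$. Degree-$n$ polynomials at this scale should look asymptotically like entire functions of exponential type $\sigma$, with the factor $(b_n/\sigma)^{1/p}$ on the outside being exactly the Jacobian correction from the change of variables in $L^p$-norm.

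For the upper direction, I would fix $\delta > 0$ and choose an entire function $g \in B_\sigma$ with $\snorm{p}{f-g} \leq A_\sigma(f)_p + \delta$. Since $g$ has exponential type $\sigma$, its Taylor coefficients $a_k$ satisfy $|a_k| \leq C\sigma^k/k!$. Truncating at degree $n$ yields a polynomial $q_n$ whose tail error, on a disk of radius $b_n/\sigma$, is bounded by $\sum_{k>n} C(b_n)^k/k! = O((2\sqrt{n})^n/n!)$, which is $2^{-\Omega(n)}$ by Stirling. This is far smaller than the contribution from $f - g$, so after pulling back through the scaling, the rescaled infimum is at most $A_\sigma(f)_p + \delta + o(1)$; letting $\delta \to 0$ gives the upper bound.

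For the lower direction, take a near-optimal sequence $p_n \in \mathcal{P}_n$ and define $g_n(y) := p_n(\sigma y/b_n)$. A Markov--Bernstein inequality applied to $p_n$ at scale $b_n/\sigma$ controls $|g_n(z)|$ on disks of arbitrary radius, forcing $g_n$ to have exponential type at most $\sigma(1+o(1))$. I would then invoke a normal families argument (Montel's theorem together with tightness of the $L^p$-minimizing sequence) to extract a subsequence converging locally uniformly to a limit $g_\infty$ of exponential type at most $\sigma$. Fatou's lemma then yields $\snorm{p}{f - g_\infty} \leq \liminf_n (b_n/\sigma)^{1/p} \snorm{p}{f(b_n x/\sigma) - p_n(x)}$, and since $g_\infty \in B_\sigma$ this matches the claimed limit from below.

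The main obstacle will be the sharpness of the lower bound: pinning down that the extracted $g_\infty$ has exponential type exactly $\sigma$, rather than some slightly larger type, requires a sharp form of the Markov--Bernstein inequality applied uniformly along the sequence, paired with a careful tail estimate to prevent $L^p$-mass from escaping to infinity under the limit. A secondary subtlety is that $f$ is only of polynomial growth and may not itself lie in $L^p(\R)$, so all estimates must be phrased in terms of differences $f - q_n$ or $f - g$ rather than individual $L^p$-norms of $f$; this forces one to track cancellations rather than absolute bounds throughout the argument.
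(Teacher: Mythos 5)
The paper does not prove this statement at all: it is Ganzburg's limit theorem, imported verbatim from the approximation-theory literature (\cite{ganzburg2002limit,ganzburg2008limit}), so there is no internal proof to compare against and any proof attempt must reconstruct a genuinely deep external result.

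Your sketch has a structural misreading that undermines both halves. The two norms in the statement are not the same norm: the left-hand norm $\snorm{p}{f(b_n x/\sigma)-p(x)}$ is the Gaussian-\emph{weighted} norm (this is the paper's standing convention, and it is how the Fact is applied to $\sign$ and $\relu$), while $A_\sigma(f)_p$ on the right is the \emph{unweighted} Lebesgue $L^p(\R)$ error. Your reading of the prefactor $(b_n/\sigma)^{1/p}$ as ``exactly the Jacobian correction'' presumes both sides are unweighted Lebesgue norms, but under that reading the left-hand infimum is identically $+\infty$ (no polynomial is within finite unweighted $L^p(\R)$ distance of a function of polynomial growth over the whole line), so the framing collapses. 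The actual content of the theorem is the interaction between the weight $e^{-x^2/2}$ and the scaling $b_n=2\sqrt{n}$, which is the Mhaskar--Rakhmanov--Saff number for the Hermite weight: a weighted degree-$n$ polynomial ``lives'' on $|x|\lesssim 2\sqrt{n}$, and relating the weighted error there to an unweighted error for the dilated function requires infinite--finite range inequalities and the machinery of weighted approximation, none of which your sketch engages. Concretely, in the upper bound your Taylor truncation is controlled only on the disk $|y|\le b_n/\sigma$, i.e.\ $|x|\le 1$, leaving the range $1<|x|\lesssim\sqrt{n}$ --- where the Gaussian weight is not yet small enough to absorb degree-$n$ polynomial growth --- completely unestimated; in the lower bound, the Fatou step would give you the limit of $\int |f(y)-g_n(y)|^p\,\phi(\sigma y/b_n)\,dy$, and $\phi(\sigma y/b_n)$ converges to a constant only locally uniformly, so passing to the unweighted $\snorm{p}{f-g_\infty}$ needs exactly the quantitative localization you have not supplied (and it also introduces the normalization constant of the Gaussian density, which must be accounted for). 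A minor further point: your worry about pinning the type of $g_\infty$ to exactly $\sigma$ is moot, since $B_\sigma$ as defined contains all functions of type at most $\sigma$; the real difficulty in the lower bound is the weight, not the type.
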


As an immediate corollary, we obtain:

\begin{corollary} \label{cor:relu-lb}
Let $f: \R \to \R$ be the ReLU function $\relu(t)=\max\{0,t \}$ and $p \in [1,2]$. 
The minimum integer $d$ for which there exists a degree-$d$ polynomial $P : \R \to \R$ 
such that $\snorm{p}{\relu-P} \leq \eps$ is $d = \Theta\left(\eps^{-\frac{2}{1+1/p}}\right)$.
\end{corollary}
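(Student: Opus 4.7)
\textbf{Proof Proposal for Corollary~\ref{cor:relu-lb}.}
My plan is to apply Ganzburg's limit theorem (Fact~\ref{fct:ganzburg}) with $f = \relu$ to reduce the question of polynomial approximability to the simpler question of approximability of $\relu$ by entire functions of bounded exponential type, and then exploit the positive homogeneity of $\relu$ to get clean scaling in $\sigma$.

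First I would use the identity $\relu(\sigma x) = \sigma\,\relu(x)$ for $\sigma > 0$ to compute the $\sigma$-dependence of $A_\sigma(\relu)_p$. Concretely, if $g \in B_\sigma$ then $\tilde g(y) := g(y/\sigma)/\sigma$ belongs to $B_1$, and a change of variables $y = \sigma x$ gives
\begin{equation*}
\snorm{p}{\relu - g}^p \;=\; \sigma^{-(p+1)} \snorm{p}{\relu - \tilde g}^p,
\end{equation*}
so taking infima yields $A_\sigma(\relu)_p = \sigma^{-(1+1/p)} A_1(\relu)_p$. Then I would verify that $A_1(\relu)_p$ is a strictly positive, finite constant: strict positivity is immediate because $\relu$ is not entire, and finiteness follows from the decomposition $\relu(x) = (x + |x|)/2$ together with the classical fact that $|x|$ admits $L^p(\R)$ approximation by entire functions of any prescribed positive exponential type (for instance, via a band-limited mollification of $|x|$, which is a standard Paley--Wiener construction).

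Next I would plug $f = \relu$ into Fact~\ref{fct:ganzburg}. Homogeneity lets me write $\relu((b_n/\sigma)x) = (b_n/\sigma)\relu(x)$, and reparameterizing the polynomial as $Q(x) = (\sigma/b_n) P(x)$ shows
\begin{equation*}
\inf_{P \in \mathcal{P}_n} \snorm{p}{\relu((b_n/\sigma)x) - P(x)} \;=\; (b_n/\sigma)\, \inf_{Q \in \mathcal{P}_n} \snorm{p}{\relu - Q}.
\end{equation*}
Combining with Fact~\ref{fct:ganzburg} and the previous step, the factors of $\sigma$ cancel and I obtain
\begin{equation*}
(b_n)^{1+1/p}\, \inf_{Q \in \mathcal{P}_n} \snorm{p}{\relu - Q} \;\longrightarrow\; A_1(\relu)_p.
\end{equation*}
Since $b_n = 2\sqrt{n}$, this yields the asymptotic rate $E_n := \inf_{Q \in \mathcal{P}_n} \snorm{p}{\relu - Q} = \Theta(n^{-(1+1/p)/2})$.

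The final step is to invert: setting $E_d \asymp \eps$ and solving gives $d = \Theta(\eps^{-2/(1+1/p)})$, which supplies both the upper and lower bound. The main obstacle will be handling the technical subtleties of Ganzburg's theorem for an unbounded target: $\relu \notin L^p(\R)$ in the Lebesgue sense, so care is needed to ensure that $A_\sigma(\relu)_p$ is well-defined and that the change-of-variables argument above makes sense (the point is that for suitably chosen $g \in B_\sigma$ the difference $\relu - g$ is in $L^p(\R)$, even though $\relu$ itself is not). Once that is in hand, the plan reduces to bookkeeping the exponents in the Ganzburg asymptotic, and the symmetry of $\relu$ under dilation does the rest.
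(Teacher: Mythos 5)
Your proposal is correct and takes essentially the same route as the paper: the paper obtains Corollary~\ref{cor:relu-lb} directly as the "immediate" consequence of Ganzburg's limit theorem (Fact~\ref{fct:ganzburg}), and your homogeneity/dilation bookkeeping (giving $E_n=\Theta(n^{-(1+1/p)/2})$ and hence $d=\Theta(\eps^{-2/(1+1/p)})$) is precisely the computation the paper leaves implicit. One cosmetic slip: for your displayed change-of-variables identity the dilated function should be $\tilde g(y)=\sigma g(y/\sigma)$ rather than $g(y/\sigma)/\sigma$, but since $B_1$ is closed under scalar multiplication this does not affect the infimum identity $A_\sigma(\relu)_p=\sigma^{-(1+1/p)}A_1(\relu)_p$ or the conclusion.
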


Therefore, Theorems~\ref{them:real-val-lb-csq} and~\ref{them:real-val-lb-sq} 
imply a complexity of at least $n^{\Omega(\eps^{-4/3})}$ for any agnostic CSQ learner; 
and $n^{\Omega(\eps^{-1})}$ for any agnostic SQ learner respectively.

\subsection{Sigmoid Activation}

\subsubsection{CSQ Lower Bound}
We now let $f$ be the standard sigmoid function, defined as $f(t) = 1/(1+e^{-t})$, $t \in \R$. 
We first focus on bounding the degree of polynomials that approximate $f$ in $\lspace{2}$-norm. 
This can be done via Hermite analysis, in particular, based on the fact that the polynomial 
of degree $d$ being closest to $f$ in $\lspace{2}$-norm is the truncated 
Hermite expansion $p_d(t) = \sum_{i=0}^d \hat{f}(i) H_i(t)$. 
The error of this approximation is $\snorm{2}{p_d - f}^2 = \sum_{i=d+1}^\infty \hat{f}^2(i)$. 
For the asymptotic behavior of the Hermite coefficients, we use the following 
fact (see~\cite{goel2020superpolynomial} and the references therein).

\begin{fact}[Lemma A.9 from~\cite{goel2020superpolynomial}]\label{fact:sigmoid}
Let $f:\R \to \R$ be the standard sigmoid function $f(t) = 1/(1+e^{-t})$ 
and $\hat{f}(i)$ be its Hermite coefficients for $i \in \Z_+$. 
Then, $\hat{f}(0)=0.5, \hat{f}(2i)=0$  and $\hat{f}(2i-1)=e^{-\Theta(\sqrt{i})}$, for $i \geq 1$.
\end{fact}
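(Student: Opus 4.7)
The plan is to prove the three claims of Fact~\ref{fact:sigmoid} in turn; the last one is where the real work lies. Writing $f(t) = 1/2 + (1/2)\tanh(t/2)$, the constant gives $\hat{f}(0) = 1/2$. The function $\tanh(t/2)$ is odd, while every Hermite polynomial $H_{2i}$ of even degree is an even function, so $\hat{f}(2i) = 0$ for $i \geq 1$ by parity of the integrand against the Gaussian density. This disposes of the first two claims with no analytic input.

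For the odd coefficients, I would pass through the classical partial-fraction representation
$$\tanh(t/2) = 2\int_0^\infty \frac{\sin(ts)}{\sinh(\pi s)}\, ds,$$
and compute the Hermite coefficients of $t \mapsto \sin(ts)$ by substituting $z = is$ into the generating identity $\sum_n H_n(t) z^n/n! = e^{zt-z^2/2}$ and taking imaginary parts, which yields $\E[\sin(ts) H_{2k+1}(t)] = (-1)^k s^{2k+1} e^{-s^2/2}$. Swapping the outer integral with the Hermite expansion (justified since $1/\sinh(\pi s)$ has exponential decay at infinity while the factor $s^{2k+1}$ cancels the $1/s$ singularity at the origin) then produces the integral representation
$$\hat{f}(2k+1) = \frac{(-1)^k}{\sqrt{(2k+1)!}}\int_0^\infty \frac{s^{2k+1} e^{-s^2/2}}{\sinh(\pi s)}\, ds.$$

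The remaining step is an asymptotic evaluation of this integral by Laplace's method. Writing the integrand as $\exp\bigl((2k+1)\log s - s^2/2 - \log\sinh(\pi s)\bigr)$ and using $\log\sinh(\pi s) = \pi s - \log 2 + o(1)$ for large $s$, the saddle equation $(2k+1)/s - s - \pi \coth(\pi s) = 0$ has a unique positive root $s_{*} = \sqrt{2k} + O(1)$. Substituting $s_{*}$ and combining with Stirling's formula $\sqrt{(2k+1)!} = (2k)^{k+1/2} e^{-k}\cdot \mathrm{poly}(k)$, the dominant $k$-dependent factors cancel in the ratio and leave $|\hat{f}(2k+1)| = e^{-\pi\sqrt{2k}\,(1+o(1))}$ up to polynomial prefactors, which is precisely $e^{-\Theta(\sqrt{k})}$.

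The main obstacle will be a careful execution of the Laplace step so that the neighborhood of $s_{*}$ provably dominates both tails of the integral: for $s \ll \sqrt{k}$ the integrand is monotone increasing up to $s_{*}$ and controlled by its value there, while for $s \gg \sqrt{k}$ the Gaussian factor $e^{-s^2/2}$ kills any polynomial growth from $s^{2k+1}/\sinh(\pi s)$. Once those uniform tail bounds are in place, the standard Gaussian approximation around $s_{*}$ delivers both the upper and the matching lower bound; the lower bound in fact comes essentially for free, because the integrand is positive and so the saddle-point contribution is automatically a lower bound up to constants, giving $\hat{f}(2i-1) = e^{-\Theta(\sqrt{i})}$ in both directions.
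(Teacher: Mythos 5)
The paper does not actually prove this statement: it is imported verbatim as Lemma~A.9 of the cited reference \cite{goel2020superpolynomial}, and only the squared coefficients $\hat{f}^2(i)$ are ever used downstream (in Corollary~\ref{cor:sigmoid-l2}). Your argument is therefore a self-contained derivation rather than a variant of an in-paper proof, and it is sound. The parity claims are immediate as you say; the integral identity $\frac{1}{2}\tanh(t/2)=\int_0^\infty \sin(ts)/\sinh(\pi s)\,\d s$ is a classical (Fourier-type, not partial-fraction) representation; the identity $\E_{t\sim\normalone}[\sin(ts)He_{2k+1}(t)]=(-1)^k s^{2k+1}e^{-s^2/2}$ follows from the generating function exactly as you indicate (note it holds for the \emph{unnormalized} $He_{2k+1}$, with the $1/\sqrt{(2k+1)!}$ correctly reinstated in your displayed formula for $\hat f(2k+1)$); and the Fubini swap is legitimate since $|\sin(ts)|\le\min(1,|t|s)$ tames the $1/\sinh(\pi s)\sim 1/(\pi s)$ singularity at the origin while the exponential decay handles infinity. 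The Laplace step also checks out: with $n=2k+1$ the exponent $n\log s-s^2/2-\pi s$ has its unique maximum at $s_*=\sqrt{n}-\pi/2+o(1)$, the log-derivative $(2k+1)/s-s-\pi\coth(\pi s)$ is strictly decreasing, so the integrand is unimodal and the tail bounds you describe go through; the terms $\pm\pi\sqrt{n}/2$ coming from $n\log s_*$ and $-s_*^2/2$ cancel against each other, the factor $n^{n/2}e^{-n/2}$ cancels against Stirling for $\sqrt{n!}$, and what survives is exactly $e^{-\pi s_*(1+o(1))}=e^{-\pi\sqrt{2k}(1+o(1))}$, i.e.\ $e^{-\Theta(\sqrt{i})}$ as claimed, with a free matching lower bound from positivity of the integrand. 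One small caveat worth recording: your formula shows the odd coefficients carry the alternating sign $(-1)^k$, so the Fact as stated should be read as $|\hat f(2i-1)|=e^{-\Theta(\sqrt{i})}$; this is harmless for the paper, which only consumes $\hat f^2(2i-1)$, and your derivation is in fact sharper than the cited statement since it pins down the constant $\pi\sqrt{2}$ in the exponent.
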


From this fact, we get the bound on the $\lspace{2}$-error of the best polynomials of degree $d$.

\begin{corollary}[$L^2$-Degree Lower Bound for Sigmoid]\label{cor:sigmoid-l2}
Let $f:\R \to \R$ be the standard sigmoid function $f(t) = 1/(1+e^{-t})$ and $d$ 
be the smallest integer for which there exists a degree-$d$ polynomial 
$p: \R \to \R$ such that $\snorm{2}{f-p}<\eps$. Then $d ={\Theta}(\log^2(1/\eps))$.
\end{corollary}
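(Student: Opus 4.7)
The plan is to reduce the problem to estimating a tail sum of squared Hermite coefficients and then apply the asymptotics recorded in Fact~\ref{fact:sigmoid}. Concretely, since $\{H_i\}_{i\geq 0}$ forms an orthonormal basis of $L^2(\R, \normalone)$, Parseval's identity implies that for any polynomial $p$ of degree at most $d$,
\[
\snorm{2}{f-p}^2 \;\geq\; \sum_{i > d} \hat{f}(i)^2,
\]
with equality attained by the truncated Hermite expansion $p_d(t) = \sum_{i=0}^d \hat{f}(i) H_i(t)$. Thus it suffices to find the threshold on $d$ at which the tail $\sum_{i > d} \hat{f}(i)^2$ drops below $\eps^2$.

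For the upper bound $d = O(\log^2(1/\eps))$, I would plug Fact~\ref{fact:sigmoid} into the tail sum: since $\hat{f}(2i)=0$ for $i\geq 1$ and $\hat{f}(2i-1)^2 = e^{-\Theta(\sqrt{i})}$, we get
\[
\sum_{i>d} \hat{f}(i)^2 \;=\; \sum_{\substack{i > d \\ i \text{ odd}}} e^{-\Theta(\sqrt{i})} \;\leq\; \sum_{j > d/2} e^{-c\sqrt{j}}
\]
for an absolute constant $c>0$. Comparing this series with $\int_{d/2}^\infty e^{-c\sqrt{x}}\,\d x$, a change of variables $u=\sqrt{x}$ yields a bound of the form $O(\sqrt{d}\, e^{-c'\sqrt{d}})$, which is less than $\eps^2$ as soon as $\sqrt{d} \gtrsim \log(1/\eps)$, i.e., $d = O(\log^2(1/\eps))$.

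For the matching lower bound $d = \Omega(\log^2(1/\eps))$, I would just keep the single largest surviving term in the tail: letting $i^* = 2\lceil d/2\rceil+1$ be the smallest odd integer strictly greater than $d$, Fact~\ref{fact:sigmoid} gives
\[
\snorm{2}{f-p}^2 \;\geq\; \hat{f}(i^*)^2 \;=\; e^{-\Theta(\sqrt{d})}.
\]
Requiring this to be at most $\eps^2$ forces $\sqrt{d} = \Omega(\log(1/\eps))$, hence $d = \Omega(\log^2(1/\eps))$. Combining the two directions gives $d = \Theta(\log^2(1/\eps))$.

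No real obstacle arises here; the only mildly delicate point is to be precise that the $\Theta(\sqrt{i})$ in the exponent of Fact~\ref{fact:sigmoid} produces matching constants in both the upper and lower tail estimates, but this is immediate because both the dominant surviving term and the geometric-tail bound are controlled by the same pair of constants hidden in the $\Theta(\cdot)$.
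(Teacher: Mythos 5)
Your proposal is correct and follows essentially the same route as the paper: both identify the best degree-$d$ $L^2$-approximant as the truncated Hermite expansion via Parseval, then estimate the tail $\sum_{i>d}\hat f(i)^2 = \sqrt{d}\,e^{-\Theta(\sqrt d)}$ using Fact~\ref{fact:sigmoid} and solve for when this falls below $\eps^2$. Your explicit separation of the upper and lower bounds (geometric-type tail bound versus keeping the first surviving odd coefficient) just spells out what the paper's one-line tail evaluation implicitly contains.
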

\begin{proof}
Fix a degree $k$. From Fact~\ref{fact:sigmoid}, the best $k$-degree polynomial $p_k$ achieves error
\begin{align*}
\snorm{2}{f-p_k}^2 & = \sum_{i=k+1}^\infty \hat{f}^2(i)
= \sum_{i>k, \text{$i$ odd}} e^{-\Theta(\sqrt{i})}
= \sqrt{k} e^{-\Theta(\sqrt{k})}\;.
\end{align*}
This becomes $\eps^2$ when $k$ becomes ${\Theta}(\log^2(1/\eps))$.
\end{proof}

By Theorem~\ref{them:real-val-lb-csq}, we get that any CSQ agnostic learner 
for sigmoids has complexity $n^{{\Omega}(\log^2(1/\eps))}$. 

\subsubsection{SQ Lower Bound}

The approach to derive lower bounds for the degrees of $L^1$-approximating polynomials 
will be to relate the $L^1$-norm to the $L^2$-norm and use the lower bounds for the latter. 
In particular, we will use the following fact about polynomials under the Gaussian measure.

\begin{theorem}[Hypercontractivity~\cite{Bog:98,nelson1973free}]\label{th:hypercontractivity}
If $p$ is a $d$-degree polynomial and $t>2$, then
\[
\snorm{t}{p} \leq (t-1)^{d/2} \snorm{2}{p} \;.
\]
\end{theorem}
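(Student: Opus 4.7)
The plan is to derive this as a corollary of Nelson's Gaussian hypercontractivity, which is the ``hard'' content of the statement and which I would invoke as a black box. Specifically, I would introduce the Ornstein--Uhlenbeck semigroup $T_\rho$ defined on $L^2(\normald{n})$ by
\[
(T_\rho f)(\bx) = \E_{\by \sim \normald{n}}[f(\rho\,\bx + \sqrt{1-\rho^2}\,\by)],
\]
and recall the key spectral fact that $T_\rho$ acts diagonally on the Hermite basis: if $H_J$ is a (normalized) Hermite polynomial of degree $k = |J|$, then $T_\rho H_J = \rho^{k} H_J$. Nelson's theorem then states that for $1 < p \leq q < \infty$ and $0 \leq \rho \leq \sqrt{(p-1)/(q-1)}$ one has $\snorm{q}{T_\rho f} \leq \snorm{p}{f}$.

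The idea is to apply Nelson with $p=2$ and $q=t$, so the critical noise parameter is $\rho := 1/\sqrt{t-1}$. Let $p = \sum_{k=0}^{d} p^{[k]}$ be the Hermite decomposition of the given degree-$d$ polynomial (so each $p^{[k]}$ is the degree-$k$ Hermite component). The key trick is to exhibit $p$ as $T_\rho$ applied to a ``dilated'' polynomial: define
\[
g := \sum_{k=0}^{d} \rho^{-k}\, p^{[k]}.
\]
Since $\rho^{-k}$ is finite for each $k \leq d$, $g$ is a well-defined polynomial, and by the eigenvalue property of $T_\rho$ we have $T_\rho g = \sum_{k=0}^d \rho^{-k}\rho^{k} p^{[k]} = p$. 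Nelson's inequality then yields $\snorm{t}{p} = \snorm{t}{T_\rho g} \leq \snorm{2}{g}$.

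It remains to control $\snorm{2}{g}$ by $\snorm{2}{p}$, which is a straightforward Parseval computation. Using orthogonality of the Hermite components under the Gaussian measure,
\[
\snorm{2}{g}^2 = \sum_{k=0}^{d} \rho^{-2k} \snorm{2}{p^{[k]}}^2 \leq \rho^{-2d} \sum_{k=0}^{d} \snorm{2}{p^{[k]}}^2 = (t-1)^{d}\, \snorm{2}{p}^2,
\]
and taking square roots gives the claimed bound $\snorm{t}{p} \leq (t-1)^{d/2} \snorm{2}{p}$. The main obstacle in this entire argument is of course Nelson's inequality itself, whose proof (via, e.g., the two-point inequality tensorized and then passed through the CLT, or via logarithmic Sobolev inequalities and differentiating along the semigroup) is substantial; I would simply cite it. Once hypercontractivity is granted, the reduction to the polynomial statement is just the degree-truncation / rescaling trick above.
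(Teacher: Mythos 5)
Your derivation is correct: the dilation trick $g=\sum_{k\le d}\rho^{-k}p^{[k]}$ with $\rho=1/\sqrt{t-1}$, the eigenvalue identity $T_\rho g=p$, Nelson's $(2,t)$-hypercontractive bound, and the Parseval estimate $\snorm{2}{g}\le (t-1)^{d/2}\snorm{2}{p}$ fit together without gaps and give exactly the stated inequality. The paper itself offers no proof of this theorem---it is quoted as a known result with citations to Nelson and Bogachev---and your reduction from semigroup hypercontractivity is precisely the standard argument underlying those references, so there is nothing further to compare.
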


\begin{claim}\label{cl:norm-ineq}
Let $r \in L^4(\R)$. Then, $\snorm{2}{r} \leq \snorm{1}{r}^{1/3}\snorm{4}{r}^{2/3}$.
\end{claim}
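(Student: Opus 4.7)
The plan is to prove this as a direct application of H\"older's inequality, viewing it as a log-convexity (interpolation) statement for $L^p$ norms: since $\tfrac{1}{2} = \tfrac{1}{3}\cdot 1 + \tfrac{2}{3}\cdot \tfrac{1}{4}$, the reciprocal ``$1/p$'' of $2$ is a convex combination of the reciprocals for $1$ and $4$ with weights $1/3$ and $2/3$, which is exactly what the claimed inequality $\snorm{2}{r} \leq \snorm{1}{r}^{1/3}\snorm{4}{r}^{2/3}$ asserts.

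To implement this, I would start from $\snorm{2}{r}^{2} = \int |r|^{2}\, dx$ and split the integrand as $|r|^{2} = |r|^{2/3}\cdot |r|^{4/3}$. Applying H\"older's inequality with conjugate exponents $p = 3/2$ and $q = 3$ (so that $1/p + 1/q = 1$) gives
\[
\int |r|^{2/3}\cdot |r|^{4/3}\, dx \;\leq\; \Bigl(\int |r|^{(2/3)\cdot(3/2)}\, dx\Bigr)^{2/3}\Bigl(\int |r|^{(4/3)\cdot 3}\, dx\Bigr)^{1/3} \;=\; \snorm{1}{r}^{2/3}\snorm{4}{r}^{4/3}.
\]
Taking square roots of both sides yields exactly the claimed bound.

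There is essentially no obstacle here: the only thing to check is that the exponents chosen in H\"older match both the splitting $|r|^{2/3}\cdot|r|^{4/3}$ and the target norms $\snorm{1}{\cdot}$ and $\snorm{4}{\cdot}$, which they do by construction. The hypothesis $r \in L^{4}(\R)$ ensures both factors on the right-hand side are finite, so the inequality is nontrivial (and in particular implies $r \in L^{2}(\R)$, consistent with the $L^{1}\cap L^{4}\subset L^{2}$ interpolation). This claim will presumably be combined with Theorem~\ref{th:hypercontractivity} in the subsequent argument to transfer an $L^{2}$ degree lower bound (such as Corollary~\ref{cor:sigmoid-l2}) to an $L^{1}$ degree lower bound for the sigmoid: writing $r = f - p$ where $p$ has low degree and rearranging the inequality, an upper bound on $\snorm{1}{f-p}$ together with a hypercontractive upper bound on $\snorm{4}{f-p}$ in terms of $\snorm{2}{f-p}$ (after subtracting a suitable polynomial) will force $\snorm{2}{f-p}$ to remain large unless the degree is sufficiently high.
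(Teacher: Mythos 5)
Your proof is correct. The only thing to note is that the route differs slightly from the paper's: you prove the interpolation bound in one shot via H\"older's inequality with conjugate exponents $3/2$ and $3$ applied to the splitting $|r|^2=|r|^{2/3}\cdot|r|^{4/3}$, whereas the paper derives the same inequality using only the Cauchy--Schwarz inequality, applied twice --- first $\E[r^2]\leq \E[|r|]^{1/2}\E[|r|^3]^{1/2}$, then $\E[|r|^3]\leq \E[|r|^2]^{1/2}\E[|r|^4]^{1/2}$ --- followed by a rearrangement (dividing by $\E[|r|^2]^{1/4}$ and taking appropriate powers). Both arguments are elementary and yield the identical log-convexity statement corresponding to $\tfrac12=\tfrac13\cdot 1+\tfrac23\cdot\tfrac14$; yours is a touch more direct, while the paper's avoids invoking general H\"older at the cost of an extra step. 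One cosmetic point: the norms in this claim are with respect to the standard Gaussian measure (the paper writes expectations over $t\sim\normalone$), not Lebesgue measure as your ``$dx$'' suggests, but since H\"older holds for an arbitrary measure this changes nothing; in fact, because the Gaussian is a probability measure, $r\in L^4$ already gives finiteness of all the norms involved. Your closing remark about how the claim feeds into Theorem~\ref{thm:sigmoid-l1} via hypercontractivity matches the paper's subsequent use of it.
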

\begin{proof}
The proof follows from two applications of the Cauchy-Schwartz inequality. 
\[
\E_{t \sim \normalone}[r^2(t)] \leq \E_{t \sim \normalone}\left[|r(t)|\right]^{1/2}  \E_{t \sim \normalone}\left[|r(t)|^3\right]^{1/2} 
\leq \E_{t \sim \normalone}\left[|r(t)|\right]^{1/2} \E_{t \sim \normalone}\left[|r(t)|^2 \right]^{1/4} \E_{t \sim \normalone}\left[|r(t)|^4\right]^{1/4} \;.
\]
Rearranging the above, yields the claimed inequality.
\end{proof}

We can now show our $L^1$ polynomial degree lower bound.

\begin{theorem}[$L^1$-Degree Lower Bound for Sigmoid] \label{thm:sigmoid-l1}
Let $f:\R \to \R$ be the standard sigmoid function $f(t) = 1/(1+e^{-t})$ and $0<\eps <1$.
Any degree-$d$ polynomial $p:\R \to \R$ that satisfies $\snorm{1}{f-p}<\eps$ 
must have $d = \Omega(\log(1/\eps ))$. \end{theorem}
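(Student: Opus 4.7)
The plan is to chain together Claim~\ref{cl:norm-ineq}, the hypercontractivity bound of Theorem~\ref{th:hypercontractivity}, and the $L^2$-degree lower bound of Corollary~\ref{cor:sigmoid-l2}. Applying Claim~\ref{cl:norm-ineq} with $r = f-p$ immediately gives
\[
\snorm{2}{f-p} \;\leq\; \snorm{1}{f-p}^{1/3}\,\snorm{4}{f-p}^{2/3} \;\leq\; \eps^{1/3}\,\snorm{4}{f-p}^{2/3}.
\]
The central idea is that the $L^4$-norm on the right can grow with the degree $d$ only through hypercontractivity, so the above interpolation forces $\snorm{2}{f-p}$ to be nontrivially small; at that point, the $L^2$-degree lower bound from Corollary~\ref{cor:sigmoid-l2} can be invoked to lower-bound $d$.

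To bound $\snorm{4}{f-p}$, I would use $|f|\le 1$, which yields $\snorm{4}{f}\le 1$, and apply Theorem~\ref{th:hypercontractivity} with $t=4$ to the degree-$d$ polynomial $p$ to obtain $\snorm{4}{p} \le 3^{d/2}\snorm{2}{p}$. The $L^2$ triangle inequality then gives $\snorm{2}{p} \le \snorm{2}{f} + \snorm{2}{f-p} \le 1 + \snorm{2}{f-p}$, hence
\[
\snorm{4}{f-p} \;\leq\; 1 + 3^{d/2}\bigl(1 + \snorm{2}{f-p}\bigr).
\]
Writing $A := \snorm{2}{f-p}$ and substituting produces the self-referential inequality $A \le \eps^{1/3}\bigl(1 + 3^{d/2}(1+A)\bigr)^{2/3}$, which is the key relation to be resolved.

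A short case split handles it. If $A > 2$, then absorbing the linear $A$ term on the right gives $A \le O(\eps\cdot 3^d)$, which already forces $d = \Omega(\log(1/\eps))$ merely to keep the right-hand side bounded by the left. Otherwise $A \le 2$, and then the factor $(1+A)$ is a constant, yielding $A \le C\eps^{1/3}3^{d/3}$ for an absolute constant $C$. Suppose for contradiction that $d \le c\log(1/\eps)$ for a sufficiently small constant $c < 1/\ln 3$. Then $3^{d/3} \le \eps^{-c(\ln 3)/3}$, so $A \le \eps^{1/3 - c(\ln 3)/3} = \eps^{\Omega(1)}$. Applying Corollary~\ref{cor:sigmoid-l2} to the degree-$d$ polynomial $p$ now forces $d = \Omega(\log^2(1/A)) = \Omega(\log^2(1/\eps))$, contradicting $d \le c\log(1/\eps)$ once $\eps$ is small enough. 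Either way, $d = \Omega(\log(1/\eps))$.

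The main obstacle will be the bootstrap: the bound on $A$ depends on $A$ itself through the $L^2$-triangle bound for $\snorm{2}{p}$, so one needs the case split on $A \lessgtr 2$ (or equivalently one iteration step) to disentangle the two sides. Once that is handled, everything else is a direct chain of substitutions using the tools already established earlier in the section.
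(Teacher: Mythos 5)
Your proposal is correct and follows essentially the same route as the paper: interpolate $\snorm{2}{f-p} \le \snorm{1}{f-p}^{1/3}\snorm{4}{f-p}^{2/3}$ via Claim~\ref{cl:norm-ineq}, control the $L^4$ norm through hypercontractivity (Theorem~\ref{th:hypercontractivity}), and play the resulting $\eps^{1/3}2^{O(d)}$ bound against the $L^2$ tail lower bound $\snorm{2}{f-p}\geq \sqrt{d}e^{-\Theta(\sqrt{d})}$ from Corollary~\ref{cor:sigmoid-l2}. The only difference is in the bookkeeping: the paper avoids your self-referential inequality and case split by applying Claim~\ref{cl:norm-ineq} a second time to $p$ itself, which gives $\snorm{4}{p} \le 3^{3d/2}\snorm{1}{p} = 2^{O(d)}$ directly from $\snorm{1}{p} \le \eps + \snorm{1}{f} = O(1)$.
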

\begin{proof}
Let $p:\R \to \R$ be a degree-$d$ polynomial such that $\snorm{1}{f-p}<\eps$. 
Using Theorem~\ref{th:hypercontractivity} with $t=4$ and then Claim~\ref{cl:norm-ineq} 
with $r(t)=p(t)$, we get that
\begin{align*}
\snorm{4}{p} \leq 3^{d/2} \snorm{2}{p}  \leq 3^{d/2} \snorm{1}{p}^{1/3} \snorm{4}{p}^{2/3} \;.
\end{align*}
After dividing both sides by $\snorm{4}{p}^{2/3}$, we have that 
$\snorm{4}{p} \leq 3^{3d/2} \snorm{1}{p}$. Furthermore, using the triangle inequality, 
$\snorm{1}{p} \leq \eps + \snorm{1}{f} = O(1)$. Therefore, $\snorm{4}{p} \leq 2^{O(d)}$. 
Furthermore, Claim~\ref{cl:norm-ineq} for $r(t)=f(t)-p(t)$ gives
\begin{align*}
\snorm{2}{f-p} \leq \snorm{1}{f-p}^{1/3}\snorm{4}{f-p}^{2/3} 
\leq \eps^{1/3}  2^{O(d)} \;.
\end{align*}
On the other hand, for the $L^2$-error we have that 
$\snorm{2}{f-p} \geq \sqrt{d}e^{-\Theta(\sqrt{d})}$ (Corollary~\ref{cor:sigmoid-l2}).
Combining the two bounds, it follows that $d = \Omega(\log(1/\eps ))$.
\end{proof}

\nnew{We note that \cite{GGK20} showed an $n^{\Omega(\log^2(1/\eps))}$ 
SQ lower bound for the correlational guarantee.}

\clearpage

\bibliographystyle{alpha}
\bibliography{allrefs}

\newpage

\appendix

\section{Omitted Background} \label{sec:app-background}

\subsection{Correlational Statistical Query (CSQ) Model} \label{ssec:app-CSQ}
For some of our lower bounds in the real-valued setting,
we consider \emph{correlational} or inner product queries.
The CSQ model is a restriction of the SQ model, where the algorithm is
allowed to choose any bounded query function, and obtain estimates for its
correlation with the labels. 

Specifically, for $f,h:X \to \R$ and a distribution $D_{\bx}$ over the domain $X$, we denote by
$\langle f, h\rangle_{D_{\bx}}$ the quantity $\E_{\bx \sim D_{\bx}}[f(\bx)h(\bx)]$
and refer to it as the correlation of $f$ and $h$ under $D_{\bx}$.
While it is commonly assumed that the query function $h$  is pointwise bounded,
it is in fact sufficient to assume that it has bounded $\lspace{2}$-norm.
If $D$ is the joint distribution on points and labels, a correlational query takes $h$ 
and a parameter $t>0$, and outputs a value $v \in [\E_{(\bx,y)\sim D}[h(\bx)y]-\tau, \E_{(\bx,y)\sim D}[h(\bx)y]+\tau]$.

\noindent Similarly to the general SQ model, we consider the following notions
of statistical dimension.

\begin{definition}[Correlational Statistical Query Dimension] \label{def:csq-dim}
For $\beta,\gamma >0$, a probability distribution $D_{\bx}$ over domain $X$ 
and a family $\cal C$ of functions $f:X \to \R$,
let $s$ be the maximum integer for which there exists a finite set of functions
$\{f_1, \ldots, f_s\} \subseteq \mathcal{C}$ such that
$| \E_{\bx \sim D_{\bx}}[f_i^2(\bx)] | \leq \beta$ for all $i \in[s]$, 
and  $| \E_{\bx \sim D_{\bx}}[f_i(\bx) f_j(\bx)] | \leq \gamma$ for all $i,j \in [s]$ with $i\neq j$.
We define the \emph{Correlational Statistical Query Dimension} with pairwise correlations
$(\gamma, \beta)$ of $\mathcal{C}$ to be $s$ and denote it by
$\mathrm{CSD}_{D_{\bx}}(\mathcal{C},\gamma,\beta)$.
\end{definition}

\begin{definition}[Average Correlational Statistical Query Dimension] \label{def:acsq-dim}
Let $\rho>0$, let $D_{\bx}$ be a probability distribution over some domain $X$, 
and let $\cal C$ be a family of functions $f:X \to \R$. We define the average pairwise correlation 
of functions in $\mathcal{C}$ to be 
$\rho(\mathcal{C})=\frac{1}{|\mathcal{C}|^2} \sum_{g,r \in \mathcal{C}} |\E_{\bx \sim D_{\bx}}[g(\bx) r(\bx)]|$. 
The \emph{Average Correlational Statistical Query Dimension} of $\mathcal{C}$ relative to $D_{\bx}$ 
with parameter $\gamma$, denoted by $\mathrm{CSDA}_{D_{\bx}}(\mathcal{C},\gamma)$, 
is defined to be the largest integer $s$ such that  every subset $\mathcal{C}' \subseteq \mathcal{C}$ 
of size $|\mathcal{C}'| \geq |\mathcal{C}|/s$, satisfies $\rho(\mathcal{C}') \geq \rho$.
\end{definition}

In most of the cases, it suffices to bound the correlational statistical query dimension, 
since by simple calculations this implies a bound on the average statistical query dimension.

\begin{lemma} \label{lem:convert-SD-to-SDA}
Let $\mathcal{C}$ be a class of functions and $D_{\bx}$ be a distribution 
and suppose that $\mathrm{CSD}_{D_{\bx}}(\mathcal{C},\gamma,\beta) = d$, 
for some $\gamma,\beta>0$. Then, for all $\gamma'>0$, we have that 
 $\mathrm{CSD}_{D_{\bx}}(\mathcal{C},\gamma+\gamma') \geq d\gamma'/(\beta-\gamma)$.
\end{lemma}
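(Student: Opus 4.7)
The plan is to exhibit the witness set for $\mathrm{CSD}$ as a witness for $\mathrm{CSDA}$ at the claimed parameters (reading the conclusion, as the name of the lemma suggests, as a lower bound on $\mathrm{CSDA}_{D_{\bx}}(\mathcal{C},\gamma+\gamma')$). Let $\{f_1,\ldots,f_d\} \subseteq \mathcal{C}$ be the family guaranteed by $\mathrm{CSD}_{D_{\bx}}(\mathcal{C},\gamma,\beta) = d$, so that $|\E_{\bx \sim D_{\bx}}[f_i(\bx)^2]| \leq \beta$ for every $i \in [d]$ and $|\E_{\bx \sim D_{\bx}}[f_i(\bx) f_j(\bx)]| \leq \gamma$ for every $i \neq j$. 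I would take this finite set as the class against which the $\mathrm{CSDA}$ bound is verified.

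Set $s := d\gamma'/(\beta - \gamma)$. To establish $\mathrm{CSDA}_{D_{\bx}}(\{f_1,\ldots,f_d\},\gamma+\gamma') \geq s$, I need to show that every subset $\mathcal{C}' \subseteq \{f_1,\ldots,f_d\}$ with $|\mathcal{C}'| \geq d/s$ has average pairwise correlation $\rho(\mathcal{C}') \leq \gamma + \gamma'$. Writing $k := |\mathcal{C}'|$, the hypothesis $k \geq d/s$ is exactly $k \geq (\beta-\gamma)/\gamma'$. Splitting the defining sum into diagonal and off-diagonal contributions and applying the two bounds inherited from the CSD witness,
\begin{align*}
\rho(\mathcal{C}') \;=\; \frac{1}{k^2} \sum_{f,g \in \mathcal{C}'} \bigl|\E_{\bx \sim D_{\bx}}[f(\bx) g(\bx)]\bigr|
 \;\leq\; \frac{1}{k^2}\bigl(k \beta + k(k-1)\gamma\bigr)
 \;=\; \frac{\beta}{k} + \gamma \cdot \frac{k-1}{k}.
\end{align*}
Rearranging yields $\rho(\mathcal{C}') - \gamma \leq (\beta - \gamma)/k \leq \gamma'$, which is the desired bound $\rho(\mathcal{C}') \leq \gamma + \gamma'$.

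There is no real obstacle here beyond bookkeeping: interpreting the $\mathrm{CSD}$ witness as the ambient class whose subsets are to be controlled for $\mathrm{CSDA}$, and matching the threshold $d/s = (\beta-\gamma)/\gamma'$ to the exact point at which $(\beta-\gamma)/k \leq \gamma'$. The diagonal terms (each of magnitude at most $\beta$) and the $k(k-1)$ off-diagonal terms (each of magnitude at most $\gamma$) separate cleanly in the sum, and one checks that shrinking the subset only weakens the bound, which is why $|\mathcal{C}'|$ large is the interesting regime.
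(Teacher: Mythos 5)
Your proof is correct: the paper states this lemma without proof (deferring to ``simple calculations''), and your diagonal/off-diagonal split giving $\rho(\mathcal{C}') \leq \gamma + (\beta-\gamma)/k \leq \gamma+\gamma'$ whenever $k \geq (\beta-\gamma)/\gamma'$ is exactly the standard argument intended, matching how the lemma is invoked in the CSQ application where the class is (essentially) the finite witness set itself. Your reading of the conclusion as a lower bound on $\mathrm{CSDA}_{D_{\bx}}(\mathcal{C},\gamma+\gamma')$ is also the right one; the ``$\mathrm{CSD}$'' with a single parameter in the statement is a typo, as the subsequent use together with Lemma~\ref{lem:gen-csq-lb} confirms.
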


The following result~\cite{Szorenyi09,goel2020superpolynomial}
relates the Average Correlational SQ dimension of a concept class
with the complexity of any CSQ algorithm for the class.

\begin{lemma}[Theorem B.1 in~\cite{goel2020superpolynomial}] \label{lem:gen-csq-lb}
Let $D_{\bx}$ be a distribution over a  domain $X$ and let $\mathcal{C}$ be a real-valued concept class over $X$ 
such that $0 \in \mathcal{C}$, and $\snorm{2}{f} \geq \eta$ for all $f \in \mathcal{C}, f \not\equiv 0$. 
Suppose that for some $\gamma>0$ we have $s = \mathrm{CSDA}_{D_{\bx}}(\mathcal{C},\gamma)$.
Any CSQ algorithm that outputs a hypothesis $h$ such that $\snorm{2}{h-f} < \eta$ needs at least $s/2$ queries 
or queries of tolerance $\sqrt{\gamma}$.
\end{lemma}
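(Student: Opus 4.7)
The plan is to adopt the standard adversarial template for correlational SQ (CSQ) lower bounds, in the spirit of~\cite{Szorenyi09, FGR+13}. I would consider an adversary that samples the target concept $f^* \in \mathcal{C}$ uniformly at random and answers every correlational query $(h_i, \tau_i)$ (normalized so that $\snorm{2}{h_i} \leq 1$) with the value $v_i = 0$. A function $f \in \mathcal{C}$ is called \emph{consistent} with a transcript of $q$ such responses if $|\langle h_i, f\rangle_{D_{\bx}}| \leq \tau_i$ for every $i \leq q$. Because $0 \in \mathcal{C}$ has $\langle h_i, 0\rangle_{D_{\bx}} = 0$ by definition, the zero function is always consistent, so the adversary's responses are realizable for some $f^* \in \mathcal{C}$.

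The key technical step is a per-query shrinkage bound: if $\tau_i > \sqrt{\gamma}$, the consistent set $\mathcal{C}_q$ loses at most a $2/s$-fraction of its members to the $i$-th query. Letting $\mu_q$ denote the uniform distribution over $\mathcal{C}_q$, Chebyshev's inequality bounds the probability (under $\mu_q$) that $|\langle h, f\rangle_{D_{\bx}}| > \tau$ by $\E_{f \sim \mu_q}[\langle h, f\rangle_{D_{\bx}}^2]/\tau^2$. This second moment can be related to the operator $M_q := \E_{f \sim \mu_q}[f \otimes f]$, and its spectral profile is controlled through $\mathrm{CSDA}_{D_{\bx}}(\mathcal{C},\gamma) = s$: every subset of $\mathcal{C}_q$ of size $\geq |\mathcal{C}|/s$ must have average pairwise correlation at least $\gamma$, which in turn caps the mass of $M_q$ along the low-dimensional subspaces that $h$ could be most correlated with.

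Iterating the per-query bound across $q < s/2$ queries keeps $|\mathcal{C}_q| \geq |\mathcal{C}|/2$, so $\mathcal{C}_q$ contains both $0$ and many nonzero concepts $f$ with $\snorm{2}{f} \geq \eta$. Since the adversary's transcript is the same function of the algorithm's internal randomness for every target $f^* \in \mathcal{C}_q$, the algorithm must produce the same hypothesis $h$ on all such targets. A convexity (or averaging) argument then shows that no single $h$ can simultaneously satisfy $\snorm{2}{h - f^*} < \eta$ for $f^* = 0$ and for a typical nonzero element of $\mathcal{C}_q$, contradicting the claimed accuracy guarantee and hence forcing either $q \geq s/2$ or the use of tolerance $\leq \sqrt{\gamma}$.

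The main obstacle is making the per-query shrinkage bound in Step~2 rigorous. A naive Cauchy--Schwarz estimate shows only that the eliminated set $A_h = \{f \in \mathcal{C}_q : |\langle h, f\rangle_{D_{\bx}}| > \tau\}$ satisfies $\rho(A_h) \gtrsim \tau^2$, a lower bound on the same pairwise correlation that $\mathrm{CSDA}$ also lower-bounds; this does \emph{not} immediately yield $|A_h| = O(|\mathcal{C}_q|/s)$. Overcoming this requires a careful spectral/combinatorial interplay, along the lines of the $\mathrm{SDA}$-based arguments of~\cite{FGR+13, FeldmanGV17}: one must isolate a subspace of ``informative'' directions against $h$ whose contribution to $M_q$ is $O(\gamma)$ by $\mathrm{CSDA}$, thereby forcing $\E[\langle h, f\rangle_{D_{\bx}}^2] = O(\gamma)$ and making the per-query elimination probability $O(\gamma/\tau^2) = o(1)$. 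I expect this spectral decomposition of $M_q$, and its coupling to the $\mathrm{CSDA}$ lower bound, to be the principal technical hurdle of the argument.
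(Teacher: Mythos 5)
First, a point of reference: the paper does not prove this lemma at all --- it is imported from \cite{Szorenyi09,goel2020superpolynomial} (Theorem B.1 there) --- so your attempt can only be measured against the standard argument behind that citation. Your template (adversary answers every correlational query with $0$, track the concepts consistent with the transcript, show each query of tolerance above $\sqrt{\gamma}$ removes few of them, then derive a contradiction from the survivors) is indeed that standard argument. However, the step you single out as ``the principal technical hurdle'' is not a hurdle once Definition~\ref{def:acsq-dim} is read as intended: the displayed definition has the inequality garbled, and should say that every subset of size at least $|\mathcal{C}|/s$ has average correlation \emph{at most} $\gamma$. With that reading, your own Cauchy--Schwarz observation already finishes the step: if $A^{+}=\{f:\langle h,f\rangle_{D_{\bx}}>\tau\}$ had $|A^{+}|\ge|\mathcal{C}|/s$, then with $\bar f$ the average of $A^{+}$ one gets $\tau<\langle h,\bar f\rangle_{D_{\bx}}\le\snorm{2}{h}\snorm{2}{\bar f}\le\sqrt{\rho(A^{+})}\le\sqrt{\gamma}$, contradicting $\tau>\sqrt{\gamma}$; the same applies to $A^{-}$, so each query rules out fewer than $2|\mathcal{C}|/s$ concepts of the \emph{full} class (an absolute count, not a $2/s$-fraction of the current consistent set --- this is what produces the threshold $s/2$). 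Your proposed detour through Chebyshev, the operator $M_q=\E[f\otimes f]$, and an FGR-style spectral decomposition is unnecessary and, as you acknowledge, never carried out, so as written the core step of your proof is missing rather than proved.

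The second gap is the closing step. A bare ``convexity/averaging'' argument cannot show that no single $h$ is within $\eta$ of $0$ and of the nonzero survivors: with the guarantee only $\snorm{2}{h-f}<\eta$ and norms only bounded below by $\eta$, take pairwise orthogonal concepts $f_1,\dots,f_N$ of norm exactly $\eta$ and let $h$ be the small multiple $(\eta/N)\sum_i f_i/\eta$ of their average; then $\snorm{2}{h-f_i}^2=\eta^2(1-1/N)<\eta^2$ and $\snorm{2}{h}<\eta$ hold simultaneously for all $i$, so the mere coexistence of $0$ and many near-orthogonal survivors is not contradictory. The argument must be quantitative: from $\snorm{2}{h-f}^2<\eta^2\le\snorm{2}{f}^2$ one extracts $\langle h,f\rangle_{D_{\bx}}>\bigl(\snorm{2}{h}^2+\snorm{2}{f}^2-\eta^2\bigr)/2$, and one needs a margin between the achieved error and $\eta$ --- exactly what the application in the proof of Theorem~\ref{them:real-val-lb-csq} supplies by running the learner with $\eps'=\eps/8$, so that the error is at most $\eta-\eps/8$ --- to conclude, by the same averaging/correlation bound, that a single $h$ can approximate only $O(\eta^2/\mathrm{margin})$ of the near-orthogonal survivors, far fewer than the more than $|\mathcal{C}|/s$ concepts remaining after $q<s/2$ queries. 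Your step~3 as stated skips this margin entirely and therefore does not close the proof.
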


\subsection{Preliminaries: Multilinear Algebra} \label{app:multilinear_algebra}
Here we introduce some multilinear algebra notation.
An order $k$ tensor
$\matr A$ is an element of the $k$-fold tensor product of subspaces $\matr A
    \in \mathcal{V}_1 \otimes \ldots \otimes \mathcal{V}_k$.
We will be exclusively working with subspaces of $\R^d$ so a tensor $A$ can
be represented by a sequence of
coordinates, that is $A_{i_1,\ldots,i_k}$.   The
tensor product of a order $k$ tensor $\matr A$ and an order $m$ tensor
$\matr B$ is an order $k + m$ tensor defined as
$(\matr A \otimes \matr B)_{i_1,\ldots, i_k,j_1,\ldots,j_m} =
    \matr A_{i_1,\ldots,i_k} \matr B_{j_1,\ldots, j_m}$.  We are also going to use capital letters for
multi-indices, that is tuples of indices $I = (i_1,\ldots, i_k)$.
We denote by $E_i$ the multi-index that has $1$ on its $i$-th co-ordinate and
$0$ elsewhere. For example the previous tensor product can be denoted
as $\matr A_I \matr B_J$.
To simplify notation we are also going to use Einstein's summation where we
assume that we sum over repeated indices in a product of tensors.  For example
if $\matr A \in \R^d \otimes \R^d$, $\vec v \in \R^d$, $\vec u \in \R^d$ we
have $\sum_{i,j=1}^d \matr \bv_i \vec{u}_j \matr A_{ij} = \bv_i \vec{u}_j
    \matr A_{ij}$.  We define the dot product of two tensors (of the same order) to be
$\langle \matr A, \matr B \rangle = \matr A_{i_1,\ldots,i_k} \matr
    B_{i_1,\ldots, i_k} = \matr A_I \matr B_I$.  We also denote the $\ell_2$-norm
of a tensor by $\snorm{2}{\matr A} = \sqrt{\dotp{\matr A}{\matr A}}$.  We
denote by $\matr A(\vec X)$ a function that maps the tensor $\vec X$ to a
tensor $\matr A (\vec X)$.
Let $\mathcal{V}$ be a vector space and let $\matr A(\vec x): \R^d \to
    {\mathcal{V}}^{\otimes k}$ be a tensor valued function.  We denote by
$\partial_i \matr A(\vec x)$ the tensor of partial derivatives of $A(\vec x)$,
$\partial_i \matr A(\vec x) = \partial_i \matr A_J(\vec x) $ is a tensor of
order $k+1$ in $\mathcal{V}^{\otimes k} \otimes \R^d$.  We also denote this tensor
$
    \nabla \matr A(\vec x) = \partial_i \matr A_J(\vec x).
$
Similarly we define higher-order derivatives, and we denote
$$ \nabla^m \matr A(\vec x) = \partial_{i_1}  \ldots \partial_{i_m}
    \matr A_J(\vec x) \in \mathcal{V}^{\otimes k} \otimes (\R^d)^{\otimes m} \;.
$$

\subsection{Basics of Hermite Polynomials} \label{app:hermite_polynomials}
We are also going to use the Hermite polynomials that form an orthonormal system
with respect to the Gaussian measure.
While, usually one considers the probabilists's or physicists' Hermite polynomials,
in this work we define the \emph{normalized} Hermite polynomial of degree $i$ to be
\(
H_0(x) = 1, H_1(x) = x, H_2(x) = \frac{x^2 - 1}{\sqrt{2}},\ldots,
H_i(x) = \frac{He_i(x)}{\sqrt{i!}}, \ldots
\)
where by $He_i(x)$ we denote the probabilists' Hermite polynomial of degree
$i$.  These normalized Hermite polynomials form a complete orthonormal basis
for the single dimensional version of the inner product space $\lspace{2}$. To
get an orthonormal basis for $\lspace{2}$, we use a multi-index $J\in
    \N^d$ to define the $d$-variate normalized Hermite polynomial as $H_J(\vec x) =
    \prod_{i=1}^d H_{v_i}(\x_i)$.  The total degree of $H_J$ is $|J| = \sum_{v_i \in
        J} v_i$.  Given a function $f \in \lspace{2}(\R)$ we compute its Hermite coefficients as
\(
\hat{f}(J) = \E_{\vec x\sim \normald{n}} [f(\vec x) H_J(\vec x)]
\)
and express it uniquely as
\(
\sum_{J \in \N^n} \hat{f}(J) H_J(\vec x).
\)
For more details on the Gaussian space and Hermite Analysis (especially from
the theoretical computer science perspective), we refer the reader to
\cite{Don14}.  Most of the facts about Hermite polynomials that we use in this
work are well known properties and can be found, for example, in \cite{Sze67}.

We denote by $f^{[k]}(x)$ the degree $k$ part of the Hermite expansion of $f$,
$f^{[k]} (\vec x) = \sum_{|J| = k} \hat{f}(J)\cdot H_J(\vec x)$.
We say that a polynomial $q$ is harmonic of degree $k$ if it is
a linear combination of degree $k$ Hermite polynomials, that is $q$ can be
written as
$$ q(\vec x) = q^{[k]}(\vec x) = \sum_{J: |J| = k} c_J H_J(\vec x) \;.
$$

For a single dimensional Hermite polynomial it holds
$H_m'(x) = \sqrt{m} H'_{m-1}(x)$.  Using this we obtain that for a multivariate
Hermite polynomial $H_M(\vec x)$, where $M = (m_1,\ldots, m_n)$ it holds
\begin{equation}
\label{eq:hermite_nabla}
\nabla H_M(\vec x) = \sqrt{m_i} H_{M - E_i}(\vec x) \in \R^n,
\end{equation}
where $E_i = \vec e_i$ is the multi-index that has $1$ position $i$ and $0$
elsewhere.  From this fact and the orthogonality of Hermite polynomials
we obtain
\begin{equation}
\label{eq:hermite_nabla_dot}
\E_{\vec x \sim \normald{n}}[ \dotp{\nabla H_M(\vec x)}{\nabla H_L(\vec x)}]
= |M| \delta_{M, L}.
\end{equation}

The following fact gives us a formula for the inner product of
\begin{fact}\label{fct:harmonic_nabla_dot}
Let $p, q:\R^n\to\R$ be harmonic polynomials of degree $k$.  Then
$$
    \E_{\vec x \sim \normald{n}}\left[\dotp{\nabla^{\ell} p(\vec x)}{\nabla^{\ell} q(\vec x)}\right]
    =
    k(k-1)\ldots(k-\ell+1)
    \E_{\vec x \sim \normald{n}}[p(\vec x) q(\vec x)] \;.
$$
In particular,
$$
\dotp{\nabla^{k} p(\vec x)}{\nabla^{k} q(\vec x)}  =
k!  \E_{\vec x \sim \normald{n}}[p(\vec x) q(\vec x)] \;.
$$
\end{fact}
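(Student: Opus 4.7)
The plan is to induct on $\ell$, with the base case coming directly from Equation~\eqref{eq:hermite_nabla_dot}. By the bilinearity of both sides of the claimed identity in $p$ and $q$, it suffices to verify the statement for pairs of Hermite basis elements $p = H_M$ and $q = H_L$ with $|M| = |L| = k$. The key structural observation, already encoded in Equation~\eqref{eq:hermite_nabla}, is that differentiation sends a degree-$k$ harmonic polynomial to a polynomial-valued vector whose coordinates are themselves harmonic of degree $k-1$; concretely $(\nabla H_M)_i = \sqrt{m_i}\, H_{M-E_i}$, with $|M - E_i| = k - 1$ whenever $m_i \geq 1$.

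For the inductive step from $\ell - 1$ to $\ell$, I would rewrite
\[
\dotp{\nabla^{\ell} p(\vec x)}{\nabla^{\ell} q(\vec x)} = \sum_{i=1}^{n} \dotp{\nabla^{\ell-1}(\partial_i p)(\vec x)}{\nabla^{\ell-1}(\partial_i q)(\vec x)},
\]
apply the inductive hypothesis to each summand (valid because $\partial_i p$ and $\partial_i q$ are harmonic of degree $k - 1$ by the structural remark above), and pull out the falling factorial $(k-1)(k-2)\cdots(k-\ell+1)$. This reduces the claim to $\sum_{i=1}^n \E[(\partial_i p)(\partial_i q)] = k\, \E[pq]$, equivalently $\E[\dotp{\nabla p}{\nabla q}] = k\, \E[pq]$, which follows from Equation~\eqref{eq:hermite_nabla_dot} after expanding $p$ and $q$ in the Hermite basis and invoking Hermite orthogonality.

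The ``in particular'' statement for $\ell = k$ follows from the main identity by one extra observation: a polynomial of degree $k$ has a constant $k$-th derivative tensor, so $\dotp{\nabla^{k} p}{\nabla^{k} q}$ does not depend on $\vec x$ and is equal to its own expectation $k!\, \E[pq]$. I do not foresee a substantive analytic obstacle; the argument is a clean unwinding of the Hermite derivative recurrence \eqref{eq:hermite_nabla} and orthogonality. The one point requiring care is the bookkeeping of the falling factorial and of the degree: one must confirm at each step that the gradients peeled off preserve harmonicity of the correctly reduced degree, which is precisely what \eqref{eq:hermite_nabla} ensures term-by-term in the Hermite expansion.
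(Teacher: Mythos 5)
Your proposal is correct and is essentially the paper's own argument: both rest on the recurrence \eqref{eq:hermite_nabla} applied $\ell$ times together with Hermite orthonormality, the paper carrying out the iteration directly on basis elements $H_M, H_L$ and concluding by bilinearity, while you package the same iteration as an explicit induction on $\ell$ (peeling off one $\partial_i$ and one degree at a time). Your added remark that $\nabla^k p$ is a constant tensor, justifying dropping the expectation in the ``in particular'' clause, is the same observation the paper makes immediately after the fact.
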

\begin{proof}
Write $p(\vec x) = \sum_{M: |M| = k} b_M H_M(\vec x)$ and
$q(\vec x) = \sum_{M: |M| = k} c_M H_M(\vec x)$.
Since the Hermite polynomials are orthonormal we obtain $\E_{\vec x \sim
        \normald{n}}[p(\vec x) q(\vec x)] = \sum_{M: |M| = k} c_M b_M$.
Now, using Equation~\eqref{eq:hermite_nabla} iteratively we obtain
$$
    \E_{\vec x \sim \normal}
    \left[\dotp{\nabla^{\ell} H_M(\vec x)}{\nabla^{\ell} H_L(\vec x)}\right]
    = k (k-1)\ldots (k-\ell + 1) \delta_{M, L}.
$$
Using this equality we obtain
\begin{align*}
\E_{\vec x \sim \normal}\left[\dotp{\nabla^{\ell} p(\vec x)}{\nabla^{\ell} q(\vec x)}\right]
 & =
\E_{\vec x \sim \normal}\left[\dotp{\sum_M b_M \nabla^{\ell} H_M(\vec x)}{ \sum_L c_L \nabla^{\ell} H_L(\vec x) }\right] \\
& =
\sum_{M,L} b_M c_L \E_{\vec x \sim \normal}\left[\dotp{\nabla^{\ell} H_M(\vec x)}{\nabla^{\ell} H_L(\vec x)}\right] \\
& = \sum_{M,L} b_M c_L  k (k-1)\ldots (k-\ell + 1) \delta_{M, L} \\
& = k (k-1)\ldots (k-\ell + 1) \E_{\vec x \sim \normal}[p(\vec x) q(\vec x)].\qquad \qedhere
\end{align*}
\end{proof}
Observe that for every harmonic polynomial $p(\x)$ of degree $k$ we have that
$\nabla^{k} p(\vec x)$ is a symmetric tensor of order $k$.  Since the degree of
the polynomial is $k$ and we differentiate $k$ times this tensor no longer
depends on $\vec x$.  Using Fact~\ref{fct:harmonic_nabla_dot}, we observe that
this operation (modulo a division by $\sqrt{k!}$) preserves the $L^2$-norm of
the harmonic polynomial $p$, that is 
$\E_{\vec x \sim \normald{n}}[p^2(\vec x)] = \snorm{2}{\nabla^{k} p(\vec x)}^2/k!$.

\section{Omitted Proofs from Section~\ref{sec:app-bool}}

\subsection{Low-Degree Polynomial Approximation to the Sign Function} \label{ssec:ganz-lb}

By selecting $f(t) = \sign(t)$ and $p=1$ in Fact~\ref{fct:ganzburg}, 
we get that any polynomial that achieves error
at most $\eps$ with respect to the $\lspace{1}$-norm 
must have degree at least $\Omega(1/\eps^2)$.

\begin{corollary}\label{cor:sign-lb-deg}
Let $f: \R \to \{\pm 1\}$ with $f(t)=\sign(t)$.
Any polynomial $p : \R \to \R$ satisfying $\snorm{1}{f-p} \leq \eps$ must have degree
$d=\Omega(1/\eps^2)$.
\end{corollary}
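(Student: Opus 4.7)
The plan is to invoke Ganzburg's limit theorem (Fact~\ref{fct:ganzburg}) with $f = \sign$ and the exponent parameter in the theorem set to $p = 1$. The crucial observation is that the sign function is scale-invariant, i.e., $\sign(cx) = \sign(x)$ for every $c > 0$. Therefore, for any fixed $\sigma > 0$, the identity
\[
\frac{b_n}{\sigma}\,\inf_{p \in \mathcal{P}_n} \snorm{1}{\sign\!\left(\tfrac{b_n}{\sigma}x\right) - p(x)}
= \frac{b_n}{\sigma}\,\inf_{p \in \mathcal{P}_n} \snorm{1}{\sign - p}
\]
holds. Applying Fact~\ref{fct:ganzburg} directly, this quantity converges to $A_\sigma(\sign)_1$ as $n \to \infty$, so that $\inf_{p \in \mathcal{P}_n} \snorm{1}{\sign - p} = (1 + o(1))\,\sigma A_\sigma(\sign)_1/b_n = \Theta(1/\sqrt{n})$, provided $A_\sigma(\sign)_1$ is a strictly positive finite constant.

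The remaining step is therefore to show $0 < A_\sigma(\sign)_1 < \infty$. Finiteness is classical: the Beurling--Selberg extremal construction exhibits entire functions of exponential type $\sigma$ whose $L^1(\R)$-distance to $\sign$ is finite (in fact, it equals $\pi/\sigma$). Positivity is intuitively clear from the jump discontinuity of $\sign$ at $0$; more formally, by a scaling argument one has $A_\sigma(\sign)_1 = A_1(\sign)_1/\sigma$, so it suffices to exclude $A_1(\sign)_1 = 0$. If there were entire functions $g_k$ of type $1$ with $\|g_k - \sign\|_{L^1(\R)} \to 0$, then along a subsequence $g_k \to \sign$ a.e.; combining this with a Bernstein-type bound on $|g_k'|$ in terms of $\|g_k\|_\infty$ forces $g_k$ to be equicontinuous on a fixed neighborhood of $0$, contradicting the jump of $\sign$. (Alternatively, this positivity is established in the same Ganzburg references cited in Fact~\ref{fct:ganzburg}.)

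Combining the two steps, any polynomial $p$ of degree $n$ with $\snorm{1}{\sign - p} \leq \eps$ must satisfy $c/\sqrt{n} \leq \eps$ for some absolute constant $c > 0$ (up to lower-order terms that are absorbed for $n$ large enough), which yields $n = \Omega(1/\eps^2)$ and completes the corollary. Since the statement only asserts an asymptotic lower bound and does not require a sharp constant, we do not need to compute $A_\sigma(\sign)_1$ explicitly; only its positivity is needed.

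The main obstacle is the positivity of $A_\sigma(\sign)_1$, since everything else is a routine change of variables in Ganzburg's theorem. This positivity, however, is folklore in the theory of entire functions of exponential type and, in our use, follows directly by citing~\cite{ganzburg2002limit}, where Ganzburg's limit formula is stated together with the computation of the extremal quantities $A_\sigma(f)_p$ for classes of functions with jump discontinuities such as $\sign$.
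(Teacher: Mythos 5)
Your proposal is correct and follows essentially the same route as the paper: the paper's proof is exactly the one-line application of Fact~\ref{fct:ganzburg} with $f=\sign$ and $p=1$, implicitly using the scale-invariance $\sign(b_n x/\sigma)=\sign(x)$ so that the best degree-$n$ error under the Gaussian measure is $\Theta(1/\sqrt{n})$, giving $d=\Omega(1/\eps^2)$. You merely make explicit the finiteness and positivity of $A_\sigma(\sign)_1$ (which the paper takes for granted from the cited Ganzburg references), and that is fine.
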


\subsection{Proof of Lemma~\ref{lem:gns-lb-inters}} \label{ssec:gns-lb-inters}
We restate the lemma below.
\begin{lemma}
There exists an intersection of $k$ halfspaces on $\R^k$, $f: \R^k \to \{\pm 1\}$ such that
$\gns_{\eps} (f) = \Omega(\sqrt{\eps \log k})$.
\end{lemma}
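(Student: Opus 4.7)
The plan is to exhibit an explicit intersection of $k$ halfspaces whose ``Gaussian perimeter'' has order $\sqrt{\log k}$ and then convert this into the required lower bound by a direct first-moment calculation. Concretely, let $t = t_k > 0$ be chosen so that $\Pr_{g \sim \normalone}[g > t] = 1/k$; Mills' ratio asymptotics give $t = \Theta(\sqrt{\log k})$ and $\phi(t) = \Theta(\sqrt{\log k}/k)$. Define $f: \R^k \to \{\pm 1\}$ by
\[
f(\bx) = 1 \;\iff\; \bx_i \leq t \text{ for all } i \in [k],
\]
which is manifestly the intersection of the $k$ halfspaces $\{\bx_i \leq t\}$. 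Writing $C = \{\bx : \bx_i \leq t,\ \forall i\}$, we have $\Pr_{\bx \sim \normald{k}}[\bx \in C] = (1 - 1/k)^k \to 1/e$.

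Next I would lower bound $\gns_\eps(f)$ using the product structure of $\normald{k}^\eps$. Let $(\bx, \by) \sim \normald{k}^\eps$, so that the coordinate pairs $(\bx_i, \by_i)$ are i.i.d.\ $(1-\eps)$-correlated bivariate Gaussians. Then
\[
\gns_\eps(f) \;\geq\; \Pr[\bx \in C,\ \by \notin C] \;=\; \Pr[\bx \in C] - \Pr[\bx \in C,\ \by \in C],
\]
and by coordinate independence the second probability factors as $(1 - 1/k - \alpha)^k$, where $\alpha := \Pr[\bx_1 \leq t < \by_1]$. A short computation using the conditional law $\by_1 \mid \bx_1 = x \sim \mathcal{N}\bigl((1-\eps)x,\,1-(1-\eps)^2\bigr)$ and the substitution $u = (t-x)/\sqrt{2\eps}$ gives
\[
\alpha \;=\; \phi(t)\sqrt{\eps/\pi}\,(1 + o(1)) \;=\; \Theta\bigl(\sqrt{\eps \log k}/k\bigr),
\]
provided $\eps \log k$ is smaller than an absolute constant, so that the approximation $\phi(t - u\sqrt{2\eps}) \approx \phi(t)$ holds uniformly over the dominant part of the $\bar{\Phi}(u)$ mass.

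Finally, writing $(1-1/k)^k - (1-1/k-\alpha)^k = (1-1/k)^k\bigl(1 - (1 - \alpha/(1-1/k))^k\bigr)$, the inner factor equals $1 - \exp(-k\alpha + O(k\alpha^2))$, which is $\Omega(\min\{1, k\alpha\})$. Since $k\alpha = \Theta(\sqrt{\eps \log k})$ we conclude
\[
\gns_\eps(f) \;\geq\; \tfrac{1}{e}\,\Omega\bigl(\min\{1,\sqrt{\eps \log k}\}\bigr) \;=\; \Omega\bigl(\sqrt{\eps \log k}\bigr),
\]
where the truncation $\min\{1,\cdot\}$ is harmless because $\gns_\eps(f) \leq 1$ always, and the regime of interest for the application of Theorem~\ref{thm:bound-l1-by-gns} has $\eps = (\log d/d)^2$, for which $\sqrt{\eps \log k} < 1$ in the relevant parameter range. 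The main technical obstacle will be rigorously establishing $\alpha = \Omega\bigl(\phi(t)\sqrt{\eps}\bigr)$ uniformly over $t = \Theta(\sqrt{\log k})$; this reduces to verifying that the correction ratio $\phi(t - u\sqrt{2\eps})/\phi(t) = \exp\bigl(tu\sqrt{2\eps} - u^2\eps\bigr)$ stays bounded for the $O(1)$-range of $u$ carrying most of the mass of $\bar\Phi(u)$, which is true once $\eps = O(1/\log k)$. Everything else is elementary 1D Gaussian computation.
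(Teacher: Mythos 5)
Your construction and argument are essentially identical to the paper's: the same intersection of $k$ axis-aligned halfspaces with per-coordinate tail mass $1/k$, the same reduction via coordinate independence to the difference $(1-1/k)^k - (1-1/k-\alpha)^k$, and the same elementary Taylor/exponential estimate to conclude $\Omega(\sqrt{\eps\log k})$. The only difference is that you compute the single-coordinate crossing probability $\alpha = \Theta(\sqrt{\eps \log k}/k)$ directly from the conditional Gaussian law (which is correct in the regime $\eps = O(1/\log k)$, the only regime where the claimed bound is meaningful), whereas the paper obtains the same estimate by citing a known lower bound on the Gaussian noise sensitivity of a single halfspace.
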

\begin{proof}
We will exhibit a family of $k$ halfspaces whose intersection has the claimed Gaussian noise sensitivity.
In particular, these halfspaces will be orthogonal.
For $i\in[k]$, let $f_i : \R^n \to \{ \pm 1 \}$ with $f_i(\bx) = \sign(-\langle \be_i, \bx \rangle + \theta)$,
where $\be_i$ is the vector having $1$ in the $i$-th coordinate and $0$ elsewhere,
and $\theta>0$ is the bias. That is,  $f_i$ is $1$ if and only if the $i$-th coordinate is less than $\theta$.

Fix an index $i \in  [k]$. The Gaussian noise sensitivity of a single halfspace is
$\gns_\eps(f_i) = \Omega(e^{-\frac{\theta^2}{2(1-\eps/2)}}\sqrt{\eps})$ (see, e.g.,  \cite[Lemma 3.4]{diakonikolas2015noise} for a proof). Let $\bx,\by$ be two $(1{-\eps})$-correlated $n$-dimensional standard Gaussian random variables.
Then, the inner products $\langle \be_i, \bx \rangle$ and $\langle \be_i, \by \rangle$ are $(1{-\eps})$-correlated univariate Gaussians.
Since the Gaussian noise sensitivity of $f_i$ is proportional to the probability that $\langle \be_i, \bx \rangle < \theta <\langle \be_i, \by \rangle$, we have that
\begin{equation*}
\pr_{(\bx,\by) \sim \normald{n}^{1-\eps}}[\langle \be_i, \bx \rangle < \theta <\langle \be_i, \by \rangle] = \Omega(e^{-\frac{\theta^2}{2(1-\eps/2)}}\sqrt{\eps})\;.
\end{equation*}
Let $\theta$ be the threshold for which $\pr_{\bx \sim \normald{n}}[\langle \be_i, \bx \rangle > \theta]=1/k$.
The standard bound for the Gaussian tail is $\pr_{\bx \sim \normald{n}}[\langle \be_i, \bx \rangle > \theta] = \Theta(e^{-\theta^2/2}/\theta)$. Therefore, for the $\theta$ that we selected it holds
$\pr_{(\bx,\by) \sim \normald{n}^{1-\eps}}[\langle \be_i, \bx \rangle < \theta <\langle \be_i, \by \rangle] =
    \Omega(\theta\sqrt{\eps}/k) = \Omega(\sqrt{\eps \log k}/k )$.

Let $f : \R^n \to \{ \pm 1 \}$ be 1 if and only if $f_i$ is 1 for all $i \in [k]$. Then, we have that
\begin{align*}
\gns_{\eps}(f)
 & = 2 \pr_{(\bx,\by) \sim \normald{n}^{1-\eps}}[f(\bx)=1, f(\by)=-1] =
\pr_{\bx \sim \normald{n}}[f(\bx)=1] - \pr_{(\bx,\by) \sim \normald{n}^{1-\eps}}[f(\bx)=f(\by)=1]                             \\
 & = \left( 1- \frac{1}{k} \right)^k - \left(1- \frac{1}{k} - \Omega\left(\frac{\sqrt{\eps \log k}}{k} \right) \right)^k  \;,
\end{align*}
where the $k$-th powers are due to the fact that $\langle \be_i, \bx \rangle$
and $\langle \be_j, \bx \rangle$ are independent for $i\neq j$.
We can use the Taylor expansion to show that the above difference is $\Omega(\sqrt{\eps \log k})$.
Let the function $h(t) = (1-1/k+t)^k$. By Taylor's theorem, $h(0)-h(t) = -h'(0)t-h''(\xi)t^2/2$,
for some $\xi$ between $t$ and $0$. By calculating the derivatives,
setting $t = - \Omega(\sqrt{\eps \log k}/k)$ and noting that the second term of the approximation is less than the first one,
we get that $h(0) - h(t) =  \Omega\left(\frac{\sqrt{\eps \log k}}{k}\right)k\left(1-\frac{1}{k}\right)^{k-1}$.
\end{proof}
 \section{Duality in Infinite-Dimensional LP}\label{app:duality}
We start with some basic definitions.
\paragraph{$L^p$ space}
Let $({ X},\mathcal{A},\mu)$ be a measure space and $1\leq p< \infty$.
We will typically take $X = \R^n$, $n \in \Z_+$,
and $\mu$ be the Gaussian measure, unless otherwise specified.
For a function $f: { X} \to \R$, the $L^p$-norm of $f$
under $\normald{n}$
is defined as $\|f\|_p \eqdef \left( \int_{{ X}} |f|^p \d\, \mu \right)^{1/p}$.
For the special case where $p=\infty$, the $L^{\infty}$-norm of $f$
is defined as the essential supremum of $f$ on ${ X}$, i.e.,
$\|f\|_\infty  \eqdef \inf\{ a \in \mathbb{R} : \mu \{\bx \in { X} : f(\bx)>a  \}=0 \}$.
The vector space $L^p({ X},\mu)$ consists of all functions $f: { X} \to \R$
with $\|f\|_p < \infty$. We will typically use the shortened notation $L^p(\R^n)$ for $L^p(\R^n,\normald{n})$.

\paragraph{Dual Norms} Consider a vector space $V$ with inner product
$\langle \cdot, \cdot \rangle$ and a norm $ \snorm{}{\cdot}$ on $V$.
The {\em dual norm} $\snorm{*}{f}$, $f \in V$, is defined as
$\snorm{*}{f} = \sup \{\langle f,h \rangle : \snorm{}{h} \leq 1  \}$.
H\"older's inequality states that for any $f, h \in V$ it holds
$\langle f,h \rangle \leq \snorm{}{f} \snorm{*}{h} $.

\paragraph{Basics on Duality of Infinite-Dimensional LPs}
For succinctness, we will use the following notation.
We use $(\tilde{h}, t)$ for the inequality $\E_{\bx \sim \normald{m}}[g(\x) \tilde{h}(\x)] + t  \leq 0$,
where $\tilde{h}\in \cal X$ and $t\in \R$.
Here $\cal X$ is an appropriate space of functions that in our context will be $\lspace{p}(\R^m)$.
Let $\cal{S}$ be the set of all such tuples that describe the target LP.
For the set $\cal{S}$, the closed convex cone over ${\cal X}\times \R$ is the smallest closed set
$\cal{S}_+$ satisfying the following: if $A\in \cal{S}_+$ and $B\in \cal{S}_+$ then $A+B \in \cal{S}_+$;
and if $A\in \cal{S}_+$ then $\lambda A \in \cal{S}_+$, for all $\lambda \geq 0$.

In our arguments, we need to prove that there exists a function
$g:\R^m\to \R$, such that for any function
$h\in \lspace{p}(\R^m)$ and at most $(d-1)$-degree polynomial
$P:\R^m\to\R$, it holds
\begin{empheq}[left=(\ast)\empheqlbrace]{align}\label{sys:primal-boolean}
&-\E_{\bx \sim \normald{m}}[g(\x)f(\x)]+c \leq 0 & 0<&c<\|f\| \notag\\
&\E_{\bx \sim \normald{m}}[P(\x)g(\x)]  =0 & \forall P & \in {\cal P}_{d-1}^m \notag\\
&\E_{\bx \sim \normald{m}}[g(\x)h(\x)] -\|h\|_p\leq 0 &\forall h&\in\lspace{p}(\R^m) \notag
\end{empheq}
This is in fact an infinite dimensional linear system with respect to the unknown function $g\in (\lspace{1}(\R^m))^\ast=L^{\infty}(\R^m)$, for $p=1$ and $g\in \lspace{p/(p-1)}(\R^m)$ for $1\leq p<\infty$. We are going to denote $\cal X$ the metric space $\lspace{p}(\R^m)$.

For succinctness, we will use the following notation.
We use $(\tilde{h}, t)$ for the inequality $\E_{\bx \sim \normald{m}}[g(\x) \tilde{h}(\x)] + t  \leq 0$,
where $\tilde{h}\in \cal X$ and $t\in \R$. Moreover, let $\cal S$ be the set that contains all such tuples that describe the target system. For the set $\cal S$, the closed convex cone over ${\cal X}\times R$ is the smallest closed set ${\cal S}_+$ satisfying, if $A\in {\cal S}_+$ and $B\in {\cal S}_+$ then $A+B \in {\cal S}_+$ and, if $A\in {\cal S}_+$ then $\lambda A \in {\cal S}_+$ for all $\lambda\geq 0$. Note that the ${\cal S}_+$ contains the same feasible solutions as $\cal S$. The set ${\cal S}=\{(h,-\|h\|_p): h\in \lspace{p}\}\cup \{(P,0): P\in {\cal P}_{d-1}^m\}\cup\{(-f,c)\}$.

In the finite-dimensional case, we can always prove the feasibility of an LP by applying the standard
Farkas' lemma (aka theorem of the alternative). However, when the system is infinite-dimensional,
Farkas' lemma does not hold in general. We are going to use the following result from \cite{Fan68}.
\begin{lemma}[Theorem 1 of \cite{Fan68}]
If $\cal X$ is a locally convex, real separated vector space then, a linear system described by $\cal S$ for which ${\cal S}_+$ is closed is feasible (i.e., there exists a $g\in {\cal X}^*$) if and only if $(0,1)\not\in {\cal S}_+$ and ${\cal S}_+$ is closed.
\end{lemma}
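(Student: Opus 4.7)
The plan is to prove this via the geometric Hahn--Banach separation theorem applied to the closed convex cone $\mathcal{S}_+$ inside the locally convex space $\mathcal{X} \times \R$. The forward direction is the easy one: if a feasible $g \in \mathcal{X}^{\ast}$ exists, then by linearity and the definition of $\mathcal{S}_+$, every element $(\tilde h, t) \in \mathcal{S}_+$ (being a nonnegative combination/closure of generators of the form ``$\langle g, \tilde h\rangle + t \leq 0$'') must satisfy $\langle g, \tilde h\rangle + t \leq 0$. If the contradictory pair $(0,1)$ lay in $\mathcal{S}_+$, this would read $1 \leq 0$, which is impossible; so $(0,1) \notin \mathcal{S}_+$. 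Closedness of $\mathcal{S}_+$ is assumed as a hypothesis, so nothing more is needed here.

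For the reverse direction, the core input is that $\mathcal{X} \times \R$ is locally convex and separated (Hausdorff), $\mathcal{S}_+$ is a closed convex (in fact, conical) subset, and $\{(0,1)\}$ is a compact convex set disjoint from $\mathcal{S}_+$. The geometric Hahn--Banach theorem in locally convex spaces then yields a continuous linear functional $\phi : \mathcal{X}\times \R \to \R$ and scalars $\alpha < \beta$ such that $\phi(x,s) \leq \alpha$ for all $(x,s) \in \mathcal{S}_+$ and $\phi(0,1) \geq \beta$. Since $\mathcal{S}_+$ is a cone containing the origin, if $\phi$ took any positive value on $\mathcal{S}_+$ we could scale arbitrarily large; hence $\phi \leq 0$ on $\mathcal{S}_+$, and we may take $\alpha = 0 < \phi(0,1)$.

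The next step is to decompose $\phi$ in coordinates. Every continuous linear functional on the product $\mathcal{X}\times \R$ has the form $\phi(x,s) = g(x) + \lambda s$ for some $g \in \mathcal{X}^{\ast}$ and $\lambda \in \R$; this is standard and follows by restricting $\phi$ to the two factors. The separation gives $\lambda = \phi(0,1) > 0$, so after rescaling by $1/\lambda$ we may assume $\lambda = 1$. The inequality $\phi(\tilde h,t) \leq 0$ on all generators $(\tilde h,t) \in \mathcal{S}$ becomes exactly
\begin{equation*}
\langle g, \tilde h\rangle + t \leq 0 \qquad \text{for every } (\tilde h, t) \in \mathcal{S},
\end{equation*}
which is precisely the statement that $g$ is a feasible solution of the original infinite LP (the equality constraints from $\mathcal{P}^m_{d-1}$ are handled by noting that $P$ and $-P$ both lie in $\mathcal{S}$, forcing the corresponding inner product to be $0$).

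The main obstacle I anticipate is the closedness hypothesis on $\mathcal{S}_+$: without it, Hahn--Banach separation can still be applied, but the separating functional need not give a genuine element of $\mathcal{X}^{\ast}$ satisfying the constraints in the limiting sense one wants — one may only get a functional separating the interior or an asymptotic version. In our concrete applications, $\mathcal{S}$ is generated by the $L^p$-unit-ball duality constraints and finitely many polynomial constraints, and the closedness of $\mathcal{S}_+$ has to be verified separately using the $w^\ast$-compactness of the unit ball in $\mathcal{X}^{\ast}$ (Banach--Alaoglu) together with the fact that the linear span of $\mathcal{P}^m_{d-1}$ is finite-dimensional, so convergent conical combinations behave well. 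That verification is where the real technical work lives, and it is what makes this lemma usable in the proofs of Propositions~\ref{prop:duality1} and~\ref{prop:duality-real-sq}.
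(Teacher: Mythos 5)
Your argument is correct and follows exactly the route the paper indicates for this lemma (which it imports as Theorem~1 of Fan): the forward direction is immediate from continuity of $g\in\mathcal{X}^{\ast}$, and the nontrivial direction is the geometric Hahn--Banach strong separation of the closed convex cone $\mathcal{S}_+$ from the compact point $(0,1)$ in the locally convex Hausdorff space $\mathcal{X}\times\R$, followed by the cone normalization $\phi\leq 0$ on $\mathcal{S}_+$, the decomposition $\phi(x,s)=g(x)+\lambda s$ with $\lambda=\phi(0,1)>0$, and rescaling. Your closing remark is also aligned with the paper: the closedness of $\mathcal{S}_+$ is not proved here but is verified separately (in the paper, in the proof of Corollary~\ref{lem:duality}) before the lemma is applied.
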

One direction is trivial, but the other one needs an application of Hahn-Banach theorem which is where the assumption on $\cal X$ to be a separated space is used.
\begin{corollary} \label{lem:duality}  If ${\cal X}= \lspace{p}$ for $1\leq p<\infty$ then, the $LP$ described by $\cal S$ is feasible if only if  $(0,1)\not\in {\cal S}_+$.

\end{corollary}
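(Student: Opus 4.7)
The plan is to derive the corollary as a direct specialization of Fan's theorem (the quoted Theorem~1 of~\cite{Fan68}) to the case $\mathcal{X} = L^p(\R^m)$. Fan's theorem states feasibility is equivalent to $(0,1) \notin \mathcal{S}_+$ provided that (a) $\mathcal{X}$ is a locally convex, real separated (Hausdorff) topological vector space, and (b) the cone $\mathcal{S}_+$ is closed. Thus the whole task reduces to verifying these two hypotheses in our setting.

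For hypothesis (a), I would recall the standard fact that for $1 \leq p < \infty$, $L^p(\R^m, \normald{m})$ is a Banach space under the $\|\cdot\|_p$-norm. Every normed space is locally convex (the open norm balls are convex and form a neighborhood base at the origin) and Hausdorff (metric spaces separate points). The product $\mathcal{X} \times \R$ with the product topology inherits both properties. This dispatches condition (a). The endpoint case $p = \infty$ is explicitly excluded in the corollary statement, matching the fact that our duality arguments in the main text either take $\mathcal{X} = L^1$ (so $g \in L^\infty$) or some $L^p$ with $p < \infty$, never $L^\infty$ itself; this is important because we crucially use the description of $(L^p)^*$, which is well-behaved only when $p<\infty$.

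For hypothesis (b), I would observe that $\mathcal{S}_+$ is closed simply because our definition made it so: we defined $\mathcal{S}_+$ as the \emph{smallest closed} set containing $\mathcal{S}$ that is stable under addition and non-negative scalar multiplication. Equivalently, $\mathcal{S}_+$ is the topological closure in $\mathcal{X} \times \R$ of the (not necessarily closed) convex conical hull $\mathrm{cone}(\mathcal{S})$ generated by the elements $(h, -\|h\|_p)$ for $h \in L^p$, $(P, 0)$ for $P \in \mathcal{P}_{d-1}^m$, and $(-f, c)$. I would briefly note that passing to this closure does not alter the meaning of the statement $(0,1) \notin \mathcal{S}_+$ in our applications, because adding limit points only makes the ``no conical combination'' condition stronger, which is exactly what the lower bound direction of Fan's theorem uses via Hahn--Banach separation.

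With (a) and (b) in hand, Fan's theorem applies verbatim and delivers the ``iff'' conclusion. The only subtle point is the interpretation of infeasibility of the dual in terms of limits: a priori, an approximate conical combination of the constraints could yield $(0,1)$ in the limit without any finite one doing so. However, this is precisely why the closure is baked into the definition of $\mathcal{S}_+$ and is what makes the geometric Hahn--Banach argument inside Fan's proof go through; no extra work is needed beyond invoking the theorem. So the proof is essentially a one-line verification once the two topological hypotheses have been explicitly checked.
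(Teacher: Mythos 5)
Your verification of hypothesis (a) matches the paper, but your treatment of hypothesis (b) misses the actual content of the corollary's proof. Declaring that $\mathcal{S}_+$ is closed ``by definition'' makes the closedness hypothesis vacuous, and in doing so it changes what the theorem delivers: if $\mathcal{S}_+$ is merely the topological closure of the conical hull of $\mathcal{S}$, then the condition $(0,1)\in\mathcal{S}_+$ (which is what primal infeasibility gives you) only produces \emph{limits} of conical combinations, i.e.\ sequences $(\lambda_i,h_i,P_i)$ with $h_i+P_i-\lambda_i f\to 0$ in $L^p$ and the scalar parts converging, not an exact dual solution. The downstream arguments (Lemma~\ref{lem:informal-duality} and the dual LPs in Propositions~\ref{prop:duality1} and~\ref{prop:duality-real-sq}) extract an exact triple $(\lambda,h,P)$ with $h=\lambda f-P$ pointwise and then contradict the definition of the approximation degree via $\|f-P\|_1<2\eps$; that extraction is only licensed if the set of \emph{finite} conical combinations, namely
$\{(P+h-yf,\;\|h\|_p+yc+t): P\in\mathcal{P}^m_{d-1},\, h\in L^p(\R^m),\, y,t\ge 0\}$,
is itself closed and hence coincides with $\mathcal{S}_+$. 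Your remark that ``adding limit points only makes the condition stronger'' addresses the easy direction (no exact combination exists if none exists even in the limit), which is not the direction the applications use.

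What the paper actually proves here is precisely that closedness: given a convergent sequence of points of the cone, one first bounds $\|h_i\|_p$ and $y_i$ (this is where $c>0$ enters), deduces a bound on $\|P_i\|_p$, and then uses compactness of norm balls in the finite-dimensional space $\mathcal{P}^m_{d-1}$ and of the interval $[0,(\tilde t+1)/c]$ to extract convergent subsequences $P_i\to P$, $y_i\to y$, whence $h_i\to h=\tilde h-P+yf$ by completeness of $L^p$, and finally $t\ge 0$ in the limit, so the limit point lies in the cone. None of this appears in your proposal, and without it the corollary either does not follow from Fan's theorem as quoted (whose hypothesis concerns the cone generated by the data) or follows only in a weakened, closure-based form that is insufficient for the way the corollary is invoked later. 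You should add this compactness/subsequence argument to complete the proof.
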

\begin{proof}
For ${\cal X}= \lspace{p}$ and $1\leq p<\infty$, $\cal X$ is a locally convex, real separated vector space.
Finally, we need to prove that the set ${\cal S}_+$ is closed.

We begin by finding an explicit representation of ${\cal S}_+$. It is not hard to see that
$$
    {\cal S}_+ = \{ (P+h-yf,\|h\|_p+yc+t) : P \in {\cal P}_{d-1}^m, h \in \lspace{p}(\R^m), y,t\in\R, y,t\geq 0\}.
$$
We will show that this is closed, by showing that it is closed under limits. In particular, suppose that there is some sequence $(P_i,h_i,y_i,t_i)$ so that $(P_i+h_i-y_if,\|h_i\|_p+y_ic+t_i)$ converges to some limit $(\tilde h, \tilde t)$. We claim then that $(h,t)$ is in ${\cal S}_+$.

To show this, we first note that for $i$ sufficiently large $\|h_i\|_p \leq \|h_i\|_p+y_ic+t_i \leq \tilde t + 1$, and $y_ic \leq \|h_i\|_p+y_ic+t_i \leq \tilde t + 1$. Thus, for $i$ sufficiently large $\|h_i\|_p \leq \tilde t+1$ and $y_i \leq (\tilde t+1)/c$. Furthermore, for $i$ sufficiently large $\|P_i+h_i-y_if\|_p \leq \|\tilde h\|_p+1$. However, $\|P_i\|_p \leq \|P_i+h_i-y_if\|_p + \|h_i\|_p + y_i\|f\|_p$, which is bounded for $i$ sufficiently large. Therefore, since an $L^p$ ball in ${\cal P}_{d-1}^m$ is compact, by restricting to a subsequence, we can assume that the $P_i$ have some limit, say, $P$. Furthermore, since $[0,(\tilde t+1)/c]$ is compact, restricting to a further subsequence, we can assume that the $y_i$ have some limit $y$. Since $(P_i+h_i-y_if)$ have limit $\tilde h$, the $h_i$ must approach a limit $h=\tilde h -P+yf$. Finally, we note that $\|h_i\|_p+y_ic+t_i$ has limit $\tilde t$, and thus, the $t_i$ must approach a limit $t=\tilde t- yc-\|h\|_p$. In particular, we must have $t\geq 0$.

However, given the above we have that
$$
    (\tilde h, \tilde t) = (P+h-yf,\|h\|_p+yc+t) \in {\cal S}_+.
$$
This completes our proof

\end{proof}

\end{document}